\theoremstyle{plain}
\newtheorem{theorem}{Theorem}[section]
\newtheorem{proposition}[theorem]{Proposition}
\newtheorem{lemma}[theorem]{Lemma}
\theoremstyle{definition}
\newtheorem{definition}[theorem]{Definition}
\newtheorem{remark}[theorem]{Remark}
\icmltitlerunning{Switched Flow Matching}
\begin{document}
  
  \twocolumn[
  \icmltitle{Switched Flow Matching: Eliminating Singularities via Switching ODEs}
  
    \begin{icmlauthorlist}
        \icmlauthor{Qunxi Zhu}{IICS}
        \icmlauthor{Wei Lin}{IICS,MS,MOE,AIlab}
    \end{icmlauthorlist}

    \icmlaffiliation{IICS}{Research Institute of Intelligent Complex Systems, Fudan University, China.}
    \icmlaffiliation{MS}{School of Mathematical Sciences, LMNS, and SCMS, Fudan University, China.}
    \icmlaffiliation{MOE}{State Key Laboratory of Medical Neurobiology and MOE Frontiers Center for Brain Science, Institutes of Brain Science, Fudan University, China.}
    \icmlaffiliation{AIlab}{Shanghai Artificial Intelligence Laboratory, China}
    
    \icmlcorrespondingauthor{Qunxi Zhu}{qxzhu16@fudan.edu.cn}
    
    \icmlkeywords{Machine Learning, ICML}
    
    \vskip 0.3in
    ]
    
    % \printAffiliationsAndNotice{\icmlEqualContribution}
    \printAffiliationsAndNotice{}
    \begin{abstract}
      Continuous-time generative models, such as Flow Matching (FM), construct probability paths to transport between one distribution and another through the simulation-free learning of the neural ordinary differential equations (ODEs). During inference, however, the learned model often requires multiple neural network evaluations to accurately integrate the flow, resulting in a slow sampling speed. We attribute the reason to the inherent (joint) heterogeneity of source and/or target distributions, namely the singularity problem, which poses challenges for training the neural ODEs effectively. To address this issue, we propose a more general framework, termed Switched FM (SFM),  that eliminates singularities via switching ODEs, as opposed to using a uniform ODE in FM. Importantly, we theoretically show that FM cannot transport between two simple distributions due to the existence and uniqueness of initial value problems of ODEs, while these limitations can be well tackled by SFM. From an orthogonal perspective, our framework can seamlessly integrate with the existing advanced techniques, such as minibatch optimal transport, to further enhance the straightness of the flow, yielding a more efficient sampling process with reduced costs. We demonstrate the effectiveness of the newly proposed SFM through several numerical examples.
    \end{abstract}
    
    \section{Introduction}
    Generative modeling is a fundamental task in the machine learning and data science communities, whose primary objective is to transform samples from one (empirical) probability distribution to another through a learnable transformation. Over the years, several methods have been extensively proposed for generative modeling, including generative adversarial networks (GAN) \cite{goodfellow2014generative},  variational autoencoders (VAE) \cite{kingma2013auto, rezende2014stochastic}, energy-based models \cite{teh2003energy, lecun2006tutorial, du2019implicit,song2021train},  normalizing flow models \cite{dinh2014nice, dinh2016density, rezende2015variational}, and autoregressive models \cite{germain2015made, van2016pixel, van2016conditional,oord2016wavenet}.
    
    Despite their successes across various domains, these models have some limitations. For instance, training GANs can be challenging because of several major issues, including mode collapse \cite{goodfellow2014generative, metz2016unrolled}, vanishing gradient \cite{arjovsky2017wasserstein, weng2019gan}, and unstable convergence \cite{arjovsky2017towards, farnia2020gans}. VAE and energy-based models employ surrogate losses to aid in successful training via utilizing the evidence lower bound (with the parameterization trick) \cite{kingma2013auto} and contrastive divergence \cite{hinton2002training}, respectively. Normalizing flow \cite{dinh2014nice, dinh2016density, rezende2015variational} and autoregressive models \cite{germain2015made, van2016pixel, van2016conditional,oord2016wavenet} often impose architectural constraints to build a normalized probability model.
    
    Diffusion models \cite{sohl2015deep, song2019generative, ho2020denoising, song2020denoising}, the current state-of-the-art generative models, have delivered outstanding results in a myriad of tasks \cite{chen2020wavegrad, nichol2021glide, rombach2022high, saharia2022photorealistic}, primarily due to the scalable and stable training methodologies \cite{dhariwal2021diffusion}. In a significant leap forward, \citet{song2020score} introduced a general framework that encapsulates the essence of previous diffusion models through the stochastic differential equations (SDEs), which, equivalently, correspond to the neural ordinary differential equations (ODEs) \cite{chen2018neural} in the sense of probability flow. Recently, \citet{lipman2022flow}, developed the Flow matching (FM), a scalable, simulation-free approach to train the probability flow, also known as continuous normalizing flow \cite{chen2018neural, grathwohl2018ffjord}, by directly regressing vector fields along specific conditional probability paths. We note that two concurrent studies, the stochastic interpolant by \citet{albergo2022building} and the rectified flow by \citet{liu2022flow}, propose similar methodologies for matching distributions using flows, albeit from distinct viewpoints.
    
    However, during inference, generating a high-quality sample via simulating the learned ODEs often requires multiple function evaluations, leading to a long inference time. This inefficiency arises from the utilization of independent couplings that overlook the intrinsic structures connecting source and target distributions \cite{lipman2022flow, liu2022flow}. To mitigate this issue, there has been a shift towards designing non-trivial couplings inspired by optimal transport theory \cite{pooladian2023multisample, tong2023conditional, tong2023improving} or learning a coupling based on an auxiliary VAE-style objective function to minimize the trajectory curvature \cite{lee2023minimizing}. Notably, these existing continuous-time generative models, have predominantly adopted a uniform/single ODE to model the transportation process between two (empirical) distributions.
    
    \textbf{Contributions}. We introduce Switched FM (SFM), a generalized framework that eliminates singularities via switching ODEs as opposed to employing a single ODE in FM. The core principle of SFM is that according to the inherent (joint) heterogeneity of the underlying distributions, i.e., (jointly) dependent on the source or/and target data samples,  a specific ODE should be selected from the pool of the candidate ODEs to facilitate the transportation process while preserving the marginal vector fields or probability paths.
    
    To summarize, the major contributions of this study are multi-folded, including:
    \begin{enumerate}%[leftmargin=*, topsep=0mm, itemsep=0mm, parsep=0mm]
      \item \textbf{Development of SFM}: We establish SFM, a versatile continuous-time generative model that eliminates singularities encountered in the FM via switching the candidate ODEs, and allows the intersection of probability paths from different ODEs.
      \item \textbf{Theoretical insights}: Through rigorous analysis, we demonstrate that FM struggles with transporting between simple distributions due to the existence and uniqueness of initial value problems of ODEs while such limitation can be effectively addressed by SFM, offering a more efficient solution.
      \item \textbf{Integration with advanced techniques}: SFM can seamlessly integrate with the existing advanced techniques, for example, minibatch optimal transport, to further enhance the straightness of the flow, facilitating a more efficient sampling process.
      \item \textbf{Empirical validation}: We validate the effectiveness of the newly proposed SFM through extensive experiments on both synthetic and real-world datasets, achieving competitive or even better performance compared to existing methods, such as FM.
    \end{enumerate}
    
    \textbf{Organization}. The rest of this article is organized as follows. Section~\ref{sec2} introduces some preliminaries on (neural) ODEs, continuous normalizing flows, flow matching, and optimal transport. In Sec.~\ref{sec:limitations}, we theoretically show the limitations of FM. Then,  we present the SFM in Sec.~\ref{sec:SFM}. Related works are discussed in Sec.~\ref{sec:related_works}. In Sec.~\ref{sec:experiments},  we provide numerical verifications on synthetic and real-world datasets. Finally, we conclude the article in Sec.~\ref{sec:conclusion}, and all the details of this work are found in the appendices.
    
    \textbf{Notations.} Before ending this section, we provide the following notations that will be used throughout the article: $\mathbb{R}$ (resp. $\mathbb{R}^+$) -- the set of (resp. positive) real numbers; $\mathbb{R}^d$ -- the Euclidean space; $\|\cdot\|$ -- the $d$-dimensional ($d$-d) Euclidean norm; $\nabla$ and $\nabla\cdot$ -- the gradient and divergence operator, respectively; $\bm{1}_d$ -- the $d$-d vector with all elements being $1$; $\bm{I}_d$ -- the $d$-d identity matrix;  $\text{Tr}(A)$ -- the trace of the square matrix $A\in \mathbb{R}^{d\times d}$;
    $\delta_{\bm{x}}$ -- the Dirac mass at the point $\bm{x}\in \mathbb{R}^d$; $\mathcal{P}(\mathbb{R}^d)$ -- the space of Borel probability measures on $\mathbb{R}^d$; For given $q_0 \in \mathcal{P}(\mathbb{R}^d)$ and $q_1 \in \mathcal{P}(\mathbb{R}^d)$, then $\Pi(q_0, q_1)$ is defined as the set of all joint probability measures on $\mathbb{R}^d\times \mathbb{R}^d$ whose marginals are $q_0$ and $q_1$, and $q \in \Pi(q_0, q_1)$ is called a coupling between $q_0$ and $q_1$; $\mathcal{U}(a, b)$ -- the uniform distribution over the interval $[a, b]$; $\mathcal{N}(\bm{\mu}, \bm{\Sigma})$ -- the multivariate Gaussian distribution with the mean vector $\bm{\mu}$ and the covariance matrix $\bm{\Sigma}$; $\mathcal{H}^d$ -- the $d$-d Hausdorff measure (with suitable normalization); $|S|$ -- the cardinality of the set $S$.
    
    \section{Preliminaries}
    \label{sec2}
    
    \subsection{ODE and Probability Flows}
    \begin{definition}[\citet{o2006metric,villani2009optimal}]
      A map $\bm{f}: \mathcal{X} \rightarrow \mathcal{Y}$ between metric spaces $(\mathcal{X}, d_{\mathcal{X}})$ and $(\mathcal{Y}, d_{\mathcal{Y}})$ is said to be Lipschitz continuous (or $L$-Lipschitz) if $d_{\mathcal{Y}}[\bm{f}(\bm{x}), \bm{f}(\bm{x}^{\prime})] \leq L d_{\mathcal{X}}(\bm{x}, \bm{x}^{\prime})$ for all $\bm{x}$, $\bm{x}^{\prime}$ in $\mathcal{X}$. The best admissible constant $L$ is called the Lipschitz constant of $\bm{f}$, denoted by $\|\bm{f}\|_{\text{Lip}}$.
    \end{definition}
    The Cauchy problem or the initial value problem (IVP) is defined as the time-dependent Ordinary Differential Equation (ODE) of the following general form:
    \begin{equation}
      \label{eq_ODE}
      \frac{{\rm d} \bm{x}(t)}{{\rm d} t} = \bm{u}_t(\bm{x}), \quad t\in[0, 1], \quad \bm{x}(0) =\bm{x}_0,
    \end{equation}
    where $\bm{u}_t(\bm{x}): [0, 1]\times \mathbb{R}^d \rightarrow \mathbb{R}^d$ is a smooth\footnote{In this work, we assume that the vector field is (locally) Lipschitz continuous in both arguments $t$ and $\bm{x}$ and thereby the Picard's existence theorem \cite{arnold1992ordinary} guarantees the existence and uniqueness of the solution locally defined on a maximal time interval.} vector field. The solution $\bm{x}(t)$ of this ODE~\eqref{eq_ODE} induces a map, called the time-dependent flow: $\bm{\phi}_t(\bm{x}_0): [0, 1]\times \mathbb{R}^d \rightarrow \mathbb{R}^d$, defined as $\bm{\phi}_t(\bm{x}_0):=\bm{x}(t)$. For a given initial distribution $\bm{x}_0\sim q_0(\bm{x}_0)$, the above ODE~\eqref{eq_ODE} induces the associated probability flows $p_t(\bm{x}): [0, 1]\times \mathbb{R}^d \rightarrow \mathbb{R}^+$, satisfying the continuity equation \cite{pedlosky2013geophysical}:
    \begin{equation}
      \label{eq_continuity}
      \frac{\partial  p_t(\bm{x})}{\partial t} = -\nabla \cdot [p_t(\bm{x})\bm{u}_t(\bm{x})],
    \end{equation}
    with the initial condition  $p_0(\bm{x}_0) = q_0(\bm{x}_0)$.  Typically, $(\bm{\phi}_t)_{\#}p_0$ stands for the image measure or push-forward of $p_0$ by $\bm{\phi}_t$. In addition, if, for a given target distribution $\bm{x}_1\sim q_1(\bm{x}_1)$, it holds $p_1(\bm{x}_1) = q_1(\bm{x}_1)$, then the set of all these vector fields satisfying the boundary conditions is defined as $U(q_0, q_1)$.
    
    \subsection{Continuous Normalizing Flow}
    \citet{chen2018neural} proposed a continuous-time generative model, called the Continuous Normalizing Flow (CNF), that can be trained via performing maximum likelihood estimation. Specifically, the generative process works by first sampling data points from the source distribution  $\bm{x}_0 \sim q_{0}(\bm{x}_0)$. Then, these data points are transformed into different ones by solving the initial value problem of the neural ODE (NODE) \cite{chen2018neural}:
    \begin{equation}
      \label{eq_NODE}
      \frac{{\rm d} \bm{x}(t)}{{\rm d} t} = \bm{v}_t(\bm{x};\bm{\theta}), \quad t\in[0, 1], \quad \bm{x}(0) =\bm{x}_0, \\
    \end{equation}
    where $\bm{v}_t(\bm{x};\bm{\theta})$ is a parameterized neural network with the trainable weights $\bm{\theta}$ and the flow map is defined as $\bm{\varphi}_t(x_0;\bm{\theta})$. The object is that the final states $\bm{x}(1)$ from the above ODE~\eqref{eq_NODE} should constitute the target data instances. In addition, based on the instantaneous change of variables formula \cite{chen2018neural}, the change in log probability follows a second ODE:
    \begin{equation}
      \label{eq2}
      \frac{{\rm d}{\log p_t(\bm{x})}}{{\rm d} t} = -\text{Tr}\left[\frac{\partial  \bm{v}_t(\bm{x};\bm{\theta})}{\partial \bm{x}}\right],
    \end{equation}
    resulting in the total change in log density as follows:
    \begin{equation}
      \label{eq3}
      \log p_{1}(\bm{x}) = \log q_0(\bm{x}_0) -\int_{0}^{1}\text{Tr}\left[\frac{\partial  \bm{v}_t(\bm{x};\bm{\theta})}{\partial \bm{x}}\right] {\rm d}t.
    \end{equation}
    Finally, the CNF can be trained by maximizing \eqref{eq3}. We note that the CNF requires simulating the ODEs~\eqref{eq_NODE} and \eqref{eq2} during training, yielding high computational costs.
    
    \subsection{(Conditional) Flow Matching}
    Different from the training of the CNF as well as its objective, \citet{lipman2022flow} proposed Flow Matching (FM), a simple simulation-free training method that employs a stable objective by regressing a target vector field $\bm{u}_t(\bm{x})$ that generates the desired probability paths $p_t(\bm{x})$, satisfying $p_0[\bm{x}(0)]= q_0(\bm{x}_0)$ and $p_1[\bm{x}(1)]= q_1(\bm{x}_1)$. Then, the regression objective is
    \begin{equation}
      \label{obj_FM}
      \mathcal{L}_{\text{FM}}(\bm{\theta})=\mathbb{E}_{t, p_t(\bm{x})} \left\|\bm{v}_t(\bm{x};\bm{\theta}) - \bm{u}_t(\bm{x})\right\|^2,
    \end{equation}
    where $t\sim \mathcal{U}(0, 1)$ and $\bm{x}(t) \sim p_t(\bm{x})$.
    Ideally, when the above objective \eqref{obj_FM} approaches zero, the learned vector field $\bm{v}_t(\bm{x};\bm{\theta})$ will generate $p_t(\bm{x})$. However, this objective \eqref{obj_FM} is, in general, computationally intractable without knowing the explicit forms of $\bm{u}_t(\bm{x})$ and $p_t(\bm{x})$.
    
    Regarding this intractable issue, Conditional FM (CFM) \cite{lipman2022flow, pooladian2023multisample, tong2023conditional, tong2023improving} employs a simpler and tractable regression objective to effectively learn the vector field $\bm{v}_t(\bm{x};\bm{\theta})$ by incorporating a latent condition $\bm{z}$:
    \begin{equation}
      \label{obj_CFM}
      \mathcal{L}_{\text{CFM}}(\bm{\theta})=\mathbb{E}_{t, q(\bm{z}), p_t(\bm{x}|\bm{z})}
      \| \bm{v}_t(\bm{x};\bm{\theta})
      - \bm{u}_t(\bm{x}|\bm{z})\|^2,
    \end{equation}
    which has the same gradient, w.r.t. $\bm{\theta}$ as the FM objective~\eqref{obj_FM} \cite{lipman2022flow, pooladian2023multisample, tong2023conditional, tong2023improving}.
    Usually, $q(\bm{z})$ is chosen as an independent coupling between two distributions, i.e.,
    \begin{equation}
      \label{eq:ind_coupling}
      q(\bm{z}):=q(\bm{x}_0, \bm{x}_1)=q_0(\bm{x}_0)q_1(\bm{x}_1),
    \end{equation}
    with $\bm{x}(t)$ being the linear interpolation of $\bm{x}_0$ and $\bm{x}_1$:
    \begin{equation}
      \bm{x}(t)= (1-t)\bm{x}_0 + t \bm{x}_1,
    \end{equation}
    resulting in a constant speed vector field given $\bm{z}$:
    \begin{equation}
      \label{eq:ind_VF}
      \bm{u}_t(\bm{x}|\bm{z})=\bm{x}_1 - \bm{x}_0.
    \end{equation}
    This specific CFM model was also extensively investigated in prior research, notably in studies such as \citet{liu2022flow,albergo2022building}, where it is referred to as the rectified flow or the stochastic interpolant. In addition, $q(\bm{z})$ can be also selected as the (minibatch) optimal transport coupling \cite{fatras2019learning, pooladian2023multisample, tong2023conditional, tong2023improving}.  Here, we call these two methods independent CFM (I-CFM) and optimal transport CFM (OT-CFM).
    
    \subsection{Static and Dynamic Optimal Transport}
    The (static) optimal transport theory
    \cite{villani2009optimal, santambrogio2015optimal, peyre2019computational}, a field in mathematics, focuses on efficiently transferring one distribution to another. Usually, the optimal transport cost between two measures is defined as the Kantorovich problem \cite{kantorovich1942translocation}, which can be described as follows:
    \begin{equation}\label{eq_KP}
      C(q_0, q_1) =\inf\limits_{\pi \in \Pi(q_0, q_1)} \int c(\bm{x}_0, \bm{x}_1) {\rm d} \pi(\bm{x}_0, \bm{x}_1),
    \end{equation}
    where $c(\bm{x}_0, \bm{x}_1)$ is the cost for transporting one unit of mass from $\bm{x}_0$ to $\bm{x}_1$. In this paper, we consider the cost defined in terms of Euclidean distance, resulting in the following squared $2$-Wasserstein distance:
    \begin{equation}
      \label{eq:OT}
      W(q_0, q_1)^2 = \inf\limits_{\pi \in \Pi(q_0, q_1)} \int \|\bm{x}_0 - \bm{x}_1\|^2 {\rm d} \pi(\bm{x}_0, \bm{x}_1).     
    \end{equation}
      Notably, the squared $2$-Wasserstein distance has the equivalent dynamic form, known as the Benamou-Brenier formula \cite{benamou1999numerical, brenier2003extended, villani2009optimal}:
      \begin{equation}
        \label{eq:dyn_OT}
        W(q_0, q_1)^2 = \inf\limits_{\bm{u}_t \in U(q_0, q_1)}  \int_0^1 \int p_t(\bm{x})\|\bm{u}_t(\bm{x})\|^2 {\rm d} \bm{x} {\rm d} t.
      \end{equation}
      
      \begin{figure}[t]
        % \vskip -0.2in
        \begin{center}
          \centering \subfigure{\label{fig_uniform_a}}\subfigure{\label{fig_uniform_b}}
          \includegraphics[width=0.49\textwidth]{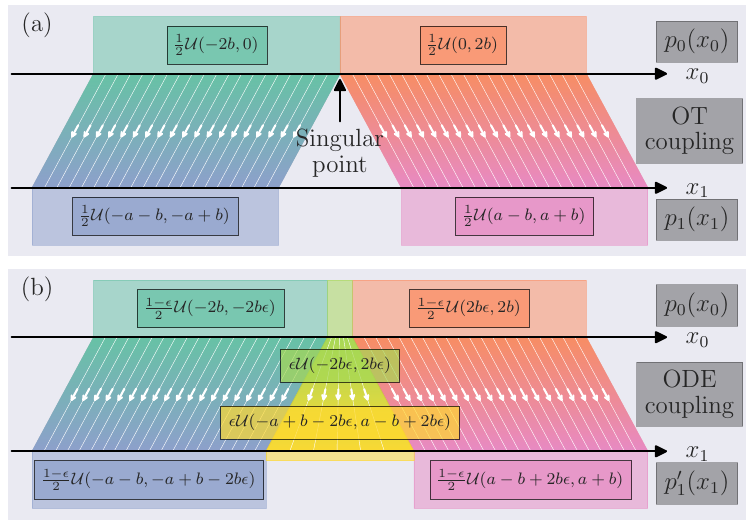}
          % \vskip -0.1in
          \caption{Illustration of the optimal transport (OT) coupling (a) and the ODE coupling (b) on the example in Proposition~\ref{thm:uniform21}.}
          \label{fig_uniform}
        \end{center}
        % \vskip -0.2in
      \end{figure}

      \begin{figure}[t]
        % \vskip -0.2in
        \begin{center}
          \centering \subfigure{\label{fig_uniform22_a}}\subfigure{\label{fig_uniform22_b}}
          \includegraphics[width=0.49\textwidth]{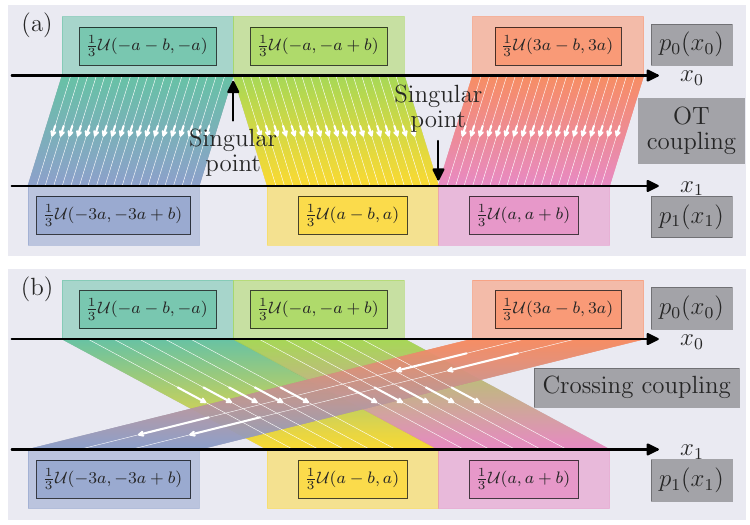}
          % \vskip -0.1in
          \caption{Illustration of the OT coupling (a) and the crossing coupling (b) on the example in Proposition~\ref{thm:uniform22}.}
          \label{fig_uniform22}
        \end{center}
        % \vskip -0.2in
      \end{figure}
      
      \section{Limitations of Folw Matching}
      \label{sec:limitations}
      In reality, the inherent (joint) heterogeneity of the source or/and target distributions may lead to a scenario where even an optimally trained FM model exhibits pronounced singularity. Consequently, this section aims to theoretically elucidate the limitations inherent to FM models through a series of propositions. All the details of the proofs are relegated to the appendices.
      \begin{propositionE}[Heterogeneity in $q_0$ or $q_1$][end,restate]\label{thm:uniform21}
        Suppose the source distribution $q_0$ is an $1$-d uniform distribution $q_0=\mathcal{U}(-2b, 2b)$ and the target distribution $q_1$ is an $1$-d uniform mixture ($2$-modes) $q_1=\frac{1}{2}\mathcal{U}(-a-b, -a+b) + \frac{1}{2}\mathcal{U}(a-b, a+b)$, where $a \gg b \geq 0$. Consider the (dynamic) optimal transport problem as defined in Eq.~\eqref{eq:OT} (or Eq.~\eqref{eq:dyn_OT}).
        \begin{enumerate}%[leftmargin=*, topsep=0mm, itemsep=0mm, parsep=0mm]
          \item \label{thm1_it1} If the NODE \eqref{eq_NODE} exactly\footnote{To be precise, $q_0$ can be completely transported to $q_1$ with the minimum of the squared $2$-Wasserstein distance \eqref{eq:dyn_OT}.} solves the problem,  then $x(0)=0$ is a singular point, i.e., where the flow map $\varphi_1(0;\bm{\theta}): x(0)=0\rightarrow x(1)$ is not well-defined or discontinuous (with two directions to $q_1$), as shown in Fig.~\ref{fig_uniform_a}.
          \item \label{thm1_it2} If the NODE \eqref{eq_NODE} approximately\footnote{A small fraction ($\epsilon \ll 1$) of the mass cannot be transferred from the source $q_0$ to the target $q_1$.}
          solves the problem, resulting in an approximated target distribution $q_1^{\prime}$,  then there is a neighborhood $O$ of $x(0)=x_0$ which is homeomorphically mapped to the open subset in target space connecting the two modes, as shown in Fig.~\ref{fig_uniform_b}.
          \item \label{thm1_it3} If the two modes of $q_1$ are far away from each other, i.e., $a\gg 1$, then the flow map $\varphi_1[x_0;\bm{\theta}]$ within a neighborhood $O$ as defined in the above-approximated NODE (the second bulletin) has a large Lipchitz constant.
        \end{enumerate}
      \end{propositionE}
      \begin{proofE}
        We routinely prove these three bulletins in the following.
        \begin{enumerate}
          \item If the NODE \eqref{eq_NODE} exactly solves the optimal transport problem, then the flow map $\varphi_1(x_0;\bm{\theta})$ transports all mass from the source to the target and simultaneously achieves the minimum squared $2$-Wasserstein distance. By construction of the source and target distributions, the support of the source distribution should be divided into two equal parts at the point $0$, and each part is transported to one of the two disconnected components of the target support. In addition, from Lemma~\ref{appd:lem_mon} and Theorem~\ref{appd:thm_mon}, the left (resp., right) part of the source support should be transported to the left part of the target support and this optimal transport map should satisfy the monotonic property \eqref{appd:eq_mon}, as illustrated in Fig.~\ref{fig_uniform_a}. Hence, the flow map $\varphi_1(0;\bm{\theta}): x(0)=0\rightarrow x(1)$ is not well-defined or discontinuous at the point $0$.
          \item  If the NODE \eqref{eq_NODE} approximately solves the problem, resulting in an approximated target distribution $q_1^{\prime}$, then there exists a small fraction of the mass that cannot be transferred from the source to the target. More precisely, based on the homeomorphism of the flow map $\varphi_1(x_0;\bm{\theta})$ (Theorem~\ref{appd:thm_hom}), it should map the source connected support to another connected set, thereby preserving connectedness. However, the support of the target distribution has two disconnected components, implying that there is a neighborhood $O$ of $x(0)=x_0$  in the source support, which is homeomorphically mapped to the open subset connecting these two modes.
          \item If the two modes of $q_1$ are far away from each other, i.e., $a\gg 1$, then it implies that the flow map $\varphi_1(x_0;\bm{\theta})$ maps the neighborhood $O$ of $x(0)=x_0$ to a large open subset connecting these two largely shifted modes, though the transported mass can be still small (say $\epsilon \ll 1$). Hence, the flow map $\varphi_1(x_0;\bm{\theta})$ has a large Lipchitz constant $L^{\prime}$, satisfying
          \begin{equation}
            \|\varphi_1(\bar{x}_0;\bm{\theta}) - \varphi_1(\hat{x}_0;\bm{\theta}) \| \leq L^{\prime} \|\bar{x}_0 - \hat{x}_0\|.
          \end{equation}
          More precisely, the order of $L^{\prime}$ is $\mathcal{O}(\frac{a}{\epsilon})$. In addition, based on the Gronwall's inequality~\eqref{appd:eq_gronwall}, the Lipchitz constant $L$ of the vector field $\bm{v}_t(\bm{x};\bm{\theta})$ should satisfy that
          \begin{equation}
            \|\varphi_1(\bar{x}_0;\bm{\theta}) - \varphi_1(\hat{x}_0;\bm{\theta}) \| \leq e^L \|\bar{x}_0 - \hat{x}_0\|,  \quad \forall~ \bar{x}_0, \hat{x}_0 \in O,
          \end{equation}
          implying that the order of $L$ is $\mathcal{O}(\ln{L^{\prime}})=\mathcal{O}\left(\ln{\frac{a}{\epsilon}}\right)$. Therefore, when $a\gg 1$, the flow map $\varphi_1[x_0;\bm{\theta}]$ within a neighborhood $O$ has a large Lipchitz constant $L$ of the order $\mathcal{O}\left(\ln{\frac{a}{\epsilon}}\right)$.
        \end{enumerate}
        The proof is complete.
      \end{proofE}
      \begin{remark}
        In Proposition~\ref{thm:uniform21}, without loss of generality, we only consider the target distribution $q_1$ with heterogeneity (two modes). The intuition behind the theoretical results is simple. Within the context of bulletin~\ref{thm1_it1} from Proposition~\ref{thm:uniform21}, the mechanism of optimal transport coupling necessitates the division of $p_0$ into two symmetrical segments at the juncture $x(0)=0$, directing these segments towards the dual modes of $q_1$. This process engenders a singularity at $x(0)=0$, a direct consequence of $q_1$'s heterogeneity.  For the bulletins~\ref{thm1_it2}~\&~\ref{thm1_it3} of Proposition~\ref{thm:uniform21}, these statements are the consequences of the flow map $\varphi_t(x(0);\bm{\theta})$ being a homomorphism (more precisely, diffeomorphism), i.e. a bijective and continuous function whose inverse is also continuous.
      \end{remark}
      A straightforward corollary emerging from Proposition~\ref{thm:uniform21} is articulated as follows.
      \begin{corollaryE}[][end,restate]
        \label{thm:corol}
        Given the discrete distributions $q_0=\delta_0$ and $q_1=\frac{1}{2}\delta_{-a} + \frac{1}{2}\delta_a$,  consider the optimal coupling $q(x_0=0, x_1=\pm a)=\frac{1}{2}$, then it cannot be solved by an ODE. Furthermore, the learned flow map $\varphi_1(0;\bm{\theta})$ transfers the initial Dirac mass to some point $a^{\prime}$ in the open set $(-a, a)$, i.e., $q_1^{\prime} =\delta_{a^{\prime}}$.
      \end{corollaryE}
      \begin{proofE}
        Based on the optimal coupling $q(x_0=0, x_1=\pm a)=\frac{1}{2}$, the mass at the point $0$ should be divided into two halves, and one half is transported to $-a$, and the other half is transported to $a$. However, based on the existence and uniqueness of ODEs' solutions, the flow map induced by the ODE is a determined map that can only transport the mass at the point $0$ to some point $a^{\prime}:=\varphi_1(0;\bm{\theta})$.
        \begin{figure}[htb]
          % \vskip 0.2in
          \begin{center}
            \centering
            \includegraphics[width=0.6\textwidth]{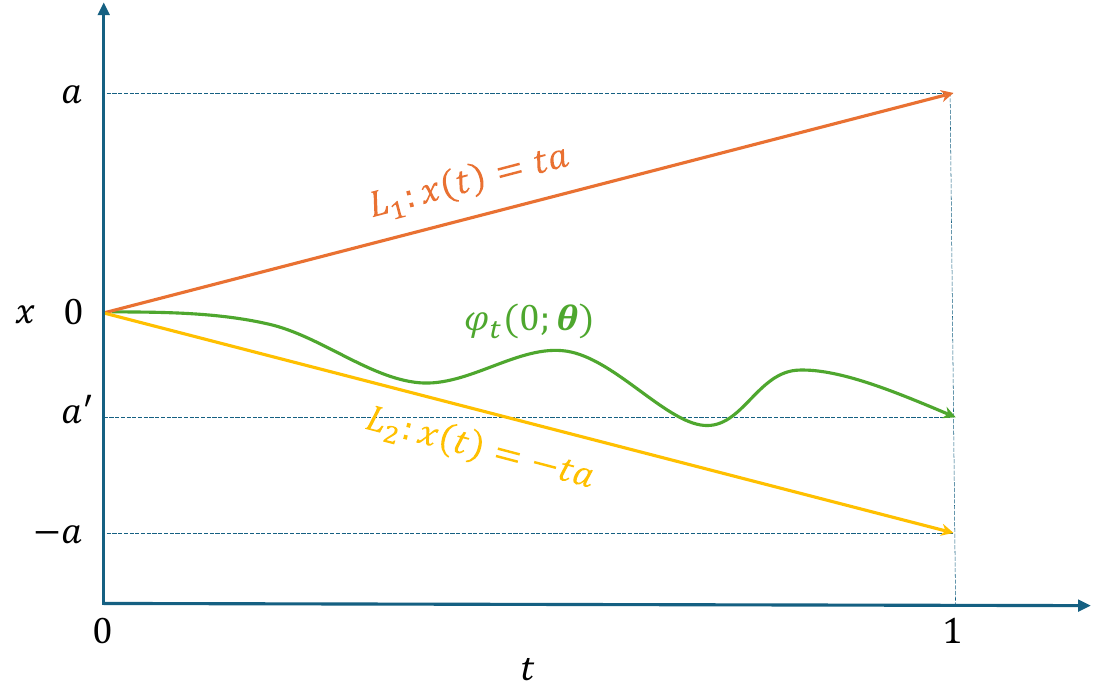}
            \caption{Illustration of the sandwiched property of ODEs in Corollary~\ref{thm:corol}.}
            \label{fig_corr}
          \end{center}
        \end{figure}
        Now,  we claim $a^{\prime} \in (-a, a)$. The optimal coupling $q(x_0=0, x_1=\pm a)=\frac{1}{2}$ shows the singularity at the point $x(0)=0$ with two directions. Hence, in an average way by eliminating the downward and upward components of these two directions, respectively, it leads to a horizon direction (between them) at the point $x(0)=0$. However, for any point belonging to the lines $x(t) = ta$ or $x(t) = -ta$, $t\in(0, 1]$, the direction of this point is determined along the corresponding line. Moreover, by the Theorem~\ref{appd:thm_noncross}, the ideal learned solution $\varphi_t(0;\bm{\theta})$ is always sandwiched between these two lines, as illustrated in Fig.~\ref{fig_corr}. Therefore, the learned flow map $\varphi_1(0;\bm{\theta})$ transfers the initial Dirac mass to some point $a^{\prime}$ in the open set $(-a, a)$, i.e., $q_1^{\prime} =\delta_{a^{\prime}}$.
      \end{proofE}
      \begin{remark}
        Intuitively, to resolve the issue identified in Corollary~\ref{thm:corol}, the flow map should assign the initial state to two disparate target states, thereby challenging the existence and uniqueness theorem of the IVP for a smooth ODE.
      \end{remark}

      \begin{figure*}[t]
        \begin{center}
          \centering
          \subfigure{\label{fig_Infinite_a}}
          \subfigure{\label{fig_Infinite_b}}
          \subfigure{\label{fig_Infinite_c}}
          \includegraphics[width=0.98\textwidth]{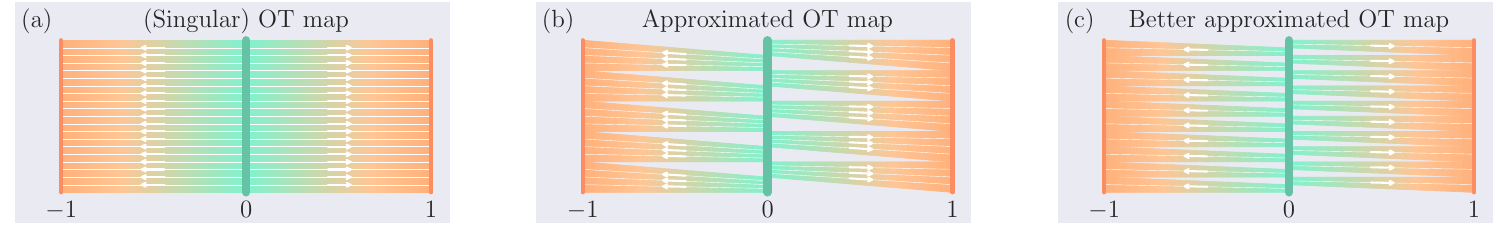}
          % \vskip -0.1in
          \caption{Illustration of the (singular) OT map (a) and the (better) approximated OT maps (b) \& (c) on the example in Proposition~\ref{thm:infinite}.}
          \label{fig_Infinite}
        \end{center}
        % \vskip -0.2in
      \end{figure*}
      
      \begin{propositionE}[Heterogeneity in both $q_0$ and $q_1$][end,restate]\label{thm:uniform22}
        Suppose the source and target distributions $q_0$ and $q_1$ are two different $1$-d uniform mixtures ($2$-modes), respectively, i.e., $q_0=\frac{2}{3}\mathcal{U}(-a-b, -a +b) + \frac{1}{3}\mathcal{U}(3a-b, 3a)$ and $q_1=\frac{1}{3}\mathcal{U}(-3a, -3a + b) + \frac{2}{3}\mathcal{U}(a-b, a+b)$, where $a \gg b \geq 0$. Consider the (dynamic) optimal transport problem as defined in Eq.~\eqref{eq:OT} (or Eq.~\eqref{eq:dyn_OT}). If the NODE~\eqref{eq_NODE} exactly solves the problem, then $x(0)=-a$ (reps., $x(1)=a$) is a singular point as shown in Fig.~\ref{fig_uniform22_a}.
      \end{propositionE}
      \begin{proofE}
        The proof is similar to the one of the Proposition~\ref{thm:uniform21}. Different from the case  Proposition~\ref{thm:uniform21}, here, the disconnectedness occurs in both the source distribution $q_0$ and the target distribution$q_1$, where each disconnected component of their supports has a distinct mass.  The optimal transport coupling should split both the large components in the supports of the source and the target distributions into two equal parts, resulting in a total of three equal components, respectively. Therefore, the ODE is thereby singular at the split points $x(0)=-a$ and $x(1)=a$.
      \end{proofE}
      \begin{remark}
        The conceptual underpinnings of Proposition~\ref{thm:uniform22} closely mirror that of Proposition~\ref{thm:uniform21}. However, the identified singularity originates from the heterogeneity present in both $q_0$ and $q_1$ under the optimal coupling induced by the squared 2-Wasserstein distance~\eqref{eq:OT}. Instead, a crossing coupling, illustrated in Fig.~\ref{fig_uniform22_b}, enables an exact transportation between two large (resp., small) modes of $q_0$ and $q_1$, adeptly sidestepping any potential singularities. This coupling is locally optimal given the source and target modes, although it does not constitute a global optimum. Regrettably, achieving such coupling via a single ODE is impossible, as ODE trajectories cannot intersect \cite{arnold1992ordinary,dupont2019augmented,zhang2020approximation,massaroli2020dissecting,zhu2021neural,liu2022flow}.
      \end{remark}
      
      \begin{propositionE}[Infinite number of singular points][end,restate]\label{thm:infinite}
        Suppose the source and target distributions $q_0$ and $q_1$ are defined on $\mathbb{R}^2$ with $q_0$ being $\mathcal{H}^1$ restricted to $\{0\} \times [-1, 1]$, and $q_1$ being $(1/2)\mathcal{H}^1$ restricted to $\{-1, 1\} \times [-1, 1]$, respectively. Consider the (dynamic) optimal transport problem as defined in Eq.~\eqref{eq:OT} (or Eq.~\eqref{eq:dyn_OT}). If the NODE~\eqref{eq_NODE} exactly solves the problem, then all the points $\bm{x}(0)=(0, a), a \in [-1, 1]$ are singular points as shown in Fig.~\ref{fig_Infinite_a}.
      \end{propositionE}
      \begin{proofE}
        Indeed, to exactly solve the optimal transport problem, each source point $(0, a)$ has two expected target points $(-1, a)$ and $(1, a)$, since both of them require the same unit cost for $(0, a)$ to one of them. Therefore, the optimal flow map $\bm{\varphi}_1[(0, a);\bm{\theta}]: \bm{x}(t=0)=(0, a)\rightarrow \bm{x}(1)$ is not well-defined (with two directions to $q_1$).
      \end{proofE}
      \begin{remark}
        We note that Proposition~\ref{thm:infinite} presents a quintessential example often employed to demonstrate the existence of a Monge minimizer, as detailed in\cite{villani2009optimal}. To achieve an optimal cost, one must split the mass at $(0, a)$ into two equal parts, and subsequently advance one towards $(-1, a)$ and the other towards $(1, a)$. Although this procedure does not yield a conventional map (or Monge transport), one can approximate it via a discontinuous map with finite singular points as shown in Fig.~\ref{fig_Infinite_b}. In addition, it is always possible to construct a better map (see Fig.~\ref{fig_Infinite_c}) by similarly incorporating additional singular points.
      \end{remark}
      \begin{remark}
        It is worth noting that the dimensionality of the manifold corresponding to the singular points in Proposition~\ref{thm:infinite} is $1$. Conversely, Propositions~\ref{thm:uniform21} and \ref{thm:uniform22} are characterized by a manifold dimensionality of $0$. In higher-dimensional cases, the singular points are encompassed within a stratified union of manifolds with distinct dimensions \cite{caffarelli1977regularity, caffarelli1998obstacle, figalli2019fine}. To eliminate these singularities, it is essential to ensure that the cost functions, the spaces, and the probability measures meet adequate regularity assumptions, but this is often not the case when dealing with real-world data.
      \end{remark}
      
      \section{Switched Flow Matching}
      \label{sec:SFM}
      Inspired by the limitations of FM, we construct a new class of continuous-time generative models, referred as to Switched FM (SFM) which solves the transport problem between source and target distributions via switching multiple ODEs, particularly eliminating the singularities encountered in FM using a single ODE.  The comparison of the FM and SFM are summarized in Table \ref{tab:prob_path_summary}.
      
      \newcommand{\no}[0]{\color{red}\ding{55}}
      \newcommand{\yes}[0]{\color{green}\ding{51}}
      
      \begin{table}[htb]
        \centering
        \caption{Properties for the ODE-based generative models, including the FM, CFM, and our proposed SFM. Particularly, the SFM can not only handle general source distributions, and optimal transport flows (OT-SFM), but also employ multiple ODEs to eliminate the singularity, allowing the intersection of trajectories from different ODEs, and owning the relatively good regularity.}
        \vspace*{1em}
        \label{tab:prob_path_summary}
        \resizebox{\linewidth}{!}{        \begin{tabular}{@{}l|ccccc}
        \toprule
        \toprule
        ODE model & General source  & OT & Mult. ODEs & Intersection & Regularity\\
        \midrule
        FM      &  \no & \no & \no & \no & \no \\
        I-CFM &  \yes & \no & \no & \no & \no \\
        OT-CFM & \yes & \yes & \no & \no & \no \\
        \midrule
        I-SFM  & \yes & \no & \yes & \yes & \yes \\
        OT-SFM  & \yes & \yes & \yes & \yes & \yes\\
        \bottomrule
        \bottomrule
      \end{tabular}
      }
    \end{table}
    
    \subsection{Formulation}
    Consider the source (resp., target) distribution, denoted as $q_0(\bm{x})$ (resp., $q_1(\bm{x})$), which is modeled as a mixture of conditional distributions $q_0(\bm{x}|\bm{s})$  (resp., $q_1(\bm{x}|\bm{s})$) that vary in response to a latent conditioning variable $\bm{s}$, termed the switching signal. Mathematically, this is expressed as:
    \begin{equation}
      \label{eq:marginal_PP_boundary}
      q_i(\bm{x}) = \int q_i(\bm{x}|\bm{s}) q^{\circ}(\bm{s})  {\rm d} \bm{s}, \quad i\in\{0, 1\},
    \end{equation}
    where $q^{\circ}(\bm{s})$ represents the distribution over the switching signal. Correspondingly, the marginal probability path $p_t(\bm{x})$ is modeled as a mixture of probability paths $p_t(\bm{x}|\bm{s})$ of the following form:
    \begin{equation}
      \label{eq:marginal_PP}
      p_t(\bm{x}) = \int p_t(\bm{x}|\bm{s}) q^{\circ}(\bm{s})  {\rm d} \bm{s},
    \end{equation}
    where $p_t(\bm{x}|\bm{s})$ should satisfy the boundary conditions, i.e., $p_0(\bm{x}|\bm{s})=q_0(\bm{x}|\bm{s})$ and $p_1(\bm{x}|\bm{s})=q_1(\bm{x}|\bm{s})$, implying $p_0(\bm{x})=q_0(\bm{x})$ and $p_1(\bm{x})=q_1(\bm{x})$. We assume that each conditional probability path $p_t(\bm{x}|\bm{s})$ arises from a corresponding conditional vector field $\bm{u}_t(\bm{x}|\bm{s})$. Significantly, our proposed SFM involves switching these ODEs rather than relying on a single ODE in FM~\eqref{obj_FM}. The corresponding sampling process is formalized as follows.
    \begin{propositionE}[Switching ODEs][end,restate]\label{thm:Switching_ODEs}
      The marginal probability path $p_t(\bm{x})$ can be effectively sampled by switching ODEs in the following three steps:
      \begin{enumerate}%[leftmargin=*, topsep=0mm, itemsep=0mm, parsep=0mm]
        \item \textbf{Sampling an ODE}. Sampling a switching signal $\bm{s}$ from the distribution $q^{\circ}(\bm{s})$, resulting in the specified ODE $\bm{u}_t(\bm{x}|\bm{s})$;
        \item \textbf{Sampling an initial state}. Sampling an initial state $\bm{x}_0$ (resp., backward one $\bm{x}_1$) from the conditional distribution $q_0(\bm{x}_0|\bm{s})$ (resp., $q_1(\bm{x}_1|\bm{s})$);
        \item \textbf{Solving the IVP}. Generating the corresponding conditional probability path $p_t(\bm{x}|\bm{s})$ by the vector field $\bm{u}_t(\bm{x}|\bm{s} )$ from the initial state $\bm{x}_0$ (resp., $\bm{x}_1$).
      \end{enumerate}
    \end{propositionE}
    \begin{proofE}
      Here, we aim to prove that the marginal probability path $p_t(\bm{x})$ can be equivalently sampled by switching ODEs in the above three steps.
      
      By construction,  we introduce a latent conditioning variable $\bm{s}$ to represent the source (resp., target) distribution $q_0(\bm{x})$ (resp., $q_1(\bm{x})$) as a mixture of conditional distributions $q_0(\bm{x}|\bm{s})$  (resp., $q_1(\bm{x}|\bm{s})$),  satisfying (also see Eq.~\eqref{eq:marginal_PP_boundary})
      \begin{equation}\label{appd:eq_marginal_PP_boundary}
        q_i(\bm{x}) = \int q_i(\bm{x}|\bm{s}) q^{\circ}(\bm{s})  {\rm d} \bm{s}, \quad i\in\{0, 1\},
      \end{equation}
      where $q^{\circ}(\bm{s})$ is the distribution over the switching signal. In addition, we model the marginal probability path $p_t(\bm{x})$ as a mixture of probability paths $p_t(\bm{x}|\bm{s})$ (also see Eq.~\eqref{eq:marginal_PP}),
      \begin{equation}
        p_t(\bm{x}) = \int p_t(\bm{x}|\bm{s}) q^{\circ}(\bm{s})  {\rm d} \bm{s}.
      \end{equation}
      where each conditional probability path $p_t(\bm{x}|\bm{s})$ arises from a corresponding conditional vector field $\bm{u}_t(\bm{x}|\bm{s})$, i.e., satisfying the continuity equation,
      \begin{equation}\label{eq_continuity_cond}
        \frac{\partial  p_t(\bm{x}|\bm{s})}{\partial t} = -\nabla \cdot [p_t(\bm{x}|\bm{s})\bm{u}_t(\bm{x}|\bm{s})]
      \end{equation}
      with the boundary conditions $p_0(\bm{x}_0|\bm{s}) = q_0(\bm{x}_0|\bm{s})$ and $p_1(\bm{x}_1|\bm{s}) = q_1(\bm{x}_0|\bm{s})$. Moreover, it holds the boundary marginal source and target distributions, i.e.,
      \begin{equation}
        \begin{aligned}
          p_i(\bm{x}) &= \int p_i(\bm{x}|\bm{s}) q^{\circ}(\bm{s})  {\rm d} \bm{s}\\
          &=  \int q_i(\bm{x}|\bm{s}) q^{\circ}(\bm{s})  {\rm d} \bm{s} \\
          &= q_i(\bm{x}),
        \end{aligned}
      \end{equation}
      where $i \in \{0, 1\}$. Therefore, one can sample $p_t(\bm{x})$ through the above three steps.
    \end{proofE}
    \begin{remark}
      In this work, we are interested in the simple switching mechanism where the $q^{\circ}(\bm{s})$ and $q_0(\bm{x}_0|\bm{s})$ (resp., $q_1(\bm{x}_1|\bm{s})$) are both easily sampled, which will be presented in the Subsection~\ref{subsec:SM}. Additionally, these conditional vector fields $\bm{u}_t(\bm{x}|\bm{s})$, in turn, collectively generate a marginal vector field, obtained by ``marginalizing'' over them as follows:
      \begin{equation}
        \label{eq:marginal_VF}
        \bm{u}_t(\bm{x}) := \int \bm{u}_t(\bm{x}|\bm{s}) \frac{p_t(\bm{x}|\bm{s}) q^{\circ}(\bm{s})}{ p_t(\bm{x})}  {\rm d} \bm{s},
      \end{equation}
      where $p_t(\bm{x})>0$ for all $t$ and $\bm{x}$. Crucially, as pointed out in the existing studies \cite{lipman2022flow, pooladian2023multisample, tong2023conditional, tong2023improving}, the marginal vector field~\eqref{eq:marginal_VF} actually generates the marginal probability path~\eqref{eq:marginal_PP}. However, using a single ODE to solve the transportation problem may inevitably encounter the singularity problem due to the inherent (joint) heterogeneity of the source and/or target distributions as discussed in the Section~\ref{sec:limitations}.
    \end{remark}
    \subsection{Training Objective}
    To mitigate the issue of singularity,  our study aims to directly approximate the conditional vector field $\bm{u}_t(\bm{x}|\bm{s})$ by the learnable one $\bm{v}_t(\bm{x};\bm{\theta} |\bm{s} )$ using the following SFM objective:
    \begin{equation}
      \label{obj_SFM}
      \mathcal{L}_{\text{SFM}}(\bm{\theta})=\mathbb{E}_{t, q^{\circ}(\bm{s}), p_t(\bm{x}|\bm{s})}
      \| \bm{v}_t(\bm{x};\bm{\theta} |\bm{s} )
      - \bm{u}_t(\bm{x}|\bm{s})\|^2.
    \end{equation}
    Simply put, the SFM loss~\eqref{obj_SFM} regresses the conditional vector field $\bm{u}_t(\bm{x}|\bm{s})$ with a neural network $\bm{v}_t(\bm{x};\bm{\theta} |\bm{s} )$ via consistently sharing the parameter vector $\bm{\theta}$ across all switching signals $\bm{s}$. Upon minimizing the SFM loss to zero, an efficient sampling mechanism is enabled by the replacement of $\bm{u}_t(\bm{x}|\bm{s})$ with $\bm{v}_t(\bm{x};\bm{\theta} |\bm{s})$ as proposed in Proposition~\ref{thm:Switching_ODEs}.
    
    However, akin to the FM~\eqref{obj_FM}, the SFM objective~\eqref{obj_SFM} becomes intractable in the absence of prior knowledge regarding the appropriate forms of $p_t(\bm{x}|\bm{s})$ and $\bm{u}_t(\bm{x}|\bm{s})$. To address this issue, similar to the CFM~\eqref{obj_CFM}, we further introduce a latent variable $\bm{z}$, and by marginalizing the conditional probability paths over $q(\bm{z}|\bm{s})$, we have the marginal probability path condition on $\bm{s}$,
    \begin{equation}
      \label{eq:cond_marignal_prob_path}
      p_t(\bm{x}|\bm{s}) = \int p_t(\bm{x}|\bm{z}, \bm{s}) q(\bm{z}|\bm{s}) {\rm d} \bm{z}.
    \end{equation}
    Akin to the marginal vector field~\eqref{eq:marginal_VF}, we can also obtain the marginal vector field given $\bm{s}$, i.e., $\bm{u}_t(\bm{x}|\bm{s})$, by marginalizing over the conditional vector fields $\bm{u}_t(\bm{x}|\bm{z},\bm{s})$ in the following sense,
    \begin{equation}
      \label{eq:cond_marginal_VF}
      \bm{u}_t(\bm{x}|\bm{s}) := \int \bm{u}_t(\bm{x}|\bm{z},\bm{s}) \frac{p_t(\bm{x}|\bm{z},\bm{s}) q(\bm{z}|\bm{s})}{ p_t(\bm{x}|\bm{s})}  {\rm d} \bm{z},
    \end{equation}
    where $\bm{u}_t(\bm{x}|\bm{z},\bm{s})$ is the conditional vector field that generates $p_t(\bm{x}|\bm{z},\bm{s})$, yielding the following result.
    \begin{propositionE}[][end,restate]\label{thm:id_gradient}
      Given the switching signal $s$, the vector field $\bm{u}_t(\bm{x}|\bm{s})$ in Eq.~\eqref{eq:cond_marginal_VF} generates the probability path $p_t(\bm{x}|\bm{s})$ in Eq.~\eqref{eq:cond_marignal_prob_path}.
    \end{propositionE}
    \begin{proofE}
      The proof is adapted from \citet{lipman2022flow, tong2023conditional, tong2023improving}.
      
      Since $\bm{u}_t(\bm{x}|\bm{z},\bm{s})$ is the conditional vector field that generates $p_t(\bm{x}|\bm{z},\bm{s})$, it means that given the switching signal $s$ and the latent variable $\bm{z}$, $\bm{u}_t(\bm{x}|\bm{z},\bm{s})$ and $p_t(\bm{x}|\bm{z},\bm{s})$ satisfy the continuity equation:
      \begin{equation}
        \frac{\partial  p_t(\bm{x}|\bm{z},\bm{s})}{\partial t} = -\nabla \cdot [p_t(\bm{x}|\bm{z},\bm{s})\bm{u}_t(\bm{x}|\bm{z},\bm{s})].
      \end{equation}
      Next, we check that given the switching signal $s$, $p_t(\bm{x}|\bm{s})$ and $\bm{u}_t(\bm{x}|\bm{s})$ satisfy the continuity equation:
      \begin{equation}
        \begin{aligned}
          \frac{\partial  p_t(\bm{x}|\bm{s})}{\partial t} &= \frac{\partial }{\partial t} \int  p_t(\bm{x}|\bm{z}, \bm{s}) q(\bm{z}|\bm{s}) {\rm d} \bm{z}   \\
          &=  \int \left[\frac{\partial }{\partial t} p_t(\bm{x}|\bm{z}, \bm{s}) \right] q(\bm{z}|\bm{s}) {\rm d} \bm{z}   \\
          &=  -\int \left\{ \nabla \cdot [p_t(\bm{x}|\bm{z}, \bm{s})\bm{u}_t(\bm{x}|\bm{z}, \bm{s})] \right\} q(\bm{z}|\bm{s}) {\rm d} \bm{z}   \\
          &=  -\nabla \cdot \int p_t(\bm{x}|\bm{z}, \bm{s})\bm{u}_t(\bm{x}|\bm{z}, \bm{s}) q(\bm{z}|\bm{s}) {\rm d} \bm{z}   \\
          &= -\nabla \cdot \left[p_t(\bm{x}|\bm{s}) \int \bm{u}_t(\bm{x}|\bm{z},\bm{s}) \frac{p_t(\bm{x}|\bm{z},\bm{s}) q(\bm{z}|\bm{s})}{ p_t(\bm{x}|\bm{s})}  {\rm d} \bm{z}  \right]\\
          &:=-\nabla \cdot [p_t(\bm{x}|\bm{s})\bm{u}_t(\bm{x}|\bm{s})]
        \end{aligned}
      \end{equation}
      where we assume that the functions being integrated satisfy the regularity conditions for exchanging integration and differentiation.
    \end{proofE}
    Similar to the CFM~\eqref{obj_CFM}, we then consider the Switching Conditional FM (SCFM) objective:
    \begin{equation}
      \label{obj_SCFM}
      \begin{aligned}
        \mathcal{L}_{\text{SCFM}}(\bm{\theta})=\mathbb{E}_{t, q^{\circ}(\bm{s}), q(\bm{z}|\bm{s}), p_t(\bm{x}|\bm{z},\bm{s})}
        & \| \bm{v}_t(\bm{x};\bm{\theta} |\bm{s} ) \\
        & - \bm{u}_t(\bm{x}|\bm{z}, \bm{s})\|^2.
      \end{aligned}
    \end{equation}
    Then, we have the following result.
    \begin{propositionE}[][end,restate]\label{thm:id_gradient}
      Assuming that $p_t(\bm{x}|\bm{s})>0$ for all $\bm{x}\in\mathbb{R}^d$ and $t\in[0, 1]$, then, up to a constant independent of $\bm{\theta}$,  $\mathcal{L}_{\text{SCFM}}(\bm{\theta})$ and $\mathcal{L}_{\text{SFM}}(\bm{\theta})$ are equal. Hence, $\nabla_{\bm{\theta}} \mathcal{L}_{\text{SCFM}}(\bm{\theta}) = \nabla_{\bm{\theta}} \mathcal{L}_{\text{SFM}}(\bm{\theta})$.
    \end{propositionE}
    \begin{proofE}
      The proof is adapted from \citet{lipman2022flow, tong2023conditional, tong2023improving}.
      
      To ensure the existence of all integrals and to allow the changing of integration order (by Fubini's Theorem), we assume that $q(\bm{x}|\bm{s})$, $p_t(\bm{x}|\bm{z}, \bm{s})$ are decreasing to zero at sufficient speed as $\|\bm{x}\|\to \infty$ and that $\bm{u}_t, \bm{v}_t, \nabla_{\bm{\theta}} \bm{v}_t$ are bounded. Since $t$ and $\bm{s}$ are sampled from $\mathcal{U}(0, 1)$ and $q^{\circ}(\bm{s})$, respectively, where both are independent of $\bm{\theta}$,  in the following $t$ and $\bm{s}$ are both fixed. By the bilinearity of the Euclidean norm and since $\bm{u}_t$ is independent of $\bm{\theta}$, we have
      \begin{equation}\label{eq:SFM_appd}
        \begin{aligned}
          &\nabla_{\bm{\theta}} \mathbb{E}_{p_t(\bm{x}|\bm{s})} \|\bm{v}_t(\bm{x};\bm{\theta} |\bm{s}) - \bm{u}_t(\bm{x}|\bm{s})\|^2 \\
          =& \nabla_{\bm{\theta}}  \mathbb{E}_{p_t(\bm{x}|\bm{s})} \left (\|\bm{v}_t(\bm{x};\bm{\theta} |\bm{s})\|^2 - 2 \left \langle  \bm{v}_t(\bm{x};\bm{\theta} |\bm{s}), \bm{u}_t(\bm{x}|\bm{s}) \right \rangle + \|\bm{u}_t(\bm{x}|\bm{s})\|^2  \right )\\
          =& \nabla_{\bm{\theta}}  \mathbb{E}_{p_t(\bm{x}|\bm{s})} \left (\|\bm{v}_t(\bm{x};\bm{\theta} |\bm{s})\|^2 - 2 \left \langle  \bm{v}_t(\bm{x};\bm{\theta} |\bm{s}), \bm{u}_t(\bm{x}|\bm{s}) \right \rangle \right ),
        \end{aligned}
      \end{equation}
      and
      \begin{equation}\label{eq:SCFM_appd}
        \begin{aligned}
          &\nabla_{\bm{\theta}} \mathbb{E}_{q(\bm{z}|\bm{s}), p_t(\bm{x}|\bm{z},\bm{s})}  \|\bm{v}_t(\bm{x};\bm{\theta} |\bm{s}) - \bm{u}_t(\bm{x}|\bm{z},\bm{s})\|^2 \\
          =& \nabla_{\bm{\theta}} \mathbb{E}_{q(\bm{z}|\bm{s}), p_t(\bm{x}|\bm{z},\bm{s})} \left (\|\bm{v}_t(\bm{x};\bm{\theta} |\bm{s})\|^2 - 2 \left \langle  \bm{v}_t(\bm{x};\bm{\theta} |\bm{s}), \bm{u}_t(\bm{x}|\bm{z},\bm{s}) \right \rangle + \|\bm{u}_t(\bm{x}|\bm{z},\bm{s})\|^2 \right ) \\
          =& \nabla_{\bm{\theta}} \mathbb{E}_{q(\bm{z}|\bm{s}), p_t(\bm{x}|\bm{z},\bm{s})}  \left (\|\bm{v}_t(\bm{x};\bm{\theta} |\bm{s})\|^2 - 2 \left \langle  \bm{v}_t(\bm{x};\bm{\theta} |\bm{s}), \bm{u}_t(\bm{x}|\bm{z},\bm{s}) \right \rangle \right ).
        \end{aligned}
      \end{equation}
      Next,
      \begin{equation}
        \begin{aligned}
          & \mathbb{E}_{p_t(\bm{x}|\bm{s})} \|\bm{v}_t(\bm{x};\bm{\theta} |\bm{s})\|^2 \\
          =& \int \|\bm{v}_t(\bm{x};\bm{\theta} |\bm{s})\|^2 p_t(\bm{x}|\bm{s}) {\rm d}\bm{x} \\
          =& \iint \|\bm{v}_t(\bm{x};\bm{\theta} |\bm{s})\|^2 p_t(\bm{x}|\bm{z},\bm{s}) q(\bm{z}|\bm{s}) {\rm d}\bm{z}  {\rm d}\bm{x} \\
          =& \mathbb{E}_{q(\bm{z}|\bm{s}), p_t(\bm{x}|\bm{z},\bm{s})} \|\bm{v}_t(\bm{x};\bm{\theta} |\bm{s})\|^2.
        \end{aligned}
      \end{equation}
      Finally,
      \begin{equation}
        \begin{aligned}
          \mathbb{E}_{p_t(\bm{x}|\bm{s})} \left \langle  \bm{v}_t(\bm{x};\bm{\theta} |\bm{s}), \bm{u}_t(\bm{x}|\bm{s}) \right \rangle
          &= \int \left \langle  \bm{v}_t(\bm{x};\bm{\theta} |\bm{s}), \frac{\int \bm{u}_t(\bm{x}|\bm{z},\bm{s}) p_t(\bm{x} | \bm{z}, \bm{s}) q(\bm{z}|\bm{s}) {\rm d}\bm{z}}{p_t(\bm{x}|\bm{s})} \right \rangle p_t(\bm{x}|\bm{s}) {\rm d}\bm{x} \\
          &= \int \left \langle  \bm{v}_t(\bm{x};\bm{\theta} |\bm{s}), \int \bm{u}_t(\bm{x}|\bm{z},\bm{s}) p_t(\bm{x} | \bm{z}, \bm{s}) q(\bm{z}|\bm{s}) {\rm d}\bm{z}\right \rangle {\rm d}\bm{x} \\
          &= \iint \left \langle  \bm{v}_t(\bm{x};\bm{\theta} |\bm{s}), \bm{u}_t(\bm{x}|\bm{z},\bm{s}) \right \rangle p_t(\bm{x} | \bm{z}, \bm{s}) q(\bm{z}|\bm{s}) {\rm d}\bm{z} {\rm d}\bm{x} \\
          &= \mathbb{E}_{q(\bm{z}|\bm{s}), p_t(\bm{x} | \bm{z}, \bm{s})} \left \langle  \bm{v}_t(\bm{x};\bm{\theta} |\bm{s}), \bm{u}_t(\bm{x}|\bm{z},\bm{s}) \right \rangle.
        \end{aligned}
      \end{equation}
      Therefore, Eq.~\eqref{eq:SFM_appd} is always equal to Eq.~\eqref{eq:SCFM_appd} for any $\bm{s}$ and $\bm{z}$, implying that $\nabla_{\bm{\theta}} \mathcal{L}_{\text{SCFM}}(\bm{\theta}) = \nabla_{\bm{\theta}} \mathcal{L}_{\text{SFM}}(\bm{\theta})$. The proof is complete.
    \end{proofE}
    \begin{remark}
      The above result is actually the same as studied in \citet{lipman2022flow, pooladian2023multisample, tong2023conditional, tong2023improving} if we consider the switching signal $s$ is a dumb variable, i.e., $q_i(\bm{x}|\bm{s})=q_i(\bm{x}), i\in\{0, 1\}$, $q(\bm{z}|\bm{s})=q(\bm{z})$, $\bm{u}_t(\bm{x}|\bm{z}, \bm{s}) = \bm{u}_t(\bm{x}|\bm{z})$, and $\bm{v}_t(\bm{x};\bm{\theta}|\bm{s}) = \bm{v}_t(\bm{x};\bm{\theta})$.
    \end{remark}
    The SCFM objective~\eqref{obj_SCFM} is useful when the vector field $\bm{u}_t(\bm{x}|\bm{s})$ is intractable but the conditional vector field $\bm{u}_t(\bm{x}|\bm{z}, \bm{s})$ is simple even in a closed form.
    \subsection{Coupling}
    As delineated in Eqs.~\eqref{eq:ind_coupling}-\eqref{eq:ind_VF}, one can also choose $q(\bm{z}|\bm{s})$ as an independent coupling condition on $\bm{s}$, i.e.,
    \begin{equation}
      \label{eq:SCFM_ind_coupling}
      q(\bm{z}|\bm{s}):=q(\bm{x}_0, \bm{x}_1|\bm{s})=q_0(\bm{x}_0|\bm{s})q_1(\bm{x}_1|\bm{s}),
    \end{equation}
    resulting in the linear interpolation $\bm{x}(t)$ and the constant speed vector field condition on both $\bm{z}$ and $\bm{s}$:
    \begin{equation}
      \label{eq:SCFM_ind_VF}
      \bm{x}(t)= (1-t)\bm{x}_0 + t \bm{x}_1, \quad \bm{u}_t(\bm{x}|\bm{z},  \bm{s})=\bm{x}_1 - \bm{x}_0.
    \end{equation}
    In addition, another choice of $q(\bm{z}|\bm{s})$ is the optimal coupling \cite{pooladian2023multisample, tong2023conditional, tong2023improving} in terms of the squared $2$-Wasserstein distance condition on $\bm{s}$, namely,
    \begin{equation}
      q(\bm{z}|\bm{s}) := q^{*}(\bm{x}_0, \bm{x}_1|\bm{s}),
    \end{equation}
    where $\bm{z}$ represents a pair of points $\bm{x}_0$ and $\bm{x}_1$. Contrary to independently sampling them from their conditional distributions~\eqref{eq:SCFM_ind_coupling}, these points are jointly sampled in accordance with the optimal coupling $q^{*}(\bm{x}_0, \bm{x}_1|\bm{s})$ condition on $\bm{s}$. Here, we also use the simple vector field $\bm{u}_t(\bm{x}|\bm{z}, \bm{s})$ as defined in Eq.~\eqref{eq:SCFM_ind_VF} in the SCFM objective~\eqref{obj_SCFM}. We then propose the following result.
    \begin{propositionE}[][end,restate]\label{thm:dyn_OT}
      Consider the optimal coupling $q^{*}(\bm{x}_0, \bm{x}_1|\bm{s})$ and the vector field $\bm{u}_t(\bm{x}|\bm{z}, \bm{s})$ as defined in Eq.~\eqref{eq:SCFM_ind_VF}, then the optimal vector field $\bm{v}_t(\bm{x};\bm{\theta} |\bm{s})$ in Eq.~\eqref{obj_SCFM} solves the dynamic optimal transport problem~\eqref{eq:dyn_OT} (condition on $\bm{s}$) between $q_0(\bm{x_0}|\bm{s})$ and $q_1(\bm{x_1}|\bm{s})$.
    \end{propositionE}
    \begin{proofE}
      Here, we assume that given the switching signal $\bm{s}$,  the source and target distributions condition on $\bm{s}$ satisfy the regularity conditions such that by Brenier's theorem \cite{brenier1991polar}, there is a unique optimal Monge coupling between $q_0(\bm{x_0}|\bm{s})$ and $q_1(\bm{x_1}|\bm{s})$.
      
      Under the optimal coupling $q^{*}(\bm{z}|\bm{s}):=q^{*}(\bm{x}_0, \bm{x}_1|\bm{s})$, it induces a unique optimal Monge transport map $T(\cdot|\bm{s})$, which can be represented by the gradient of some convex function $\Phi(\cdot|\bm{s})$, i.e.,
      \begin{equation}
        \bm{x}_1 = T(\bm{x}_0|\bm{s}) = \nabla \Phi(\bm{x}_0|\bm{s}),
      \end{equation}
      where $\bm{x}_0\sim q_0(\bm{x_0}|\bm{s})$ and $\bm{x}_1\sim q_1(\bm{x_1}|\bm{s})$. In addition, we can construct the conditional probability path or equivalently the flow map as:
      \begin{equation}
        \bm{\phi}_t(\bm{x}_0|\bm{s}) = \bm{x}_0 + t [T(\bm{x}_0|\bm{s}) - \bm{x}_0],
      \end{equation}
      with the associated vector field:
      \begin{equation}\label{eq:opt_monge_VF}
        \bm{u}_t(\bm{x}|\bm{s}) = T(\bm{x}_0|\bm{s}) - \bm{x}_0.
      \end{equation}
      Therefore, the optimal vector field $\bm{v}_t(\bm{x};\bm{\theta} |\bm{s})$ in Eq.~\eqref{obj_SCFM} (i.e., equal to the above Eq.~\eqref{eq:opt_monge_VF}), solves the dynamic optimal transport problem~\eqref{eq:dyn_OT} (condition on $\bm{s}$) between $q_0(\bm{x_0}|\bm{s})$ and $q_1(\bm{x_1}|\bm{s})$.
    \end{proofE}
    \begin{remark}
      If we consider the switching signal $s$ as a dumb variable, then the above result is actually the same as studied in \citet{tong2023conditional, tong2023improving}. However, using a single ODE to solve the dynamic optimal transport problem~\eqref{eq:dyn_OT} may not satisfy certain regularity assumptions. For example, the support of a distribution needs to be connected, which is often not the case in reality as discussed in the Section~\ref{sec:limitations}. On the contrary, to eliminate the singularities, the SFM uses multiple ODEs to solve it, which is conditionally or locally optimal (see the next subsection).
    \end{remark}
    
    In practice, this optimal coupling can be approximated by addressing optimal transport problems within a given data batch \cite{pooladian2023multisample, tong2023conditional, tong2023improving}. Specifically, for each data batch $\left\{ \bm{x}_0^{(k)} \right\}_{k=1}^m \sim q_0(\bm{x}_0|\bm{s})$ and $\left\{ \bm{x}_1^{(k)} \right\}_{k=1}^m \sim q_1(\bm{x}_1|\bm{s})$, the optimal transport problem~\eqref{eq:OT} condition on $\bm{s}$ for the discrete case can be exactly and efficiently resolved using standard solvers, such as the \texttt{POT} \citep[Python Optimal Transport]{flamary2021pot}.
    
    Here, we call these two methods independent SFM (I-SFM) and optimal transport SFM (OT-SFM).
    
    \subsection{Switching Mechanism}\label{subsec:SM}
    Motivated by our observations and theories, we focus on constructing a simple and efficient switching mechanism such that the $q^{\circ}(\bm{s})$ and $q_0(\bm{x}_0|\bm{s})$ (resp., $q_1(\bm{x}_1|\bm{s})$) are both easily sampled for the general source and target distributions. One possible way is to employ the classic clustering methods to partition the empirical source (resp., target) dataset $\bm{X}_0 \sim q_0(\bm{x}_0)$ (resp.,  $\bm{X}_1 \sim q_1(\bm{x}_1)$) into $K_0$ (resp., $K_1$) sets, i.e., $\bm{X}_0^{(1)},...,\bm{X}_0^{(K_0)}$ (resp., $\bm{X}_1^{(1)},...,\bm{X}_1^{(K_1)}$). In addition, we assign each set $\bm{X}_0^{(i)}$ (resp., $\bm{X}_1^{(j)}$) a label $y_0^{(i)}$ (resp., $y_1^{(j)}$) and its weight or mass $\rho_0^{(i)}=|\bm{X}_0^{(i)}|/|\bm{X}_0|$ (resp., $\rho_1^{(j)}=|\bm{X}_1^{(j)}|/|\bm{X}_1|$).
    
    \textbf{General setup.} We then construct the switching mechanism in the following manner:
    \begin{enumerate}%[leftmargin=*, topsep=0mm, itemsep=0mm, parsep=0mm]
      \item $\bm{s}$ is a discrete variable, defined as $\bm{s} := (y_0, y_1)\in\left\{(y_0^{(i)}, y_1^{(j)})|i=1,...,K_0, j=1,...,K_1\right\}$;
      \item  $q^{\circ}(\bm{s}): = q^{\circ}(y_0, y_1)$ is a discrete (joint) distribution, defined as a coupling matrix $P$, satisfying the conservation of mass ($K_0 + K_1$ equality constraints),
      \begin{equation}\label{eq:coupling_matrix}
        \sum_{j=1}^{K_1} P(i, j) = \rho_0^{(i)}, \quad \sum_{i=1}^{K_0} P(i, j) = \rho_1^{(j)},
      \end{equation}
      where the element $P(i, j)\geq 0$ describes the amount of mass flowing from the bin $i$ (or the set $\bm{X}_0^{(i)}$) towards the bin $j$ (or the set $\bm{X}_1^{(j)}$);
      \item $q_0(\bm{x_0}|\bm{s})$ (resp., $q_1(\bm{x_1}|\bm{s})$) is an empirical data distribution available as finite samples, i.e., $\bm{X}_0^{(i)}$ (resp., $\bm{X}_1^{(j)}$).
    \end{enumerate}
    By choosing the different coupling matrix $P$, we induce the different switching signal distributions $q^{\circ}(\bm{s})=q^{\circ}(y_0, y_1)$.
    
    \textbf{Optimal transport setup.} If  $P^{*}$ is the solution of the discrete Kantorovich's optimal transport problem, i.e.,
    \begin{equation}
      P^{*} = \arg \min_{P} \langle C, P \rangle: = \sum_{i, j}C(i, j)P(i, j),
    \end{equation}
    where $C(i, j)$ is the cost of moving a single unit from bin $i$ to bin $j$, then $P^{*}$ has the following property.
    \begin{propositionE}[Extremal solutions \cite{peyre2019computational}][end,restate]\label{thm:extrem_point}
      $P^{*}$ cannot have more than $K_0 + K_1 - 1$ nonzero entries, i.e., $|\{(y_0^{(i)}, y_1^{(j)})|P^{*}(i, j)>0\}| \leq K_0 + K_1 - 1 $.
    \end{propositionE}
    \begin{proofE}
      See Proposition 3.4 in \cite{peyre2019computational}. The extremal property is illustrated in Fig.~\ref{fig_extrem_point}.
      \begin{figure*}[htb]
        \begin{center}
          \centering
          \subfigure{\label{fig_extrem_point_a}}
          \subfigure{\label{fig_extrem_point_b}}
          \includegraphics[width=0.4\textwidth]{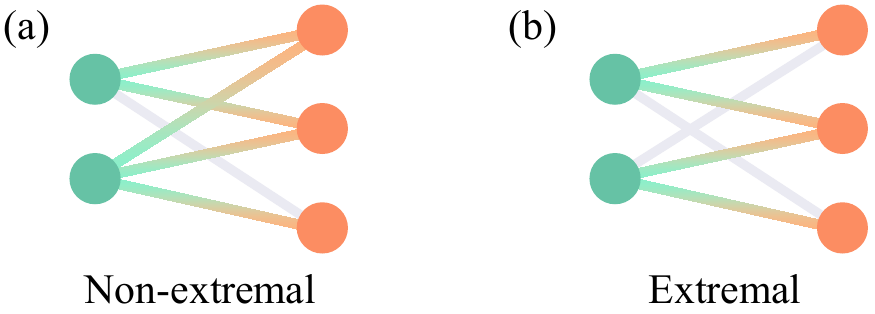}
          \caption{Illustration of the extremal property in Proposition~\ref{thm:extrem_point}.}
          \label{fig_extrem_point}
        \end{center}
      \end{figure*}
    \end{proofE}
    \begin{remark}
      In practice, $C(i, j)$ is simply assigned as either a uniform constant or as a function representing the appropriate distance between the sets $\bm{X}_0^{(i)}$ and $\bm{X}_1^{(j)}$. Furthermore, according to Proposition~\ref{thm:extrem_point}, it is possible to reduce the number of states $\bm{s}=(y_0, y_1)$ from a higher-order complexity of $K_0K_1$ to a linear complexity of $K_0 + K_1 - 1$. \end{remark}
      
      \begin{figure*}[t]
        \begin{center}
        \centerline{\includegraphics[width=0.98\textwidth]{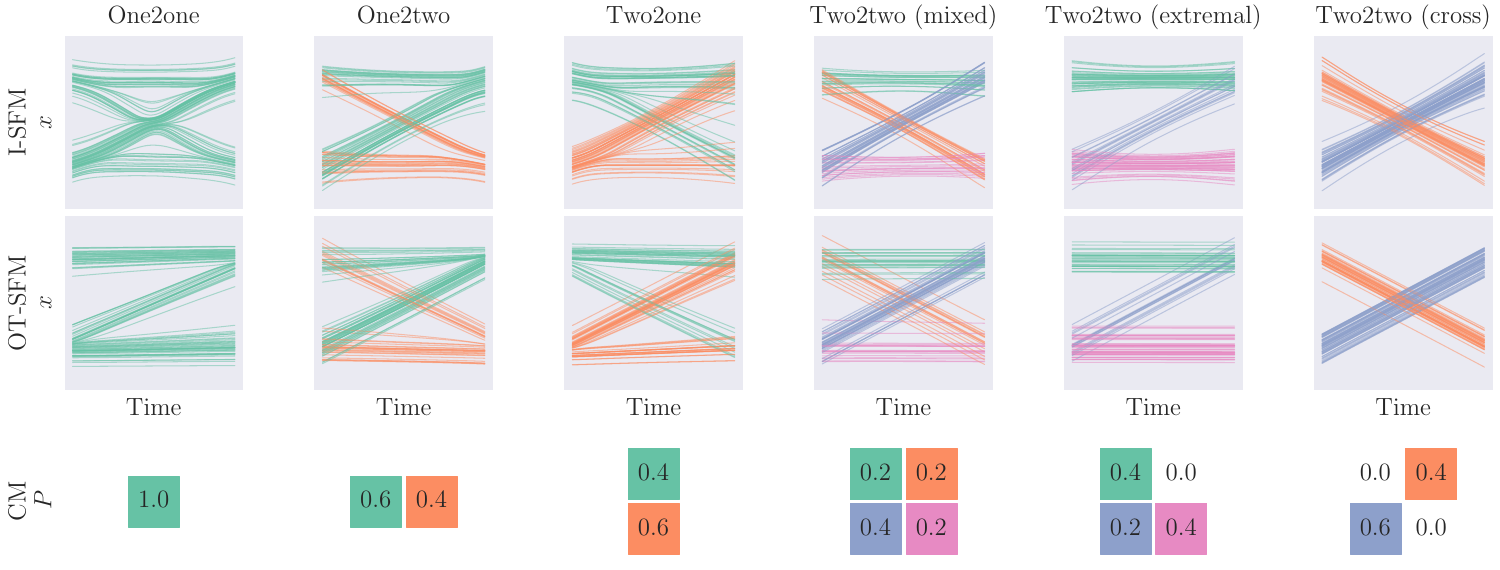}}
          % \vskip -0.1in
          \caption{Trajectories of the I-SFM and the OT-SFM on $2$-d Gaussian mixtures under different coupling matrices $P$ (from left to right). Particularly, in the first column (``one2one'' coupling), the I-SFM and the OT-SFM are the I-CFM and OT-CFM, respectively.}
          \label{fig_2gaussians}
        \end{center}
        % \vskip -0.4in
      \end{figure*}
      
      \section{Related Works}
      \label{sec:related_works}
      \textbf{Switched systems.}  Mathematically, switched systems are hybrid dynamical systems that consist of a family of subsystems and a rule that determines the switching between them \cite{liberzon1999basic, liberzon1999stability, daafouz2002stability, liberzon2003switching}. Typically, the rules can be largely divided into state-dependent and time-dependent switching. It should be pointed out that these switchings occur during the evolution process of a system. In contrast, as shown in Proposition~\ref{thm:Switching_ODEs}, our switching mechanism involves randomly sampling a system, and then keeping it unchanged over time.
      
      \textbf{Conditional generation.}  Class-conditional generation is a common and important task, whose goal is to generate a sample that belongs to a specified class of the target distributions via incorporating the class label into their models \cite{van2016conditional, nguyen2017plug, odena2017conditional, ho2022cascaded}. This can be regarded as a special case of our framework by setting the switching signal as the target label. Since there is a lot of literature on this topic and our goal is to theoretically elucidate the limitations of FM and to eliminate singularities raised by using a single ODE, it is beyond the scope of this paper to have a complete review of the existing literature.
      
      \section{Experiments}
      \label{sec:experiments}
      \textbf{Synthetic datasets.} Figure~\ref{fig_2gaussians} shows the proposed I-SFM and OT-SFM on transporting an $1$-d Gaussian mixture ($2$-modes) to another. It is observed that an appropriate switching rule can eliminate the singularity raised from the heterogeneities of source and target distributions, leading to better regularity. In other words, when the data is sampled near the singularity region, it is inevitable that both the I-SFM and OT-SFM  tend to perform poorly, but our framework is capable of achieving relatively good results. In addition, OT-SFM leads to a straighter flow than I-SFM.
      
      Figure~\ref{fig_Infinite_trained} shows the learned flows of the I-SFM and the OT-SFM on the example of the infinite number of singular points under the optimal coupling in Proposition~\ref{thm:infinite}.
      \begin{figure}[htb]
        % \vskip -0.3in
        \begin{center}
          \centering
          \subfigure{\label{fig_Infinite_a}}
          \subfigure{\label{fig_Infinite_b}}
          \includegraphics[width=0.48\textwidth]{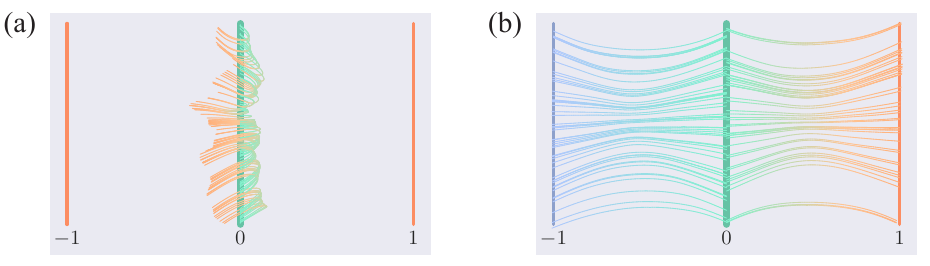}
          % \vskip -0.1in
          \caption{The learned flows of the I-CFM (a) and the I-SFM (one2two) (b) on the example in Proposition~\ref{thm:infinite}.}
          \label{fig_Infinite_trained}
        \end{center}
        % \vskip -0.2in
      \end{figure}
      
      \textbf{CIFAR-10 dataset.} Table~\ref{tab:cifar10} shows the image generation results of our SFM variants on the CIFAR-10 dataset. In contrast with the existing generative models, we, here, consider a general source distribution, a Gaussian
      mixture with two modes, instead of a standard Gaussian distribution. Therefore, we display the results of the I-CFM and OT-CFM as the baselines. Crucially, it is observed that the I-CFM performs poorly on this task due to the mode separation of the source distribution, while it worked well for the standard Gaussian distribution \cite{lipman2022flow, liu2022flow, tong2023conditional, tong2023improving}. In addition, the I-SFM (one2ten) and OT-SFM (one2ten) perform poorly as well, as they all treat the support of the source distribution as one mode. Other SFM variants that explicitly separate the two modes of the source distribution, all perform well even better than the OT-CFM. We note that the OT-SFM did not perform as well as expected in comparison to the I-SFM. We attribute the reason to the switching mechanism that has already alleviated singularities induced by mode separation.
      
      \begin{table}[htb]
        \centering
        % \vskip -0.1in
        \caption{FID results of CFM and SFM on the CIFAR-10 dataset.}
        \vspace*{1em}
        \label{tab:cifar10}
        \resizebox{\linewidth}{!}{    \begin{tabular}{@{}l|ccccccc}
        \toprule
        \toprule
        NFE  & 6 & 8 & 10 & 20 & 40 &  Adap.\\
        \midrule
        I-CFM (I-SFM, one2one) & $144.52$ & $130.49$ & $122.44$ & $106.11$ & $99.19$ & $94.55$\\
        OT-CFM (OT-SFM, one2one) & $176.80$ & $111.09$ & $76.41$ & $26.15$ & $10.90$ & $4.91$\\
        \midrule
        I-SFM (one2ten) & $\mathbf{109.24}$ & $98.47$ & $93.48$ & $83.41$ & $78.33$ & $75.06$\\
        OT-SFM (one2ten) & $122.74$ & $104.19$ & $93.04$ & $73.47$ & $63.94$ & $59.72$\\
        I-SFM (two2one) & $177.99$ & $115.05$ & $78.46$ & $23.91$ & $9.18$ & $5.21$\\
        OT-SFM (two2one) & $185.44$ & $121.21$ & $84.32$ & $28.18$ & $11.11$ & $5.64$\\
        I-SFM (two2ten, mixed) & $132.41$ & $75.83$ & $\mathbf{49.53}$ & $\mathbf{15.60}$ & $\mathbf{6.98}$ & $4.27$\\
        OT-SFM (two2ten, mixed) & $133.27$ & $76.31$ & $49.69$ & $15.50$ & $7.24$ & $4.39$\\
        I-SFM (two2ten, extremal) & $128.55$ & $\mathbf{75.11}$ & $50.12$ & $17.14$ & $8.39$ & $\mathbf{4.22}$\\
        OT-SFM (two2ten, extremal) & $149.50$ & $88.33$ & $58.25$ & $18.59$ & $8.86$ & $4.40$\\
        \bottomrule
        \bottomrule
      \end{tabular}
      }
      % \vskip -0.2in
    \end{table}
    
    \section{Conclusion}
    \label{sec:conclusion}
    In this article, we highlighted and analyzed the limitations of FM, where using a single ODE for generative modeling may inevitably encounter the singularity problem due to the inherent (joint) heterogeneity of the source and/or target distributions. To eliminate singularities, we proposed SFM via switching multiple ODEs, even allowing the intersection of trajectories from distinct ODEs while it is impossible for a single ODE. In addition, a simple and efficient switching mechanism was constructed for effective training and inference. From an orthogonal perspective, our framework can seamlessly integrate with the existing advanced techniques, such as minibatch optimal transport, to further enhance the straightness of each flow. We also demonstrated the exceptional efficacy of the proposed framework by using synthetic and real-world datasets. We hope that our findings and proposed framework can contribute to the advancement of the field of generative modeling.

    \section*{Impact Statement}
    This paper presents work whose goal is to advance the field of Machine Learning. There are many potential societal consequences of our work, none of which we feel must be specifically highlighted here.

    \section*{Acknowledgements}
    Q. Zhu is supported by the China Postdoctoral Science Foundation (No. 2022M720817), by the Shanghai Postdoctoral Excellence Program (No. 2021091), and by the STCSM (Nos. 21511100200, 22ZR1407300, 22dz1200502, and 23YF1402500). W. Lin is supported by the NSFC (Grant No. 11925103), by the STCSM (Grants No. 22JC1402500 and No. 22JC1401402), and by the SMEC (Grant No. 2023ZKZD04). The computational work presented in this article is supported by the CFFF platform of Fudan University.
    
    \nocite{langley00}
    
    \bibliography{example_paper}
    \bibliographystyle{icml2024}
    
    \newpage
    \appendix
    \onecolumn
    
    \textbf{Roadmap.} The structure of the appendix is outlined as follows:
    \begin{itemize}
      \item Appendix \ref{sec:appd_1} presents the existing theoretical results on ODEs and optimal transport.
      \item Appendix \ref{sec:proofs} presents the proofs of our theoretical results.
      \item Appendix \ref{sec:appd_additional_results} presents additional results and discussions from the perspectives of the methodology, algorithm, and experiment.
      \item Appendix \ref{sec:appd_experiment} presents the experimental details for all experiments conducted in the work.
    \end{itemize}
    
    \section{Existing theoretical results}
    \label{sec:appd_1}
    In this section, before presenting the proofs of our theoretical results in the main text as well as the additional results in the appendix, we first review the existing theoretical results on ODEs.
    
    Throughout the section, we consider the IVP:
    \begin{equation}
      \label{appd: eq_ODE}
      \begin{aligned}
        \frac{{\rm d} \bm{x}(t)}{{\rm d} t} &= \bm{u}_t(\bm{x}), \quad t\in[0, 1], \\
        \bm{x}(0) &=\bm{x}_0, \\      \end{aligned}
      \end{equation}
      where $\bm{u}_t(\bm{x}): [0, 1]\times \mathbb{R}^d \rightarrow \mathbb{R}^d$ is a smooth vector field with a bounded Lipschitz constant $L:=\|\bm{u}_t(\bm{x})\|_{\text{Lip}}$. Then, we have the following existence and uniqueness theorem.
      
      \subsection{Properties of ODEs}
      \begin{theorem}[Global existence and uniqueness, \cite{ahmad2015textbook}]
        Suppose that $\bm{x}\in \mathbb{R}^d$, $t\in[0, 1]$, and $\bm{u}_t(\bm{x})$ is continuous and globally lipschitzian in $\mathbb{R}^d$ with respect to $\bm{x}$, then the solution of \eqref{appd: eq_ODE} is unique and defined on all $t\in[0, 1]$.
      \end{theorem}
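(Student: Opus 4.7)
The plan is to recast the IVP as the fixed-point equation
$$\bm{x}(t) = \bm{x}_0 + \int_0^t \bm{u}_s(\bm{x}(s))\,\mathrm{d}s$$
on the Banach space $C([0,1];\mathbb{R}^d)$ and apply the Banach fixed-point theorem to obtain global existence and uniqueness simultaneously in one stroke, thereby avoiding any separate continuation/extension argument.

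First I would introduce the Picard operator $T: C([0,1];\mathbb{R}^d) \to C([0,1];\mathbb{R}^d)$ by $(T\bm{y})(t) := \bm{x}_0 + \int_0^t \bm{u}_s(\bm{y}(s))\,\mathrm{d}s$ and verify it is well-defined: continuity of $T\bm{y}$ follows from continuity of $\bm{u}_t$ in $t$ together with dominated convergence, and the integrand is bounded on $[0,1]$ since $\bm{u}$ is continuous and $\bm{y}([0,1])$ is compact. Any continuously differentiable solution of the IVP is exactly a fixed point of $T$, and conversely any continuous fixed point is $C^1$ by the fundamental theorem of calculus applied to the right-hand side.

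The key step is to equip $C([0,1];\mathbb{R}^d)$ not with the usual sup-norm but with the Bielecki weighted norm $\|\bm{y}\|_\lambda := \sup_{t\in[0,1]} e^{-\lambda t}\|\bm{y}(t)\|$ for a fixed $\lambda > L$. This norm is equivalent to the standard sup-norm, so the space remains complete. Using the global Lipschitz hypothesis, for any $\bm{y}, \bm{z} \in C([0,1];\mathbb{R}^d)$ one has
$$\|(T\bm{y})(t)-(T\bm{z})(t)\| \leq \int_0^t L\,\|\bm{y}(s)-\bm{z}(s)\|\,\mathrm{d}s \leq L\|\bm{y}-\bm{z}\|_\lambda \int_0^t e^{\lambda s}\,\mathrm{d}s \leq \frac{L}{\lambda}\, e^{\lambda t}\,\|\bm{y}-\bm{z}\|_\lambda.$$
Multiplying by $e^{-\lambda t}$ and taking the supremum over $t\in[0,1]$ gives $\|T\bm{y}-T\bm{z}\|_\lambda \leq (L/\lambda)\|\bm{y}-\bm{z}\|_\lambda$, a strict contraction. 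Banach's fixed-point theorem then yields a unique $\bm{x}\in C([0,1];\mathbb{R}^d)$ with $T\bm{x}=\bm{x}$, which is the unique solution of the IVP on the entire interval $[0,1]$.

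The main obstacle, and the reason for the Bielecki-norm device, is obtaining the solution on the full interval $[0,1]$ rather than merely on a small subinterval. A naive Picard iteration in the standard sup-norm only produces a contraction on intervals of length strictly less than $1/L$, which if $L \geq 1$ would force us to iterate a continuation argument; using an exponentially re-weighted norm absorbs this difficulty by making the contraction constant $L/\lambda$ arbitrarily small on all of $[0,1]$ at once. An equivalent but longer route would be local Picard--Lindel\"{o}f followed by an extension argument that rules out finite-time blow-up via Gr\"{o}nwall's inequality applied to $\|\bm{x}(t)\|$, yielding the a priori bound $\|\bm{x}(t)\| \leq (\|\bm{x}_0\|+M)e^{Lt}$ with $M:=\sup_{t\in[0,1]}\|\bm{u}_t(\bm{0})\|$; the Bielecki approach folds both steps into a single application of the contraction principle.
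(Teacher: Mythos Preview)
Your argument is correct: the Bielecki weighted norm turns the Picard operator into a strict contraction on all of $C([0,1];\mathbb{R}^d)$, and the fixed point is the unique global solution. The paper, however, does not supply its own proof of this statement; it is listed under ``Existing theoretical results'' and simply cited from the textbook \cite{ahmad2015textbook}, so there is no in-paper argument to compare against. Your write-up is a clean, self-contained proof that would serve well in place of the bare citation; the alternative route you mention at the end (local Picard--Lindel\"of plus a Gr\"onwall-based continuation) is the one most textbooks take, and your Bielecki approach is the more economical of the two.
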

      
      \begin{theorem}[Non-intersecting trajectories, \cite{coddington1956theory, younes2010shapes, dupont2019augmented}]
        \label{appd:thm_noncross}
        Let $\bar{\bm{x}}(t)$ and $\hat{\bm{x}}(t)$ be two solutions of the ODE~\eqref{appd: eq_ODE} with different initial conditions, i.e., $\bar{\bm{x}}(0)\neq \hat{\bm{x}}(0)$. then for all $t\in (0, 1]$, $\bar{\bm{x}}(t)\neq \hat{\bm{x}}(t)$. Informally, it states that ODE trajectories cannot intersect.
      \end{theorem}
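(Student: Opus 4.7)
The plan is to derive a contradiction with the global existence and uniqueness theorem stated immediately above. Suppose, toward a contradiction, that there is some $t^{\star} \in (0,1]$ with $\bar{\bm{x}}(t^{\star}) = \hat{\bm{x}}(t^{\star}) =: \bm{y}$. Introduce the time-reversed variable $s = t^{\star} - t$ and consider the reversed IVP
\begin{equation*}
  \frac{\mathrm{d} \bm{z}(s)}{\mathrm{d} s} = -\bm{u}_{t^{\star}-s}(\bm{z}(s)), \qquad s \in [0, t^{\star}], \qquad \bm{z}(0) = \bm{y}.
\end{equation*}
The vector field $-\bm{u}_{t^{\star}-s}$ is Lipschitz in $\bm{z}$ with the same constant $L$, so the global existence and uniqueness theorem applies and yields a unique solution $\bm{z}(s)$ on $[0, t^{\star}]$. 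However, both $s \mapsto \bar{\bm{x}}(t^{\star}-s)$ and $s \mapsto \hat{\bm{x}}(t^{\star}-s)$ satisfy this reversed IVP, so by uniqueness they must coincide on the whole interval, giving $\bar{\bm{x}}(0) = \hat{\bm{x}}(0)$ and contradicting the hypothesis.

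An alternative, quantitatively sharper route is to use a Gr\"onwall lower bound directly. Setting $\rho(t) := \|\bar{\bm{x}}(t) - \hat{\bm{x}}(t)\|^2$, one computes
\begin{equation*}
  \frac{\mathrm{d}\rho(t)}{\mathrm{d} t} = 2\bigl\langle \bar{\bm{x}}(t) - \hat{\bm{x}}(t),\; \bm{u}_t(\bar{\bm{x}}(t)) - \bm{u}_t(\hat{\bm{x}}(t)) \bigr\rangle \geq -2L\,\rho(t),
\end{equation*}
by Cauchy--Schwarz and the $L$-Lipschitz property of $\bm{u}_t$ in $\bm{x}$. Gr\"onwall's inequality then gives $\rho(t) \geq \rho(0)\, e^{-2Lt}$, which is strictly positive whenever $\rho(0) > 0$, proving non-intersection on $[0,1]$ and simultaneously providing a quantitative separation estimate (useful alongside the forward Gr\"onwall bound invoked in Proposition~\ref{thm:uniform21}).

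I would present the uniqueness-based argument first since it is conceptually tied to the previous theorem, and then remark on the Gr\"onwall variant to emphasize that the two trajectories not only remain distinct but cannot approach each other faster than exponentially. I do not anticipate a substantive obstacle: the only subtle point is justifying that the time-reversed IVP inherits the Lipschitz regularity needed for uniqueness, which is immediate because negation and the affine reparameterization $s = t^{\star} - t$ preserve the Lipschitz constant in the spatial argument. The hypothesis of global Lipschitz continuity assumed for the cited existence-uniqueness result is exactly what makes this step routine.
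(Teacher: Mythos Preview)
Your proof is correct. The paper does not supply its own proof of this theorem; it is stated as a cited result from \cite{coddington1956theory, younes2010shapes, dupont2019augmented} and used as a black box. Your time-reversal argument is exactly the classical one found in those references (uniqueness for the backward IVP forces the initial conditions to agree), and the Gr\"onwall lower-bound variant is a standard quantitative sharpening. Either argument would be an appropriate way to fill in the omitted proof.
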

      
      \begin{figure*}[htb]
        \begin{center}
          \centering
          \subfigure{\label{fig_Infinite_a}}
          \subfigure{\label{fig_Infinite_b}}
          \subfigure{\label{fig_Infinite_c}}
          \includegraphics[width=0.58\textwidth]{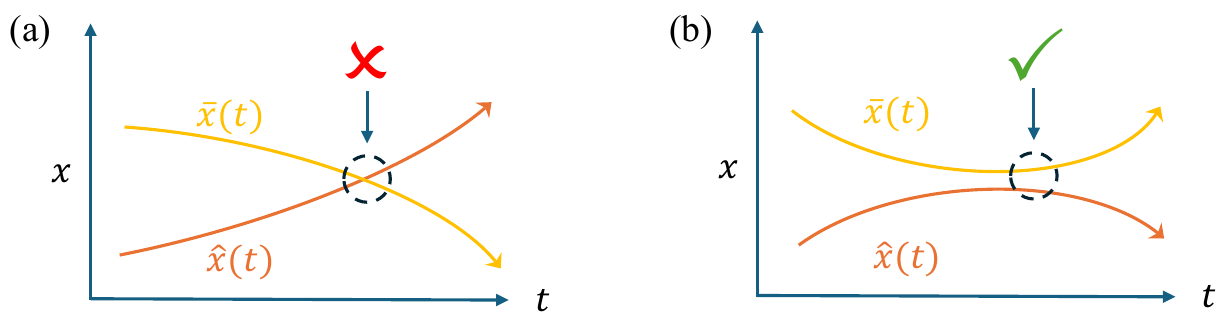}
          \caption{Illustration of the non-intersecting trajectories of ODEs. (a) The two trajectories intersect each other, which is not feasible for ODEs. (b) Any two trajectories cannot intersect each other at any time $t$. }
          \label{fig_Infinite}
        \end{center}
      \end{figure*}
      
      \begin{theorem}[Gronwall's inequality, \cite{howard1998gronwall, dupont2019augmented}]
        Let $\bm{u}_t(\bm{x}): [0, 1]\times \mathbb{R}^d \rightarrow \mathbb{R}^d$ be a continuous function and let $\bar{\bm{x}}(t)$ and $\hat{\bm{x}}(t)$ be two solutions of the ODE~\eqref{appd: eq_ODE}, satisfying the IVP:
        \begin{equation}
          \begin{aligned}
            \frac{{\rm d} \bar{\bm{x}}(t)}{{\rm d} t} &= \bm{u}_t[\bar{\bm{x}}(t)], \quad t\in[0, 1], \quad \bar{\bm{x}}(0) =\bar{\bm{x}}_0, \\      \frac{{\rm d} \hat{\bm{x}}(t)}{{\rm d} t} & = \bm{u}_t[\hat{\bm{x}}(t)], \quad t\in[0, 1], \quad \hat{\bm{x}}(0) =\hat{\bm{x}}_0,
          \end{aligned}
        \end{equation}
        Assume there is a constant $L\geq 0$ such that
        \begin{equation}
          \|\bm{u}_t[\bar{\bm{x}}(t)] - \bm{u}_t[\hat{\bm{x}}(t)]\| \leq L \|\bar{\bm{x}}(t) - \hat{\bm{x}}(t)\|,
        \end{equation}
        Then for $t\in [0, 1]$, we have
        \begin{equation}
          \label{appd:eq_gronwall}
          \|\bar{\bm{x}}(t) - \hat{\bm{x}}(t)\|\leq e^{Lt}\|\bar{\bm{x}}_0 - \hat{\bm{x}}_0\|.
        \end{equation}
      \end{theorem}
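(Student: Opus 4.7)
The plan is to reduce the differential inequality implicit in the two IVPs to the integral form of Gronwall's lemma, and then derive the exponential bound by a standard integrating-factor trick. Since both $\bar{\bm{x}}(t)$ and $\hat{\bm{x}}(t)$ solve the ODE~\eqref{appd: eq_ODE}, I would first rewrite them in integral form as $\bar{\bm{x}}(t)=\bar{\bm{x}}_0+\int_0^t \bm{u}_s[\bar{\bm{x}}(s)]\,{\rm d}s$ and similarly for $\hat{\bm{x}}(t)$, and subtract to obtain
\begin{equation}
\bar{\bm{x}}(t)-\hat{\bm{x}}(t)=\bar{\bm{x}}_0-\hat{\bm{x}}_0+\int_0^t\bigl(\bm{u}_s[\bar{\bm{x}}(s)]-\bm{u}_s[\hat{\bm{x}}(s)]\bigr)\,{\rm d}s.
\end{equation}

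Next, I would take Euclidean norms on both sides, invoke the triangle inequality under the integral, and apply the hypothesized Lipschitz estimate $\|\bm{u}_s[\bar{\bm{x}}(s)]-\bm{u}_s[\hat{\bm{x}}(s)]\|\le L\|\bar{\bm{x}}(s)-\hat{\bm{x}}(s)\|$. Setting $\phi(t):=\|\bar{\bm{x}}(t)-\hat{\bm{x}}(t)\|$ and $a:=\|\bar{\bm{x}}_0-\hat{\bm{x}}_0\|$, this yields the scalar integral inequality
\begin{equation}
\phi(t)\le a+L\int_0^t\phi(s)\,{\rm d}s,\qquad t\in[0,1],
\end{equation}
where $\phi$ is continuous and nonnegative because both solutions are continuous in $t$.

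The remaining work is the classical integral-to-exponential step. I would introduce $\psi(t):=a+L\int_0^t\phi(s)\,{\rm d}s$, so that $\psi$ is $C^1$ with $\psi'(t)=L\phi(t)\le L\psi(t)$, and observe that $\phi(t)\le\psi(t)$ by construction. Multiplying by the integrating factor $e^{-Lt}$ gives $\tfrac{{\rm d}}{{\rm d}t}\bigl(e^{-Lt}\psi(t)\bigr)=e^{-Lt}(\psi'(t)-L\psi(t))\le 0$, so $e^{-Lt}\psi(t)\le\psi(0)=a$ for all $t\in[0,1]$. Rearranging delivers $\phi(t)\le\psi(t)\le a\,e^{Lt}$, which is exactly \eqref{appd:eq_gronwall}.

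The argument has no real obstacle: each step is either a direct consequence of the fundamental theorem of calculus, the triangle inequality, or the hypothesized Lipschitz bound. The only point one must be a bit careful about is justifying that $\phi$ is continuous (so that $\psi$ is differentiable and the integrating-factor calculation is valid); this follows from the continuity of the flows $\bar{\bm{x}}$ and $\hat{\bm{x}}$ guaranteed by the global existence and uniqueness theorem stated immediately above. Everything else is mechanical.
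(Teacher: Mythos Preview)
Your proof is correct and is the standard argument for Gronwall's inequality. Note, however, that the paper does not actually prove this statement: it is presented in Appendix~\ref{sec:appd_1} as an existing theoretical result, cited from \cite{howard1998gronwall, dupont2019augmented}, and is only invoked later (in the proof of Proposition~\ref{thm:uniform21}, bulletin~\ref{thm1_it3}) to control the Lipschitz constant of the flow map. So there is nothing to compare against; you have simply supplied the textbook derivation that the paper takes for granted.
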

      
      \begin{theorem}[Homeomorphism, \cite{younes2010shapes, dupont2019augmented}]
        \label{appd:thm_hom}
        Consider the flow map $\bm{\phi}_t(\bm{x}_0)$ of the ODE~\eqref{appd: eq_ODE}. Then, for all $t\in[0, 1]$,  $\bm{\phi}_t(\bm{x}_0)$ is a homomorphism, i.e.,
        \begin{enumerate}
          \item $\bm{\phi}_t$ is continuous;
          \item $\bm{\phi}_t$ is a bijection;
          \item $\bm{\phi}_t^{-1}$ is continuous.
        \end{enumerate}
      \end{theorem}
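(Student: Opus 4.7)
The plan is to establish the three properties by leveraging results already recorded earlier in the appendix, namely global existence and uniqueness, non-intersecting trajectories, and Gronwall's inequality. The key observation is that property (1) and property (3) both reduce to Gronwall-type estimates, while property (2) splits cleanly into injectivity (from non-intersection) and surjectivity (from reversing time).

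First I would handle continuity. Fix $t \in [0,1]$ and take two initial conditions $\bar{\bm{x}}_0, \hat{\bm{x}}_0 \in \mathbb{R}^d$. Applying Gronwall's inequality \eqref{appd:eq_gronwall} with the Lipschitz constant $L = \|\bm{u}_t(\bm{x})\|_{\text{Lip}}$ yields
\begin{equation}
\|\bm{\phi}_t(\bar{\bm{x}}_0) - \bm{\phi}_t(\hat{\bm{x}}_0)\| \leq e^{L t}\|\bar{\bm{x}}_0 - \hat{\bm{x}}_0\|,
\end{equation}
so $\bm{\phi}_t$ is in fact $e^{Lt}$-Lipschitz, which is more than enough for continuity.

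Next I would establish the bijection claim. Injectivity is immediate from Theorem~\ref{appd:thm_noncross}: if $\bar{\bm{x}}_0 \neq \hat{\bm{x}}_0$, then the two corresponding trajectories never coincide at any $t \in (0,1]$, so in particular $\bm{\phi}_t(\bar{\bm{x}}_0) \neq \bm{\phi}_t(\hat{\bm{x}}_0)$. For surjectivity, given any target point $\bm{y} \in \mathbb{R}^d$, consider the time-reversed IVP
\begin{equation}
\frac{\mathrm{d} \tilde{\bm{x}}(s)}{\mathrm{d} s} = -\bm{u}_{t-s}\bigl(\tilde{\bm{x}}(s)\bigr), \quad s\in[0,t], \quad \tilde{\bm{x}}(0) = \bm{y}.
\end{equation}
The vector field $-\bm{u}_{t-s}$ is still continuous and globally Lipschitz in the spatial variable with the same constant $L$, so the global existence and uniqueness theorem delivers a unique solution on $[0,t]$. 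Setting $\bm{x}_0 := \tilde{\bm{x}}(t)$ and noting that $s \mapsto \tilde{\bm{x}}(t-s)$ then solves the forward ODE with initial value $\bm{x}_0$, uniqueness forces $\bm{\phi}_t(\bm{x}_0) = \bm{y}$. Thus $\bm{\phi}_t$ is onto.

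Finally, continuity of $\bm{\phi}_t^{-1}$ is obtained from the same Gronwall estimate applied to the reversed flow: since $\bm{\phi}_t^{-1}(\bm{y})$ is exactly the terminal value $\tilde{\bm{x}}(t)$ of the time-reversed IVP above, the analogous inequality gives
\begin{equation}
\|\bm{\phi}_t^{-1}(\bar{\bm{y}}) - \bm{\phi}_t^{-1}(\hat{\bm{y}})\| \leq e^{L t}\|\bar{\bm{y}} - \hat{\bm{y}}\|,
\end{equation}
so $\bm{\phi}_t^{-1}$ is Lipschitz and in particular continuous. I expect the only delicate step to be the surjectivity argument, since one must carefully verify that the time-reversed ODE enjoys the same Lipschitz regularity so that global existence on $[0,t]$ applies; once that is in place, uniqueness glues the reversed trajectory to a forward one and the conclusion is immediate.
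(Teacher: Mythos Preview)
Your proof is correct and complete. The paper, however, does not actually prove this theorem: it is listed in Appendix~\ref{sec:appd_1} as an existing result quoted from \cite{younes2010shapes, dupont2019augmented}, with no argument supplied beyond the citation (and a remark that the flow is in fact a diffeomorphism). So you have gone further than the paper itself by assembling a self-contained proof from the other quoted results (global existence/uniqueness, Theorem~\ref{appd:thm_noncross}, and Gronwall's inequality~\eqref{appd:eq_gronwall}). Your decomposition---Gronwall for Lipschitz continuity of $\bm{\phi}_t$, non-intersection for injectivity, time-reversal plus global existence for surjectivity, and Gronwall on the reversed flow for continuity of $\bm{\phi}_t^{-1}$---is the standard textbook route and is exactly how one would reconstruct the proof from the cited references.
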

      \begin{remark}
        More precisely, the flow map $\bm{\phi}_t(\bm{x}_0)$ of the ODE~\eqref{appd: eq_ODE} is a diffeomorphism \cite{younes2010shapes}, but we will not use this stronger property in our proofs.
      \end{remark}
      
      \subsection{Properties of optimal transport}
      \begin{theorem}[Nondecreasing map, \cite{santambrogio2015optimal}]
        Given $q_0, q_1 \in \mathcal{P}(\mathbb{R})$, suppose that $q_0$ is atomless\footnote{For every $x\in\mathbb{R}$, the probabilistic measure on this single point $x$ is equal to zero.}. Then, there exists a unique nondecreasing map $T_{\rm{mon}}:\mathbb{R} \rightarrow \mathbb{R}$ such that $(T_{\rm{mon}})_{\#}q_0=q_1$.
      \end{theorem}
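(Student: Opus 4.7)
The plan is to prove this classical 1D transport result via the cumulative distribution function (CDF) construction. Let $F_i(x) := q_i((-\infty, x])$ for $i \in \{0, 1\}$ be the CDFs of $q_0$ and $q_1$. The atomless hypothesis on $q_0$ is equivalent to $F_0$ being continuous on $\mathbb{R}$. Define the generalized (right-continuous) inverse
\begin{equation}
F_1^{-1}(u) := \inf\{x \in \mathbb{R} : F_1(x) \geq u\}, \quad u \in (0, 1),
\end{equation}
which is nondecreasing. The candidate map is
\begin{equation}
T_{\rm{mon}}(x) := F_1^{-1}(F_0(x)).
\end{equation}

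For existence, I would proceed in three short steps. First, observe that $T_{\rm{mon}}$ is nondecreasing as a composition of two nondecreasing functions. Second, verify the push-forward identity $(T_{\rm{mon}})_{\#} q_0 = q_1$ by showing that if $X \sim q_0$, then $U := F_0(X)$ is uniformly distributed on $[0, 1]$ (this is the classical probability integral transform and it is precisely here that atomlessness of $q_0$, equivalently continuity of $F_0$, is essential); then the standard identity $F_1^{-1}(U) \sim q_1$ for $U \sim \mathcal{U}(0,1)$ finishes the argument. Equivalently, one checks $q_0(\{x : T_{\rm{mon}}(x) \leq y\}) = F_1(y)$ for every continuity point $y$ of $F_1$ by using the equivalence $F_1^{-1}(u) \leq y \iff u \leq F_1(y)$.

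For uniqueness, suppose $T : \mathbb{R} \to \mathbb{R}$ is any nondecreasing map with $T_{\#} q_0 = q_1$. The key observation is that for any nondecreasing $T$, the preimage of $(-\infty, y]$ is of the form $(-\infty, \alpha(y)]$ or $(-\infty, \alpha(y))$ for some $\alpha(y) \in \mathbb{R} \cup \{\pm \infty\}$. Using the push-forward condition at each continuity point $y$ of $F_1$, one obtains $F_0(\alpha(y)) = F_1(y)$, which (combined with monotonicity and the continuity of $F_0$) forces $T(x) = F_1^{-1}(F_0(x))$ for $q_0$-a.e. $x$. The continuity of $F_0$ is used to rule out pathologies where $T$ could disagree with $T_{\rm{mon}}$ on a set of positive $q_0$-measure.

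The main obstacle is the uniqueness step, since the inverse $F_1^{-1}$ may be only generalized and flat regions of $F_1$ can, in principle, allow different nondecreasing $T$ to produce the same push-forward on a $q_0$-null set. This is circumvented exactly because $q_0$ is atomless: jumps of $F_1$ (i.e., atoms of $q_1$) correspond to preimage sets for $T$ that must have the correct $q_0$-mass, and on flat pieces of $F_1$ the redefinition of $T$ happens on a $q_0$-null set, which is why uniqueness is \emph{$q_0$-almost everywhere} rather than pointwise. Throughout I would rely only on elementary properties of CDFs and their generalized inverses, with no measure-theoretic machinery beyond the definition of push-forward.
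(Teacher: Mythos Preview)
The paper does not give its own proof of this statement: it is listed in Appendix~A.2 under ``Existing theoretical results'' and is simply cited from \cite{santambrogio2015optimal} without argument. Your CDF-based construction $T_{\rm mon}=F_1^{-1}\circ F_0$ together with the probability integral transform is exactly the standard proof one finds in that reference, and your outline for both existence and ($q_0$-a.e.) uniqueness is correct. There is nothing to compare against here, since the paper treats the result as background.
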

      \begin{lemma}[Monotonic property, \cite{santambrogio2015optimal}]
        \label{appd:lem_mon}
        Let $\gamma \in \Pi(q_0, q_1)$ be a transport plan between two measures $q_0, q_1 \in \mathcal{P}(\mathbb{R})$. Suppose that it satisfies the property,
        \begin{equation}\label{appd:eq_mon}
          \begin{aligned}
            (x_0, x_1), (x_0^{\prime}, x_1^{\prime}) \in \mathrm{Spt}(\gamma),\\
            x_0 < x_0^{\prime} \implies  x_1<  x_1^{\prime}.
          \end{aligned}
        \end{equation}
        Then, we have $\gamma=\gamma_{\rm{mon}}$. In particular, there is a unique $\gamma$ satisfying \eqref{appd:eq_mon}. Moreover, if $q_0$ is atomless, then $\gamma=\gamma_{T_{\rm{mon}}}$.
      \end{lemma}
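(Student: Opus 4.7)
The plan is to show that the monotonicity of $\mathrm{Spt}(\gamma)$ forces the joint cumulative distribution of $\gamma$ to coincide with the upper Fr\'echet--Hoeffding bound $\min(F_0, F_1)$, where $F_i$ denotes the CDF of $q_i$. Since a probability measure on $\mathbb{R}\times\mathbb{R}$ is determined by its CDF, this will pin down $\gamma$ uniquely and identify it with the comonotonic coupling $\gamma_{\rm mon}$.

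First I would fix $(x_0, x_1)\in\mathbb{R}^2$ and consider the four rectangles cut out by the vertical line $\{x_0\}\times\mathbb{R}$ and the horizontal line $\mathbb{R}\times\{x_1\}$. Writing $H(x_0,x_1):=\gamma((-\infty,x_0]\times(-\infty,x_1])$, the marginal constraints give $H(x_0,x_1)\le F_0(x_0)$ and $H(x_0,x_1)\le F_1(x_1)$, so $H\le\min(F_0,F_1)$ is automatic; the content is the reverse inequality.

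The key step is the following dichotomy forced by \eqref{appd:eq_mon}: either $\gamma((-\infty,x_0]\times(x_1,\infty))=0$ or $\gamma((x_0,\infty)\times(-\infty,x_1])=0$. Indeed, suppose for contradiction both rectangles contained support points $(a,b)$ and $(a',b')$, so $a\le x_0<a'$ and $b'\le x_1<b$. Since $a<a'$, the hypothesis \eqref{appd:eq_mon} forces $b<b'$, contradicting $b>x_1\ge b'$. In the first case, $H(x_0,x_1)=\gamma((-\infty,x_0]\times\mathbb{R})=F_0(x_0)$ and the absence of mass in the upper-left rectangle combined with $H\le F_1(x_1)$ yields $F_0(x_0)\le F_1(x_1)$, so $H(x_0,x_1)=\min(F_0(x_0),F_1(x_1))$; the symmetric conclusion holds in the second case. (A small subtlety, which is the main thing I would be careful with, is that $\mathrm{Spt}(\gamma)$ is closed while the rectangles involve half-open sides; this is handled by noting that vanishing on an open rectangle extends to its closure under the other marginal since any boundary mass would be subtracted from the complementary rectangle without affecting the CDF identity.)

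Having established $H(x_0,x_1)=\min(F_0(x_0),F_1(x_1))$ for every $(x_0,x_1)$, uniqueness of the measure with a given CDF on $\mathbb{R}^2$ yields $\gamma=\gamma_{\rm mon}$. For the final ``moreover'' clause, when $q_0$ is atomless, the CDF $\min(F_0,F_1)$ is realized by the pushforward of $q_0$ under the nondecreasing rearrangement $T_{\rm mon}=F_1^{-1}\circ F_0$ (quantile map), so $\gamma_{\rm mon}=\gamma_{T_{\rm mon}}=(\mathrm{id}\times T_{\rm mon})_{\#}q_0$, completing the proof. The main obstacle, as noted, is the careful bookkeeping of boundary points so that the support-monotonicity hypothesis is converted into the sharp equality for $H$ rather than a strict inequality.
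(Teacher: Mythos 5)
Your proof is correct and is essentially the standard argument for this result (it is the proof of Lemma 2.8 in the cited Santambrogio reference, via the Fr\'echet--Hoeffding bound $H=\min(F_0,F_1)$ and the quadrant dichotomy); the paper itself states this lemma as a known result and gives no proof of its own. One minor note: the boundary ``subtlety'' you flag is not actually an issue, since the dichotomy argument only needs that any Borel set of positive $\gamma$-measure meets $\mathrm{Spt}(\gamma)$, which holds for the half-open rectangles exactly as for open ones.
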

      \begin{theorem}[Optimality of the monotone map, \cite{santambrogio2015optimal}]
        \label{appd:thm_mon}
        Let $h:\mathbb{R}\rightarrow \mathbb{R}^{+}$ be a strictly convex function and $q_0, q_1 \in \mathcal{P}(\mathbb{R})$ be probability measures. Consider the cost $c(x_0, x_1) = h(x_1 - x_0)$ and suppose that the Kantorovich problem~\eqref{eq_KP} has a finite value. Then, it has a unique solution, which is given by $\gamma_{\rm{mon}}$. In the case where $q_0$ is atomless, this optimal plan is induced by the map $T_{\rm{mon}}$.
      \end{theorem}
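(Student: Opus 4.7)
\medskip

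\noindent\textbf{Proof proposal for Theorem~\ref{appd:thm_mon}.} The plan is to show that any optimal transport plan must be supported on a monotone set, whence Lemma~\ref{appd:lem_mon} identifies it with $\gamma_{\rm mon}$; strict convexity of $h$ will force the monotonicity by a quadrilateral swapping inequality.

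First I would establish the key pointwise inequality: for any $u<v$ in the source line and any $s<t$ in the target line,
\begin{equation*}
h(s-u) + h(t-v) \;<\; h(t-u) + h(s-v).
\end{equation*}
To see this, set $\phi(x) := h(t-x) - h(s-x)$ and differentiate; since $h'$ is strictly increasing (strict convexity of $h$) and $s-x < t-x$, one gets $\phi'(x) = h'(s-x) - h'(t-x) < 0$, so $\phi$ is strictly decreasing, giving $\phi(u) > \phi(v)$, which rearranges to the claim. This is the one-dimensional Monge (submodularity) condition induced by strict convexity of $h$.

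Next I would argue that any optimal plan $\gamma$ is supported on a monotone set, i.e.\ satisfies condition~\eqref{appd:eq_mon}. Suppose, towards a contradiction, that $\mathrm{Spt}(\gamma)$ contains two anti-monotone pairs $(x_0,x_1)$ and $(x_0',x_1')$ with $x_0<x_0'$ but $x_1>x_1'$. Choosing small disjoint neighbourhoods $U\times V$ and $U'\times V'$ around these points on which the coordinates remain strictly ordered, one restricts $\gamma$ to these neighbourhoods, extracts a common amount of mass $\varepsilon>0$ via disintegration, and swaps: construct $\tilde\gamma$ by removing $\varepsilon$ units of mass from $\gamma\!\mid_{U\times V}$ and $\gamma\!\mid_{U'\times V'}$ and replacing it by the ``crossed'' measure on $U\times V'$ and $U'\times V$ with identical marginals. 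Marginal preservation ensures $\tilde\gamma\in\Pi(q_0,q_1)$. The pointwise inequality above, integrated against the swap, yields $\int c\,d\tilde\gamma < \int c\,d\gamma$, contradicting optimality. Hence $\mathrm{Spt}(\gamma)$ satisfies~\eqref{appd:eq_mon}, and Lemma~\ref{appd:lem_mon} gives $\gamma=\gamma_{\rm mon}$; since optimal plans exist (the value is finite and $\Pi(q_0,q_1)$ is tight and weakly compact with lower semicontinuous cost), this also proves existence and uniqueness.

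Finally, if $q_0$ is atomless, then by the nondecreasing map theorem cited just above Lemma~\ref{appd:lem_mon}, there exists a (unique) nondecreasing $T_{\rm mon}$ with $(T_{\rm mon})_{\#}q_0=q_1$; the plan $\gamma_{T_{\rm mon}}:=(\mathrm{id},T_{\rm mon})_{\#}q_0$ trivially satisfies~\eqref{appd:eq_mon}, so by Lemma~\ref{appd:lem_mon} it coincides with $\gamma_{\rm mon}$, and the optimal plan is induced by the map $T_{\rm mon}$.

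The step I expect to be the main obstacle is the rigorous swapping argument in the measure-theoretic setting: the neat discrete version has to be promoted to general Borel measures via a careful disintegration (choosing $U,V,U',V'$ so that coordinates stay strictly ordered and that $\gamma(U\times V)$, $\gamma(U'\times V')>0$) and a mass-preserving construction of $\tilde\gamma$. The cleanest alternative, which I would fall back on if the direct swap becomes cumbersome, is to invoke the standard fact that optimal plans have $c$-cyclically monotone support and verify that for $c(x_0,x_1)=h(x_1-x_0)$ with strictly convex $h$, $2$-cyclical monotonicity already reduces to~\eqref{appd:eq_mon} via the same pointwise inequality.
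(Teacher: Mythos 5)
The paper does not prove this theorem at all: it is stated in the appendix as an imported classical result, cited to \cite{santambrogio2015optimal} (it is essentially Theorem~2.9 there), so there is no in-paper proof to compare against. Judged on its own, your sketch is the standard textbook argument and is sound: the quadrilateral inequality $h(s-u)+h(t-v)<h(t-u)+h(s-v)$ for $u<v$, $s<t$ is exactly the strict submodularity that forces monotone support, and combining it with Lemma~\ref{appd:lem_mon} and the existence of a minimizer (tightness plus lower semicontinuity of the cost) gives uniqueness and the identification with $\gamma_{\rm mon}$; the atomless case then follows from the nondecreasing-map theorem as you say. Two small refinements: your derivation of the key inequality via $\phi'(x)=h'(s-x)-h'(t-x)$ assumes $h$ is differentiable, which a strictly convex function need not be everywhere --- but the inequality follows directly from strict convexity, since $s-u$ and $t-v$ lie strictly between $s-v$ and $t-u$ and have the same sum (a mean-preserving contraction), so no derivative is needed. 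And your own fallback is in fact the cleanest route and the one taken in the cited reference: invoke $c$-cyclical monotonicity of the support of any optimal plan (which the paper records separately as Theorem~\ref{appd:thm_useful}) and observe that the $N=2$ case of \eqref{appd:eq_Cyclical}, combined with the quadrilateral inequality, already rules out anti-monotone pairs in the support; this bypasses the hands-on mass-swapping construction that you correctly flag as the delicate step.
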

      \begin{theorem}[Cyclical monotonicity, \cite{villani2009optimal}]
        \label{appd:thm_useful}
        Let $\mathcal{X}, \mathcal{Y}$ be arbitrary sets, and $c:\mathcal{X}\times \mathcal{Y} \rightarrow (-\infty, +\infty]$ be a function. A subset $\Gamma \subseteq \mathcal{X}\times \mathcal{Y}$ is said to be $c$-cyclically monotone if, for any $N\in\mathbb{N}$, and any family $(\bm{x}_1, \bm{y}_1),...,(\bm{x}_N, \bm{y}_N)$ of points in $\Gamma$, holds the inequality
        \begin{equation} \label{appd:eq_Cyclical}
          \sum\limits_{i=1}^{N} c(\bm{x}_i, \bm{y}_i) \leq \sum\limits_{i=1}^{N} c(\bm{x}_i, \bm{y}_{i+1})
        \end{equation}
        (with the convention $y_{N+1} = y_1$). A transport plan is said to be $c$-cyclically monotone if it is concentrated on a $c$-cyclically monotone set.
      \end{theorem}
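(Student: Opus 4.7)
The statement labeled ``Theorem (Cyclical monotonicity)'' as worded contains no proposition to establish: both clauses are pure definitions (``is said to be''). The first clause introduces the term $c$-cyclically monotone for a subset $\Gamma \subseteq \mathcal{X} \times \mathcal{Y}$ via the inequality \eqref{appd:eq_Cyclical}, and the second extends the terminology to transport plans through the notion of being concentrated on such a set. There is no ``if and only if'', no equality, and no existential claim; accepting the nomenclature is the whole content. A formal ``proof'' in the usual sense is therefore vacuous, and the plan reduces to verifying that the definition is well-posed (e.g.\ that the case $N = 1$ is trivially satisfied by any pair, and that the cyclic convention $y_{N+1} = y_1$ makes the inequality closed under relabeling).

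If, however, the companion assertion customarily stated under this heading in \citet{villani2009optimal} is what is intended --- namely, that every optimal transport plan $\pi$ for a lower semicontinuous cost $c$ has $c$-cyclically monotone support --- then the plan I would follow is the standard contradiction-and-swap argument. First I would assume $\pi$ is optimal yet $\mathrm{supp}(\pi)$ fails \eqref{appd:eq_Cyclical}, so there exist points $(\bm{x}_i, \bm{y}_i)_{i=1}^N \in \mathrm{supp}(\pi)$ with $\sum_{i=1}^N c(\bm{x}_i, \bm{y}_{i+1}) < \sum_{i=1}^N c(\bm{x}_i, \bm{y}_i)$. By lower semicontinuity of $c$ and the definition of support, this strict inequality persists on small product neighborhoods $U_i \times V_i$ of $(\bm{x}_i, \bm{y}_i)$ each carrying positive $\pi$-mass $m_i$.

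Next I would produce a strictly cheaper competitor $\pi'$ by a cyclic mass swap: for $\varepsilon < \min_i m_i$, disintegrate each restriction $\pi|_{U_i \times V_i}$ along its $\mathcal{X}$-marginal, and reglue the conditional $\mathcal{Y}$-component onto $V_{i+1}$ instead of $V_i$, scaled so that exactly $\varepsilon$ mass traverses the loop $U_1 \to V_2 \to U_2 \to V_3 \to \cdots \to V_1$. Careful accounting of the in- and out-flows on each box ensures that $\pi'$ is nonnegative and has the same marginals as $\pi$. The cyclic strict inequality then gives $\int c \, \mathrm{d}\pi' < \int c \, \mathrm{d}\pi$, contradicting optimality. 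The principal technical obstacle in the plan is exactly this measure-theoretic bookkeeping of the swap when $\pi$ possesses no density, which is handled by the disintegration theorem on Polish spaces; the geometric picture itself is elementary.
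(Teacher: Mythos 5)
You are right that the statement, as written, is purely a definition and carries no propositional content; the paper treats it the same way, listing it among ``existing theoretical results'' with only a citation and an informal remark (that a $c$-cyclically monotone plan cannot be improved by perturbations), and supplies no proof. Your well-posedness observations and your optional sketch of the companion assertion --- that an optimal plan is concentrated on a $c$-cyclically monotone set, proved by the standard cyclic mass-swap contradiction with disintegration handling the bookkeeping --- are the standard argument and are consistent with how the paper later invokes cyclical monotonicity (in the Caffarelli counterexample), so there is nothing to correct.
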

      \begin{remark}
        Informally, a $c$-cyclically monotone plan cannot be improved via perturbations. Consequently, it follows intuitively that an optimal plan should adhere the $c$-cyclical monotonicity \cite{villani2009optimal}.
      \end{remark}
      \begin{theorem}[\cite{villani2009optimal}]
        \label{appd:thm_useful}
        Let $(\mathcal{X}, \mu)$ and $(\mathcal{Y}, \nu)$ be any two Polish probability spaces, let $T$ be a continuous map $\mathcal{X} \rightarrow \mathcal{Y}$, and let $\pi = (\mathrm{Id}, T)_{\#}\mu$ be the associated transport map. Then, for each $\bm{x} \in Spt(\mu)$, the pair $(\bm{x}, T(\bm{x}))$ belongs to the support of $\pi$.
      \end{theorem}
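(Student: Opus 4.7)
The plan is to unfold the definition of support and use the push-forward formula together with continuity of $T$. Concretely, $(\bm{x}, T(\bm{x})) \in \mathrm{Spt}(\pi)$ is equivalent to showing that every open neighborhood of $(\bm{x}, T(\bm{x}))$ in $\mathcal{X} \times \mathcal{Y}$ has strictly positive $\pi$-measure. Since the product topology on a Polish space has a basis of rectangular open sets, it suffices to verify this for every basic open set $U \times V$ with $U \subseteq \mathcal{X}$ open containing $\bm{x}$ and $V \subseteq \mathcal{Y}$ open containing $T(\bm{x})$.

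First I would apply the push-forward identity to rewrite
\begin{equation}
\pi(U \times V) = \mu\bigl(\{\bm{x}' \in \mathcal{X} : (\bm{x}', T(\bm{x}')) \in U \times V\}\bigr) = \mu\bigl(U \cap T^{-1}(V)\bigr).
\end{equation}
Next I would invoke the continuity of $T$ at $\bm{x}$: since $V$ is open and $T(\bm{x}) \in V$, there exists an open neighborhood $W \subseteq \mathcal{X}$ of $\bm{x}$ with $T(W) \subseteq V$, i.e.\ $W \subseteq T^{-1}(V)$. Then $U \cap W$ is an open set containing $\bm{x}$ and $U \cap W \subseteq U \cap T^{-1}(V)$, so by monotonicity of $\mu$,
\begin{equation}
\pi(U \times V) = \mu\bigl(U \cap T^{-1}(V)\bigr) \;\geq\; \mu(U \cap W).
\end{equation}

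Finally, since $\bm{x} \in \mathrm{Spt}(\mu)$ and $U \cap W$ is an open neighborhood of $\bm{x}$, the definition of support gives $\mu(U \cap W) > 0$, hence $\pi(U \times V) > 0$. As $U \times V$ was an arbitrary basic open neighborhood of $(\bm{x}, T(\bm{x}))$, this proves that $(\bm{x}, T(\bm{x})) \in \mathrm{Spt}(\pi)$. There is no serious obstacle here: the only substantive input is the continuity of $T$, which is exactly what allows a neighborhood of $\bm{x}$ to be sent into any prescribed neighborhood of $T(\bm{x})$; Polishness is used only implicitly to ensure that supports are well-defined and that the product topology admits a countable basis of open rectangles, but the pointwise argument above does not even require the latter.
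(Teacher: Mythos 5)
Your proof is correct and complete: the reduction to basic open rectangles, the push-forward identity $\pi(U\times V)=\mu(U\cap T^{-1}(V))$, and the use of continuity plus $\bm{x}\in\mathrm{Spt}(\mu)$ are exactly the standard argument for this fact. The paper itself states this theorem only as a cited background result from Villani without giving a proof, so there is nothing to contrast with; your argument is the one found in the reference and has no gaps.
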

      
      \section{Proofs}\label{sec:proofs}
      
      \printProofs
      
      \begin{figure}[htb]
        \begin{center}
          \centering
          \includegraphics[width=0.55\textwidth]{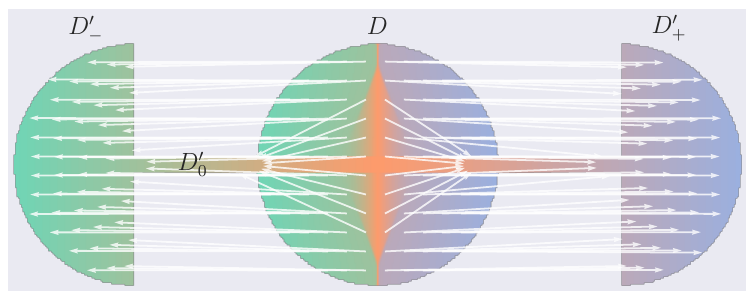}
          \caption{Illustration of the Caffarelli's counterexample.}
          \label{fig_counterexample}
        \end{center}
      \end{figure}
      
      \section{Additional results}
      \label{sec:appd_additional_results}
      
      \textbf{Regularity.} Here, we present a famous result from the study of optimal transport regularity. Specifically, \citet{caffarelli1992regularity} demonstrated that the optimal transport map can be discontinuous when the target measure is supported on a non-convex domain. Therefore, the ideal flow map $\bm{\varphi}_1[\bm{x}(0);\bm{\theta}]$ induced by the optimal transport map is discontinuous as well. This is formalized in the following result.
      \begin{proposition}[Caffarelli's counterexample]\label{thm:fig_counterexample}
        Suppose the source and target distributions $q_0$ and $q_1$ are defined on $2$-d disc $D$ and dumbbell $D^{\prime}$ (see Fig.~\ref{fig_counterexample}, and both normalized to be a probability measure), respectively. Consider the (dynamic) optimal transport problem as defined in Eq.~\eqref{eq:OT} (or Eq.~\eqref{eq:dyn_OT}). If the NODE~\eqref{eq_NODE} exactly solves the problem, then the flow map $\bm{\varphi}_1[\bm{x}(0);\bm{\theta}]$ is discontinuous, as shown in Fig.~\ref{fig_counterexample}.
      \end{proposition}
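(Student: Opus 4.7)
The plan is to reduce the statement to a purely geometric obstruction for the Brenier map and then invoke the classical Caffarelli regularity obstruction caused by non-convexity of the target support. First, since $q_0$ is absolutely continuous with respect to Lebesgue measure on the disc $D$ and has finite second moment, Brenier's theorem yields a unique optimal Monge map $T:D\to D^{\prime}$ of the form $T=\nabla\Phi$ for some convex potential $\Phi$, which attains the infimum in \eqref{eq:OT}. By the Benamou--Brenier identity \eqref{eq:dyn_OT}, any NODE that exactly solves the dynamic OT problem has terminal flow map $\bm{\varphi}_1[\bm{x}(0);\bm{\theta}]$ coinciding with $T$ almost everywhere on $D$ (as in Proposition~\ref{thm:dyn_OT}, specialized to this example). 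Hence it suffices to show that $T=\nabla\Phi$ cannot be continuous.

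Next, I would argue by contradiction. Assume $\bm{\varphi}_1$, hence $T$, is continuous on $D$; then $\Phi\in C^1(D)$. Decompose the dumbbell as $D^{\prime}=B_L\sqcup N\sqcup B_R$, where $B_L$ and $B_R$ are the two bells and $N$ is the thin neck, arranged so that $|B_L|=|B_R|$ and the neck can be taken arbitrarily thin. By mass balance, $A_L:=T^{-1}(B_L)$, $A_R:=T^{-1}(B_R)$, and $A_N:=T^{-1}(N)$ partition $D$ with $|A_L|=|A_R|\approx |D|/2$ and $|A_N|$ arbitrarily small.

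Third, I would use connectedness of $D$ together with continuity of $T$: since $D$ is path-connected and $T$ is continuous, any partition of $D$ into the open sets $T^{-1}(\mathrm{int}\,B_L)$ and $T^{-1}(\mathrm{int}\,B_R)$ must be separated by a curve $\Gamma\subset A_N$. A planar isoperimetric bound forces $\mathrm{length}(\Gamma)\ge c>0$ independent of the neck's thinness, because $\Gamma$ must separate the disc into two pieces each of area roughly $|D|/2$. However, $T(\Gamma)\subset N$ has diameter that shrinks to zero as the neck thins. Combining this with the cyclical monotonicity of $\nabla\Phi$ (Theorem~\ref{appd:thm_useful}) yields a contradiction: the gradient of a convex function, restricted to a curve of bounded-below length lying in a convex planar domain, cannot collapse to an arbitrarily small image while still being consistent with the push-forward relation $T_\#q_0=q_1$ on the bells.

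The main obstacle is making the last step quantitatively rigorous; the cleanest route is to use the Monge--Ampère equation $\det(D^2\Phi)=q_0/q_1\circ\nabla\Phi$ together with a barrier/touching argument at the neck of $D^{\prime}$, which is precisely the content of Caffarelli's original counterexample~\cite{caffarelli1992regularity}. Rather than reproducing that delicate PDE argument, I would cite it and emphasize that the discontinuity of $T$ (equivalently, of $\bm{\varphi}_1$) is an immediate corollary, thereby matching Fig.~\ref{fig_counterexample}.
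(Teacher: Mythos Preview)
Your overall architecture---reduce to the Brenier map via Benamou--Brenier, assume continuity, and derive a contradiction from mass balance plus connectedness---matches the paper's. The difference is in how the contradiction is extracted. The paper (following Villani's presentation of Caffarelli's example) does not use a separating curve, an isoperimetric bound, or the Monge--Amp\`ere equation. Instead it gives an elementary two-point violation of monotonicity: from continuity and mass balance some $\bm{x}_0\in D$ must land near an end of the thin tube $D_0'$ with a large vertical displacement $\bm{x}_1-\bm{x}_0$, while arbitrarily close neighbors $\bm{x}_0'$ (chosen just below $\bm{x}_0$) must go into a bell with nearly horizontal displacement $\bm{x}_1'-\bm{x}_0'$. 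This forces $\langle \bm{x}_0-\bm{x}_0',\,\bm{x}_1-\bm{x}_1'\rangle<0$, i.e.\ $|\bm{x}_0-\bm{x}_1|^2+|\bm{x}_0'-\bm{x}_1'|^2>|\bm{x}_0-\bm{x}_1'|^2+|\bm{x}_0'-\bm{x}_1|^2$, which by Theorem~\ref{appd:thm_useful} contradicts $c$-cyclical monotonicity of the optimal coupling for the quadratic cost.

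Your route through ``curve of bounded-below length with image of vanishing diameter'' is not, by itself, a contradiction to cyclical monotonicity (a monotone map can certainly collapse sets of positive diameter), which is exactly the gap you flag; you then propose to close it by the Monge--Amp\`ere barrier argument or by citing Caffarelli. That is valid, but heavier than needed. The paper's two-point inner-product argument sidesteps all of this and is self-contained once one has the basic optimality$\Rightarrow$monotonicity fact; you may want to replace your third step with that direct computation.
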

      \begin{proof}
        For clarity and completeness, we mainly provide the idea of its proof, and more detailed proof can be found in \cite{villani2009optimal}.
        
        Here, the cost function $c(x_0, x_1)$ is the Euclidean distance. Consider the upper regions of the ball $D$, the left half-ball $D_{-}^{\prime}$, and the right half-ball $D_{+}^{\prime}$, respectively. Then, a large fraction (say 0.99) of the mass in $D$ has to go to $D_{-}^{\prime}$ (if it lies on the left) or to $D_{+}^{\prime}$ (if it lies on the right). Due to the homomorphism of the flow map, it should preserve the topology of the source support, i.e., the connectedness. Therefore, the map should transport the mass of a small subset in $D$ into the tube $D_{0}^{\prime}$ connecting the left half-ball $D_{-}^{\prime}$ and the right half-ball $D_{+}^{\prime}$. Moreover, there is some point $\bm{x}_0 \in D$ such that the corresponding target point $\bm{x}_1$, obtained by the transport map, is close to the left end of the tube $D_{0}^{\prime}$.
        
        In particular, without loss of generality, we assume that $\bm{x}_1 - \bm{x}_0$ has a large downward component. From the convergence in probability, many of the neighbors ${\bm{x}}_0^{\prime}$ of $\bm{x}_0$ have to be transported to, say, $D_{-}^{\prime}$, with nearly horizontal displacements ${\bm{x}}_1^{\prime} - {\bm{x}}_0^{\prime}$. If such an ${\bm{x}}_0^{\prime}$ is picked below $\bm{x}_0$, we shall have,
        \begin{equation}
          \langle  {\bm{x}}_0 - {\bm{x}}_0^{\prime},  {\bm{x}}_1 - {\bm{x}}_1^{\prime} \rangle<0,
        \end{equation}
        or equivalently,
        \begin{equation}
          |{\bm{x}}_0 - {\bm{x}}_1|^2 + |{\bm{x}}_0^{\prime} - {\bm{x}}_1^{\prime}|^2 >
          |{\bm{x}}_0 - {\bm{x}}_1^{\prime}|^2 + |{\bm{x}}_0^{\prime} - {\bm{x}}_1|^2.
        \end{equation}
        If the flow map $\bm{\varphi}_1[\bm{x}(0);\bm{\theta}]$ is continuous, in view of Theorem~\ref{appd:thm_useful} this contradicts the $c$-cyclical monotonicity of the optimal coupling. The conclusion is that when the tube $D_{0}^{\prime}$ is ``thin'' enough, the optimal flow map $\bm{\varphi}_1[\bm{x}(0);\bm{\theta}]$ is discontinuous.
      \end{proof}
      \begin{remark}
        In the above Caffarelli's counterexample, though the target space (i.e., the thin dumbbell) of $q_1$ (extremely smooth, constant) is connected but not convex, the optimal transport map can be discontinuous as well. Significantly, since the (convex) source space and the (non-convex) target space have the same topology, one can naturally employ an ODE to construct a dynamic transport map (not optimal) while preserving the topology over time.
      \end{remark}
      
      \textbf{Joint clustering.} As shown in the Fig. \ref{fig_uniform22}(a), the source and target distributions both have two disconnected supports, while the corresponding optimal transport coupling clearly leads to a joint partition with three joint clusters separated by two singular points. Motivated by this observation, we employ the FM to achieve a joint clustering of the source and target pair data points. Specifically, we first use the well-trained FM model $\bm{v}_t(\bm{x};\bm{\theta})$ to construct the pair $(\bm{x}_0, {\bm{x}}_1)$ by solving the IVP~\eqref{eq_NODE}, where $\bm{x}(0) = \bm{x}_0\sim q_0(\bm{x}_0)$ is the initial state and $\bm{x}(1) = {\bm{x}}_1 \sim p_1({\bm{x}}_1) \approx q_1({\bm{x}}_1)$ is the numerically sampled final state. Then, one can use the classical clustering algorithms to partition the constructed pair datasets into $K$ sets, i.e., $\bm{X}^{(1)},...,\bm{X}^{(K)}$. Moreover, it partitions the both supports of the source and target distributions into  $K$ sets as well, i.e., $K_0=K_1=K$, and we, without loss of generality, assume that the set $\bm{X}_0^{(i)}$ and the set $\bm{X}_1^{(i)}$ are paired with equal mass $\rho_0^{(i)} = \rho_1^{(i)}=\rho^{(i)}$. Therefore, the coupling matrix $P$ (as defined above) is a diagonal matrix of the explicit form $P=\text{diag} \{\rho^{(1)}, ..., \rho^{(K)}\}$. Here, we consider the ODE couplings induced by I-CFM and OT-CFM, and we call the corresponding SFM  the IC-SFM and the OTC-SFM, respectively. Notably, under the ODE couplings,  it does not require searching the optimal transport couplings within a data bach for the IC-SFM and the OTC-SFM.
      
      As shown in Fig.~\ref{fig_2gaussians_joint_clustering}, different from the transportation way illustrated in Fig.~\ref{fig_2gaussians}, both the IC-SFM and OTC-SFM can address the singularity problem as well and transport each source data point to the target space in a determined way. However, the identification of the singularity in high-dimensional situations, such as image datasets, is a challenging task, which is out of the scope of this work. We therefore leave it as one of our future directions.
      
      \begin{figure}[htb]
        \begin{center}
          \includegraphics[width=0.55\textwidth]{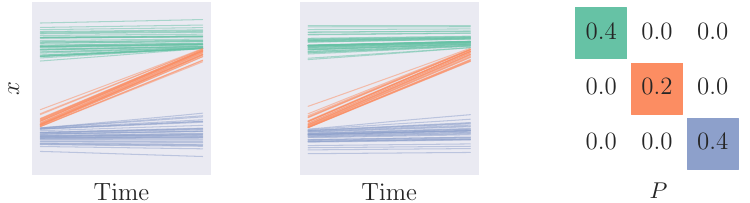}
          \caption{Trajectories of the IC-SFM (left) and the OTC-SFM (middle) with $3$ joint clusters, and the coupling matrix $P$ (right).}
          \label{fig_2gaussians_joint_clustering}
        \end{center}
      \end{figure}
      
      \textbf{Gaussian flow.} In the original work of CFM \cite{tong2023conditional,tong2023improving}, they introduce a more general conditional probability path, defined by:
      \begin{equation}\label{eq:CFM_sigma}
        \begin{aligned}
          p_t(\bm{x}|\bm{z}) &= \mathcal{N}[\bm{x}|(1-t)\bm{x}_0 + t\bm{x}_1, \sigma^2],\\
          \bm{u}_t(\bm{x}|\bm{z}) &= \bm{x}_1 - \bm{x}_0,
        \end{aligned}
      \end{equation}
      also referred as to the Gaussian flow, where the pair $\bm{z} := (\bm{x}_0, \bm{x}_1)$ is sampled from the Independent coupling or optimal transport coupling, and $\sigma$ is typically chosen as a small value, serving as a regularization for optimization. Notably, the CFM mentioned in the main text can be regarded as a special case of Eq.~\eqref{eq:CFM_sigma}, i.e., $\sigma=0$ or equivalently $p_t(\bm{x}|\bm{z}) =\delta_{(1-t)\bm{x}_0 + t\bm{x}_1}$ (the Dirac mass). Naturally, we can generalize our SCFM framework in a similar way, yielding a more general conditional probability path for SCFM:
      \begin{equation}\label{eq:CFM_sigma}
        \begin{aligned}
          p_t(\bm{x}|\bm{z}, \bm{s}) &= \mathcal{N}[\bm{x}|(1-t)\bm{x}_0 + t\bm{x}_1, \sigma^2],\\
          \bm{u}_t(\bm{x}|\bm{z}, \bm{s}) &= \bm{x}_1 - \bm{x}_0,
        \end{aligned}
      \end{equation}
      where the pair $\bm{z} := (\bm{x}_0, \bm{x}_1)$ is now sampled from the Independent coupling or optimal transport coupling condition on the switching signal $\bm{s}$, i.e., $\bm{z}\sim q(\bm{z}|\bm{s})$. Notably, we set $\sigma=0$ for all experiments and leave the impact of different values of $\sigma$ on the experimental results for future study, which is out of scope in this work.
      
      \textbf{Switching coupling.} For easy implementation of the switching coupling, we make use of the batch data to construct an implicit switching signal distribution. Specifically, given batch data, we employ the following construction:
      \begin{enumerate}
        \item Sample $\left\{ \bm{x}_0^{(k)} \right\}_{k=1}^m \sim q_0(\bm{x}_0)$ and $\left\{ \bm{x}_1^{(k)} \right\}_{k=1}^m \sim q_1(\bm{x}_1)$ with the corresponding label sets $\left\{ {y}_0^{(k)} \right\}_{k=1}^m$ and $\left\{ {y}_1^{(k)} \right\}_{k=1}^m$, respectively.
        \item Construct the batch coupling matrix $P^{m} \in \mathbb{R}^{m\times m}$ using the batch label data as follows:
        \begin{equation}\label{eq:Pm}
          P^{m}(i, j) = \frac{P\left({y}_0^{(i)}, {y}_1^{(j)}\right)}{\text{Count}\left({y}_0^{(i)}, {y}_1^{(j)}\right)},    \end{equation}
          where $P\in\mathbb{R}^{K_0\times K_1}$ is a predefined coupling matrix (see Eq.~\eqref{eq:coupling_matrix}), and $\text{Count}\left({y}_0^{(i)}, {y}_1^{(j)}\right)$ represents the number of the pair $\left({y}_0^{(i)}, {y}_1^{(j)}\right)$ within the batch data,
          \item Sample the switching signal $\bm{s}:=\left({y}_0^{(i)}, {y}_1^{(j)}\right)$ based on the coupling matrix $P^{m}$, and obtain the corresponding data pair $\bm{z}^{(i, j)}:=\left(\bm{x}_0^{(i)}, \bm{x}_1^{(j)}\right)$, resulting in $n$ samples, defined by:
          \begin{equation}
            \left\{ \left(\hat{\bm{x}}_0^{(k)}, \hat{\bm{x}}_1^{(k)}, \hat{y}_0^{(k)}, \hat{y}_1^{(k)} \right)\right\}_{k=1}^n,
          \end{equation}
          where $n$ need not match the batch size $m$, but for simplicity in our experiments we choose $n=m$,
          \item (Optional for OT-SFM) Construct the joint distribution $\pi_{\text{batch}}(\bm{z}|\bm{s})$ induced by the optimal coupling in terms of the Euclidean distance between data points of the pair $\hat{\bm{z}}^{(i, j)}:=\left(\hat{\bm{x}}_0^{(i)}, \hat{\bm{x}}_1^{(j)}\right)$  within the same switching signal data pair set $\left\{(\bm{z}, \bm{s})|\bm{s}=\left(\hat{{y}}_0^{(i)}, \hat{{y}}_1^{(j)}\right)\right\}$,
          \item (Optional for OT-SFM) Sample from the joint distribution $\pi_{\text{batch}}(\bm{z}|\bm{s})$, yielding the samples
          \begin{equation}
            \left\{ \left(\bar{\bm{x}}_0^{(k)}, \bar{\bm{x}}_1^{(k)}, \bar{y}_0^{(k)}, \bar{y}_1^{(k)} \right)\right\}_{k=1}^l,
          \end{equation}
          where $l$ need not match the batch size $m$ or $n$, but for simplicity in our experiments we choose $l=n=m$.
        \end{enumerate}
        
        \begin{proposition}\label{thm:switching_sampling}
          The joint distribution $q^{m}(\bm{s}):=q^{m}(y_0, y_1)$ induced by the batch coupling matrix $P^{m}$ constructed in the above Steps [1-3] has the same distribution $q^{\circ}(\bm{s}):=q^{\circ}(y_0, y_1)$ induced by the coupling matrix $P$, i.e., $q^{m}(y_0, y_1) = q^{\circ}(y_0, y_1)=P(y_0, y_1)$.
        \end{proposition}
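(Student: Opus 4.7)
The plan is to compute $q^{m}(y_0, y_1)$ directly from the sampling recipe in Steps 1--3 and observe that the normalizing factor in the denominator of Eq.~\eqref{eq:Pm} cancels against the number of batch index pairs carrying the same label pair. By Step 3, the switching signal $\bm{s}$ is obtained by first drawing a single index pair $(i,j)\in\{1,\dots,m\}^{2}$ from $P^{m}$ and then reading off its labels, so the induced distribution on label pairs is
\begin{equation}
q^{m}(y_0, y_1) \;=\; \sum_{(i,j):\, y_0^{(i)} = y_0,\, y_1^{(j)} = y_1} P^{m}(i, j).
\end{equation}

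The key structural fact I would use is that the index set indexing this sum has cardinality exactly $\text{Count}(y_0, y_1)$, directly from the definition of $\text{Count}$. Substituting the definition of $P^{m}$ from Eq.~\eqref{eq:Pm}, every summand on the right is the constant $P(y_0, y_1)/\text{Count}(y_0, y_1)$, so the sum telescopes in one line to $P(y_0, y_1)$, which is the desired identity $q^{m}(y_0, y_1) = q^{\circ}(y_0, y_1)$. As a sanity check I would also sum this over $(y_0, y_1)$ to confirm that $P^{m}$ is a valid probability matrix, i.e.\ $\sum_{i,j} P^{m}(i,j)=1$, which follows by grouping the double sum by labels and using the same cancellation together with $\sum_{y_0,y_1} P(y_0,y_1)=1$.

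The only real obstacle, which is conceptual rather than computational, is the edge case in which some label pair $(y_0, y_1)$ with $P(y_0, y_1) > 0$ is absent from the batch, i.e.\ $\text{Count}(y_0, y_1) = 0$. In that case $P^{m}$ assigns no mass to that label pair and the identity fails for a single realization of the batch. The natural remedy is to read the claim either in expectation over the batch draw, where $\mathbb{E}[\,\text{Count}(y_0, y_1)\,]$ is strictly positive whenever the corresponding product of marginals is, or under the mild assumption that the batch covers the support of $P$; I would state this assumption explicitly (it is harmless in the practical setting where $K_0$ and $K_1$ are moderate and $m$ is large) and then the chain of equalities above yields the proposition immediately.
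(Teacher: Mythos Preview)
Your argument is correct and essentially identical to the paper's: the paper shows $\mathbb{E}_{q^{m}}f=\mathbb{E}_{q^{\circ}}f$ for arbitrary test functions $f$, but the only content of that computation is precisely your cancellation of the $\text{Count}(y_0,y_1)$ summands against the denominator in Eq.~\eqref{eq:Pm}. Your discussion of the $\text{Count}(y_0,y_1)=0$ edge case is an observation the paper does not make explicit; otherwise the two proofs coincide.
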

        \begin{proof}
          For an arbitrary test function $f(y_0, y_1)$, by construction of the batch coupling matrix $P^{m}$ (see Eq.~\eqref{eq:Pm}),  it holds
          \begin{equation}
            \begin{aligned}
              \mathbb{E}_{q^{m}(y_0, y_1)} f(y_0, y_1)
              &=  \sum \limits_{y_0, y_1} q^{m}(y_0, y_1) f(y_0, y_1)\\
              &=  \sum \limits_{y_0, y_1} \sum\limits_{y_0^{(i)}=y_0, y_1^{(j)}=y_1}  \frac{P\left({y}_0^{(i)}, {y}_1^{(j)}\right)}{\text{Count}\left({y}_0^{(i)}, {y}_1^{(j)}\right)} f(y_0, y_1)\\
              & = \sum \limits_{y_0, y_1} P\left({y}_0, {y}_1\right) f(y_0, y_1)\\
              & = \mathbb{E}_{q^{\circ}(y_0, y_1)} f(y_0, y_1).
            \end{aligned}
          \end{equation}
          The proof is complete.
        \end{proof}
        
        \textbf{Algorithms.} For a better understanding of the proposed SFM, we provide the main pseudocode as shown in Algorithm~\ref{alg:sfm} as well as the pseudocode for constructing the switching coupling as shown in Algorithm~\ref{alg:sc} with the optional module for OT-SFM (see Algorithm~\ref{alg:otsfm}). For inference,  the pseudocode is displayed in Algorithm~\ref{alg:inference}. The code is available at \href{https://github.com/zhuqunxi/switched-flow-matching}{\texttt{https://github.com/zhuqunxi/switched-flow-matching}}. %We will release our source code for all experiments once the quality of the code is met.
        
        \definecolor{commentcolor}{RGB}{110,154,155}   \definecolor{inputcolor}{RGB}{255, 105, 180}
        \newcommand{\PyComment}[1]{\ttfamily\textcolor{commentcolor}{\# #1}}  \newcommand{\PyInput}[1]{\ttfamily\textcolor{inputcolor}{\# #1}}
        \newcommand{\PyCode}[1]{\ttfamily\textcolor{black}{#1}}
        \begin{algorithm}[h]
          \caption{\PyCode{Switched Flow Matching}}
          \begin{flushleft}
            \PyInput{Input: Data=$\{\bm{x}_0, \bm{x}_1, y_0, y_1\}$, sampled from $q_0(\bm{x}_0)$ and $q_1(\bm{x}_1)$} \\
            \PyInput{Output: Model $\bm{v}_t(\bm{x};\bm{\theta}|\bm{s})$ for the vector field given the switching signal $\bm{s}=(y_0, y_1)$} \\
            \PyCode{Initialize Model} \PyComment{Model can be arbitrary neural network structure}\\
            \PyCode{Initialize OT} \PyComment{OT=True/False indicates whether OT-SFM is adopted}\\
            \PyCode{Initialize $P$} \PyComment{$P$ is the predefined coupling matrix}\\
            \PyCode{for $\bm{x}_0, \bm{x}_1, y_0, y_1$ in Data:}
            \PyComment{Generate batch data} \\  \PyCode{~~~~Optimizer.zero\_grad()} \\
            \PyCode{~~~~{\color{blue}$\bm{x}_0, \bm{x}_1, y_0, y_1$ = Sample\_plan($\bm{x}_0$, $\bm{x}_1$, $y_0$, $y_1$, $P$, OT)}}
            \PyComment{Construct switching coupling} \\
            \PyCode{~~~~$\bm{s}$ = $(y_0, y_1)$} \PyComment{Switching signal} \\
            \PyCode{~~~~t = torch.rand(batchsize)}~~\PyComment{Randomly sample  $t \in$ [0,1]} \\
            \PyCode{~~~~\color{blue}{Loss = \{ Model[(1-t)*$\bm{x}_0$ + t*$\bm{x}_1$,~t,~$\bm{s}$] - ($\bm{x}_1$ - $\bm{x}_0$) \}.pow(2).mean()}} \\
            \PyCode{~~~~Loss.backward()} \\
            \PyCode{~~~~Optimizer.step()} \\
            \PyCode{return Model}
          \end{flushleft}
          \label{alg:sfm}
        \end{algorithm}
        
        \begin{algorithm}[h]
          \caption{\PyCode{Switching Coupling}}
          \begin{flushleft}
            \PyCode{def {\color{blue} Sample\_plan($\bm{x}_0$, $\bm{x}_1$, $y_0$, $y_1$, $P$, OT)}}
            \PyComment{Construct switching coupling} \\
            ~~~~\PyComment{$\bm{x}_0$, $\bm{x}_1$: shape=(m, dim); ${y}_0$, ${y}_1$: shape=(m, )}\\
            \PyCode{~~~~m, $K_0$, $K_1$ = len($y_0$)}, $P$.shape[0], $P$.shape[1] \\
            \PyCode{~~~~$P^m(i, j)=P(y_0[i], y_1[j])/\text{Count}(y_0[i], y_1[j])$}~~\PyComment{Construct the batch coupling matrix} \\
            ~~~~\PyComment{Sample from batch coupling matrix (the following three lines)}\\
            \PyCode{~~~~choices = np.random.choice(m*m, p=$P^m$.flatten(), size=m)} \\
            \PyCode{~~~~index0, index1 = np.divmod(choices, m)} \\
            \PyCode{~~~~$\bm{x}_0$, $\bm{x}_1$, $y_0$, $y_1$ = $\bm{x}_0$[index0], $\bm{x}_1$[index1], $y_0$[index0], $y_1$[index1]} \\
            \PyCode{~~~~if not OT:}~~\PyComment{Return the sampled data for I-SFM}\\
            \PyCode{~~~~~~~~return $\bm{x}_0$, $\bm{x}_1$, $y_0$, $y_1$}\\
            ~~~~\PyComment{Sample from $\pi_{\text{batch}}$ induced by OT-SFM (the following three lines)}\\
            \PyCode{~~~~Y\_pair = $y_0$ * $K_1$ + $y_1$ }~~\PyComment{Construct the switching signal}\\
            \PyCode{~~~~M\_mask = Y\_pair[:, None] == Y\_pair[None, :] }~~\PyComment{Construct the mask matrix for sampling the data within the same switching signal data pair set}\\
            \PyCode{~~~~\color{blue}index0, index1 = Sample\_plan\_with\_mask($\bm{x}_0$, $\bm{x}_1$,  M\_mask) }~~\PyComment{Sample from $\pi_{\text{batch}}$}\\
            \PyCode{return $\bm{x}_0$[index0], $\bm{x}_1$[index1], $y_0$[index0], $y_1$[index1]}
          \end{flushleft}
          \label{alg:sc}
        \end{algorithm}
        
        \begin{algorithm}[h]
          \caption{\PyCode{Sample from $\pi_{\text{batch}}$}}
          \begin{flushleft}
            \PyCode{def {\color{blue} Sample\_plan\_with\_mask($\bm{x}_0$, $\bm{x}_1$, M\_mask)}}
            \PyComment{Sample from $\pi_{\text{batch}}$} \\
            ~~~~\PyComment{$\bm{x}_0$, $\bm{x}_1$: shape=(m, dim);  M\_mask: shape=(m, m)}\\
            ~~~~\PyComment{The following lines are adapted from the source code \url{https://github.com/atong01/conditional-flow-matching/blob/main/torchcfm/optimal_transport.py}}\\
            ~~~~\PyCode{m = len($y_0$)}\\
            ~~~~\PyCode{a, b = pot.unif(m), pot.unif(m)}~~\PyComment{Uniform weights for each sample}\\
            ~~~~\PyCode{D = torch.cdist(x0, x1) ** 2}~~\PyComment{Distance matrix}\\
            ~~~~\PyCode{D = D + (1 - M\_mask) * 1e10}~~\PyComment{Refined distance matrix by assigning large values for miss match pairs}\\
            ~~~~\PyCode{p = pot.emd(a, b, M.detach().cpu().numpy())}~~\PyComment{Return the OT matrix} \\
            ~~~~\PyCode{choices = np.random.choice(m*m, p=p, size=m)}~~\PyComment{Sample from the OT matrix}\\
            ~~~~\PyCode{return np.divmod(choices, m)} \\
          \end{flushleft}
          \label{alg:otsfm}
        \end{algorithm}
        
        \begin{algorithm}[h]
          \caption{\PyCode{Inference}}
          \begin{flushleft}
            \PyInput{Input: Model, source Data=$\{\bm{x}_0, y_0\}$}, and coupling matrix $P$ \\
            \PyInput{Output: Generated samples} \\
            \PyCode{Samples = []}\\
            \PyCode{for $\bm{x}_0, y_0$ in Data:}\\
            ~~~~\PyCode{$y_1$ = sampler($y_0$, P)} ~~\PyComment{Sample $y_1$ based on $P$ and $y_0$} \\
            ~~~~\PyCode{$\bm{s}$ = ($y_0$, $y_1$)}~~\PyComment{Sampled switching signal}\\
            ~~~~\PyCode{$\bm{x}_1$ = model.ODE\_solver($\bm{x}_0$, $\bm{s}$)}\\
            ~~~~\PyCode{Samples.append($\bm{x}_1$)} ~~\PyComment{Generate a target sample via the ODE solver}\\
            \PyCode{return Samples}\\
          \end{flushleft}
          \label{alg:inference}
        \end{algorithm}
        
        \textbf{Additional experimental results.} The additional experimental results are summarized as follows.
        \begin{itemize}
          \item \textbf{Figure~\ref{fig_2uniform_shrink}}: An experimental illustration of the limitations as pointed out in the bulletins~\ref{thm1_it1}~\&~\ref{thm1_it2} of Proposition~\ref{thm:uniform21} as well as in the Corollary~\ref{thm:corol}. Specifically, when the source data is near the singularity point, the corresponding flow is ``out of distribution'', i.e., transporting the source data to the target space but out of the target distribution (see the second row of the Fig.~\ref{fig_2uniform_shrink}). Particularly, when the source and target distributions degenerate into the Dirac masses (see the right column of the Fig.~\ref{fig_2uniform_shrink}), the CFM cannot solve the transport problem, since the flow map of the CFM is determined due to the existence and uniqueness theorem. On the contrary, our proposed SFM can address these limitations.
          \item \textbf{Figure~\ref{fig_2uniform_shift}}: An experimental illustration of the limitations as pointed out in the bulletins~\ref{thm1_it3} of Proposition~\ref{thm:uniform21}. As the gap between the two modes of target distribution increases, data near the singularity will be mapped to far data points. In other words, the flow map's output (or value) can vary rapidly with small changes in the source singular point, signifying a large Lipchitz constant of the flow map or a worse generalization. In addition, this can present challenges in terms of the numerical stability of the ODE solver. On the contrary, our proposed SFM does not have these issues.
          \item \textbf{Table~\ref{tab:infinite}}: An experimental illustration of the example of an infinite number of singular points in the Propositioin~\ref{thm:infinite}. The learned flow maps of the CFM, including the I-CFM and the OT-CFM, cannot solve the transportation problem due to the singularity, while our proposed SFM performs very well.
          \item \textbf{Table~\ref{tab:gaussian2checkerboard_nfe}}: Generated target samples from the trained CFM and the SFM at $t$ = 1 for different NFE on transporting the $2$-d Gaussian mixture (8 modes) to the checkerboard. It is observed that both the I-CFM and the OT-CFM face significant challenges in effectively learning the transportation flows. The SFM works well when using the adaptive step size ODE solver, but the I-SFM performs much worse than the OT-SFM for small NFEs due to the straightness issue of the learned flows.
          \item \textbf{Table~\ref{tab:gaussian2checkerboard_itr}}: Generated target samples from the trained CFM and the SFM at $t$ = 1 for different training iterations on transporting the $2$-d Gaussian mixture (8 modes) to the checkerboard. Notably, the SFM demonstrates improved efficiency in training and faster convergence compared to the CFM, owing to its superior regularity.
          \item \textbf{Figure~\ref{fig_true_samples}}: True samples from the source distribution Gaussian mixture and the target distribution, i.e., the CIFAR-10 image dataset. Different from the existing works \cite{lipman2022flow, liu2022flow, pooladian2023multisample, tong2023conditional, tong2023improving}, to illustrate the efficiency of our proposed SFM under the heterogeneity in both $q_0$ and $q_1$, we here consider the Gaussian mixture with $2$ modes as the source distribution instead of the standard Gaussian distribution.
          \item \textbf{Table~\ref{tab:cifar10_samples_I} (resp., Table~\ref{tab:cifar10_samples_OT})}: Generated target samples from the trained I-CFM (resp., OT-CFM) and the I-SFM (resp., OT-SFM) at $t$ = 1 for different NFE on transporting the Gaussian mixture ($2$ modes) to the CIFAR-10 image dataset.
        \end{itemize}
        
        \newpage
        \begin{figure*}[htb]
          \begin{center}
            \centerline{\includegraphics[width=0.98\textwidth]{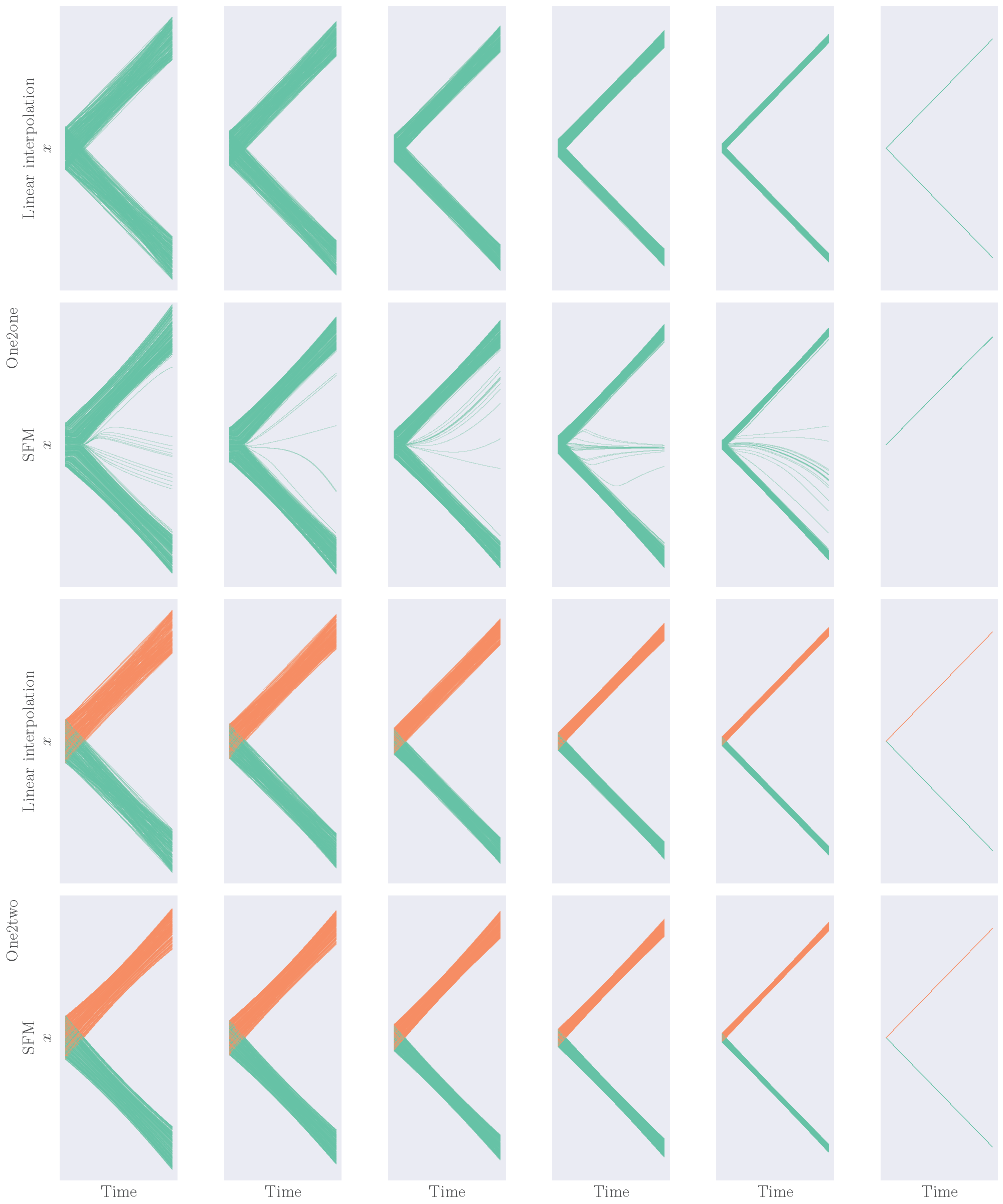}}
            % \vskip -0.1in
            \caption{Linear interpolation of the source and target data (sampled from the unimodal and bimodal uniform distributions), and the trajectories trained from two kinds of SFM, i.e., one2one ($1$ switching signal, equivalently CFM) and one2two ($2$ switching signals), via shrinking the size of the supports (from the left column to the right column). In the right column, the supports of the source and target distributions are both shrunk to a Dirac measure and a $2$-mixture Dirac measure, respectively. }
            \label{fig_2uniform_shrink}
          \end{center}
          % \vskip -0.2in
        \end{figure*}
        
        \newpage
        \begin{figure*}[htb]
          \begin{center}
            \centerline{\includegraphics[width=0.98\textwidth]{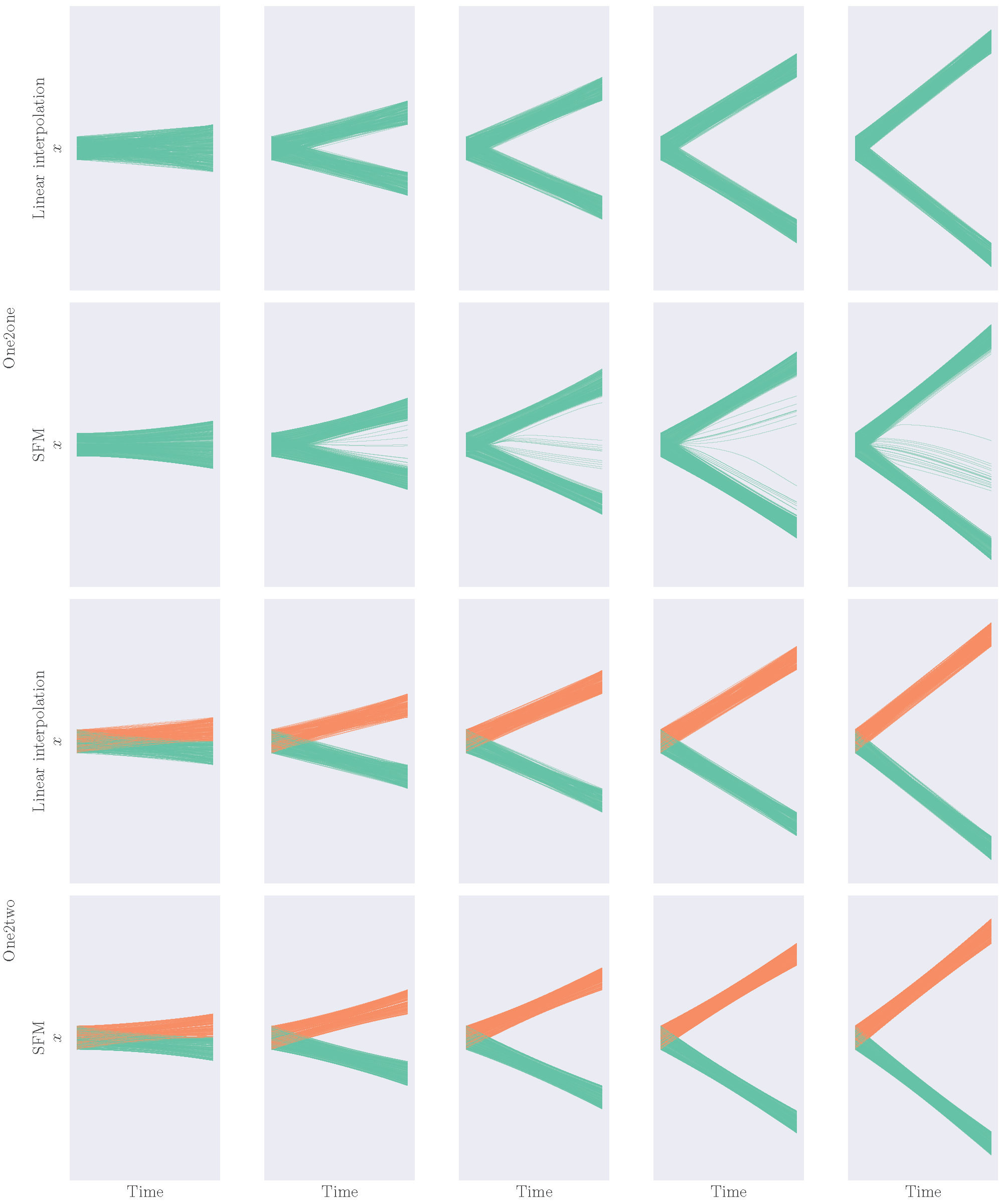}}
            % \vskip -0.1in
            \caption{Linear interpolation of the source and target data (sampled from the unimodal and bimodal uniform distributions), and the trajectories trained from two kinds of SFM, i.e., one2one ($1$ switching signal, equivalently CFM) and one2two ($2$ switching signals), via enlarging the distance of the two modes of the target distribution (from the left column to the right column). }
            \label{fig_2uniform_shift}
          \end{center}
          % \vskip -0.2in
        \end{figure*}
        
        \newpage
        \begin{table}[htb]
          \caption{Trajectories of the learned CFM and SFM via varying the number of function evaluations NFE on the example of infinite number of singular points in the Propositioin~\ref{thm:infinite}.}
          \label{tab:infinite}
          \vspace*{1em}
          \resizebox{\linewidth}{!}{        \centering
          \begin{tabular}{m{1.5cm}<{\centering}|m{5cm}<{\centering}m{5cm}<{\centering}m{5cm}<{\centering}m{5cm}<{\centering}m{5cm}<{\centering}}
            \toprule
            \midrule
            Method  & NFE=1 & NFE=2 & NFE=5 &  NFE=Adaptive\\
            \midrule
            I-CFM $~~$or$~~$ I-SFM (one2one) & \includegraphics[width=0.3\textwidth]{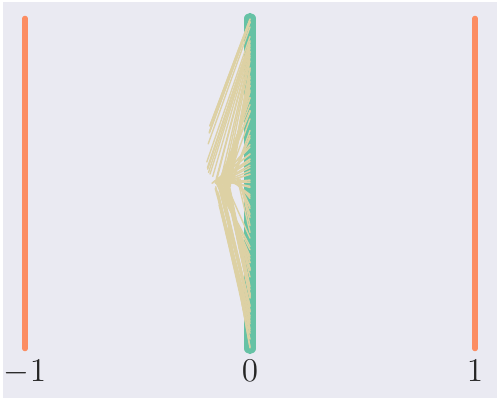} & \includegraphics[width=0.3\textwidth]{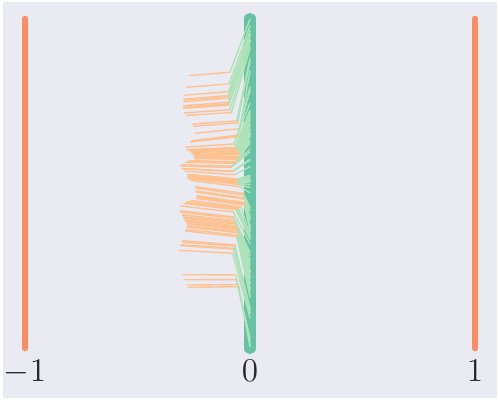} & \includegraphics[width=0.3\textwidth]{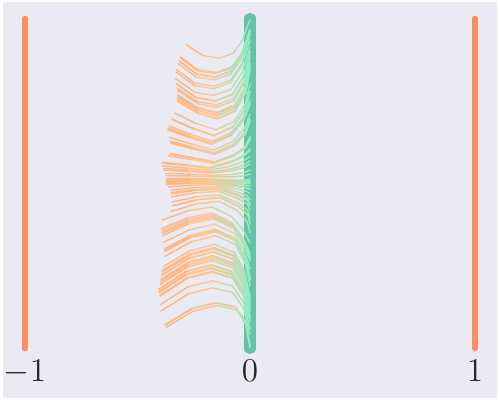} & \includegraphics[width=0.3\textwidth]{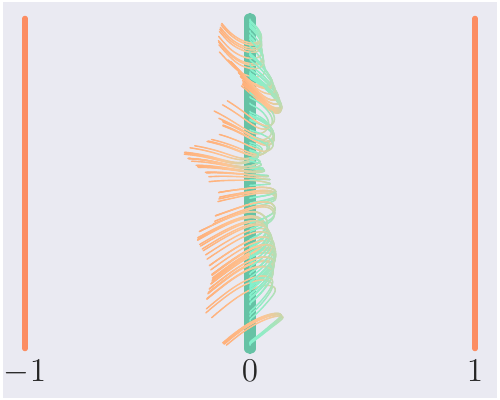}\\
            I-SFM (one2two) & \includegraphics[width=0.3\textwidth]{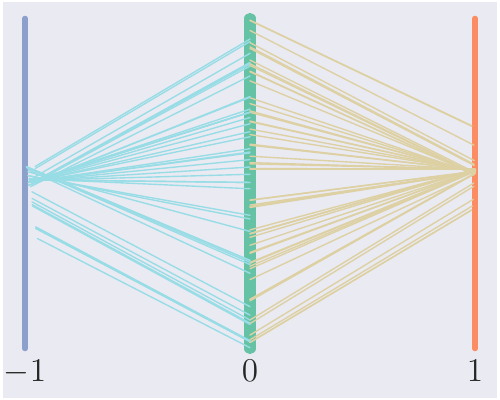} & \includegraphics[width=0.3\textwidth]{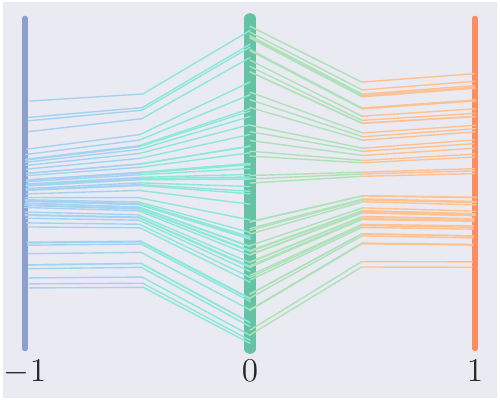} & \includegraphics[width=0.3\textwidth]{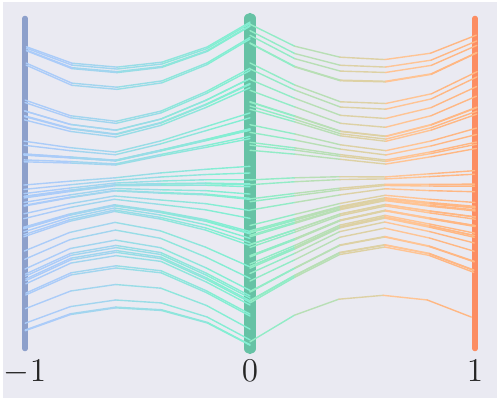} & \includegraphics[width=0.3\textwidth]{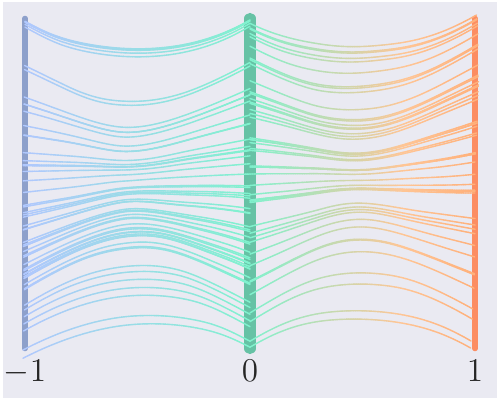}\\
            I-SFM (two2two) & \includegraphics[width=0.3\textwidth]{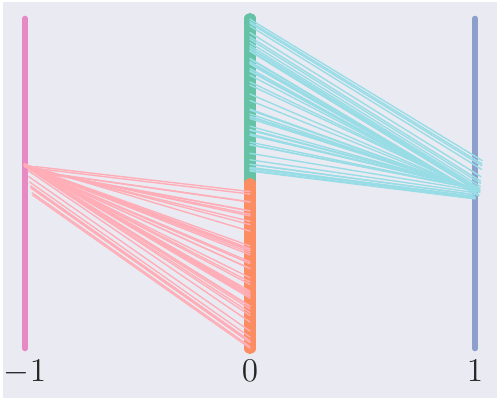} & \includegraphics[width=0.3\textwidth]{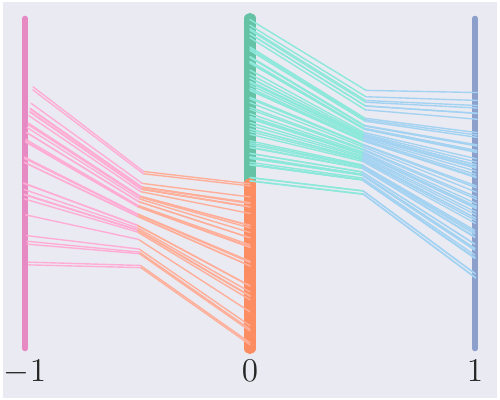} & \includegraphics[width=0.3\textwidth]{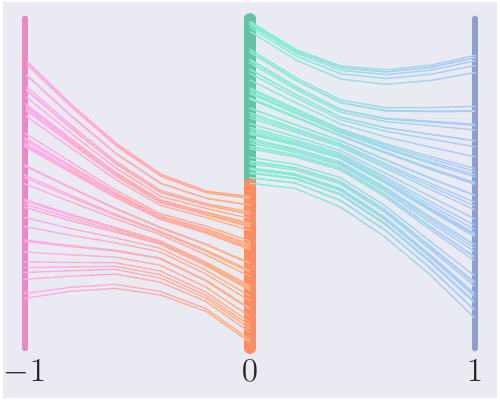} & \includegraphics[width=0.3\textwidth]{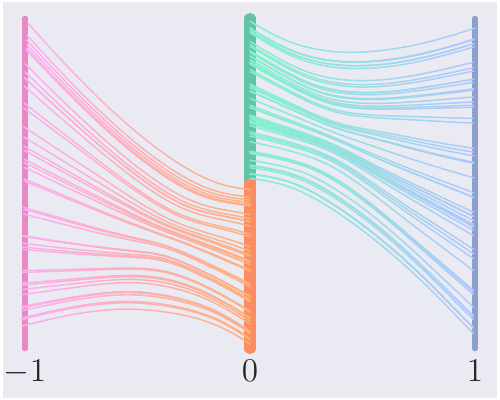}\\
            OT-CFM or OT-SFM (one2one) & \includegraphics[width=0.3\textwidth]{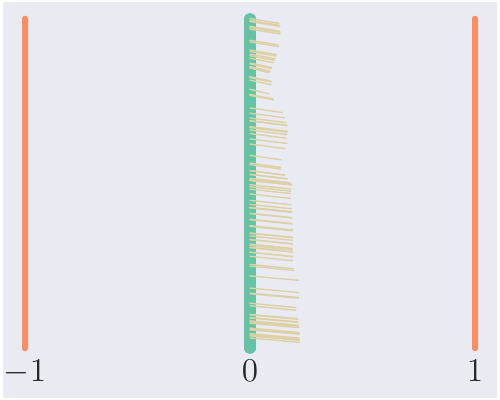} & \includegraphics[width=0.3\textwidth]{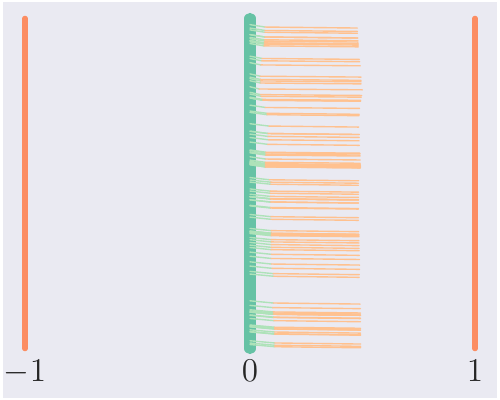} & \includegraphics[width=0.3\textwidth]{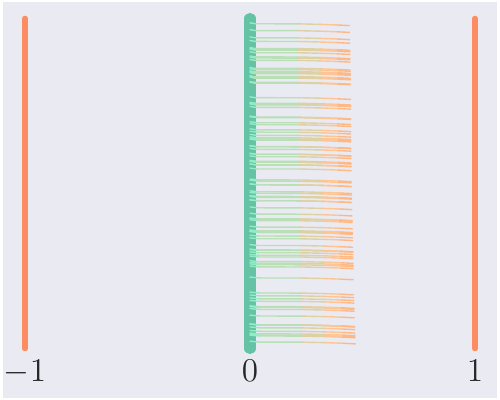} & \includegraphics[width=0.3\textwidth]{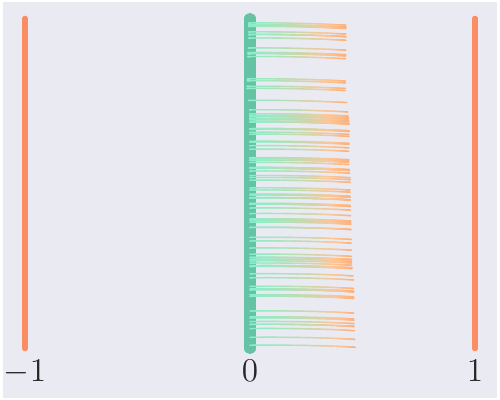}\\
            OT-SFM (one2two) & \includegraphics[width=0.3\textwidth]{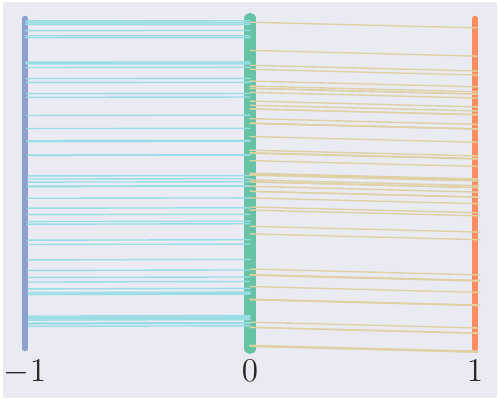} & \includegraphics[width=0.3\textwidth]{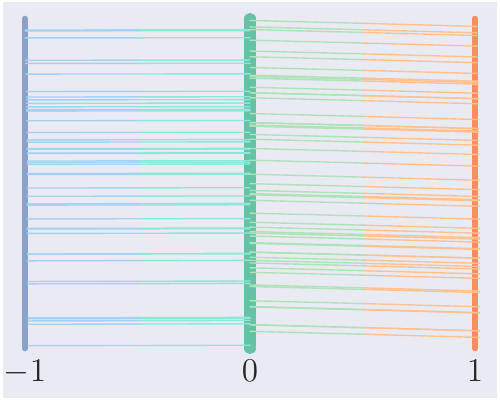} & \includegraphics[width=0.3\textwidth]{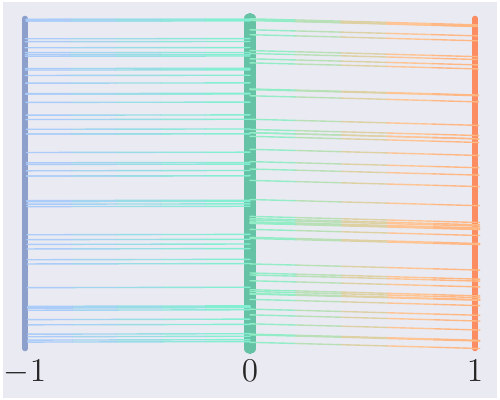} & \includegraphics[width=0.3\textwidth]{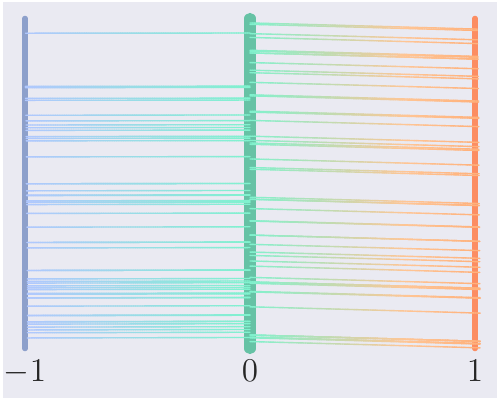}\\
            OT-SFM (two2two) & \includegraphics[width=0.3\textwidth]{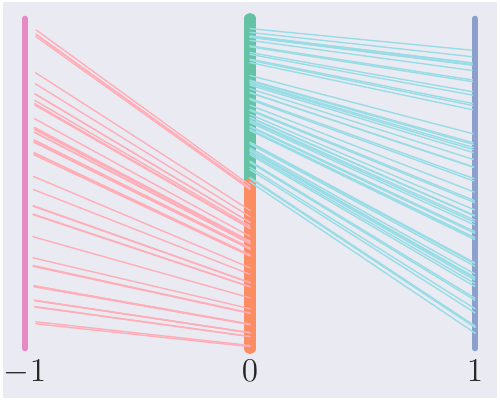} & \includegraphics[width=0.3\textwidth]{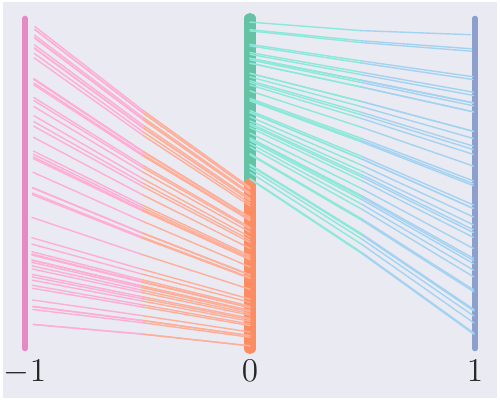} & \includegraphics[width=0.3\textwidth]{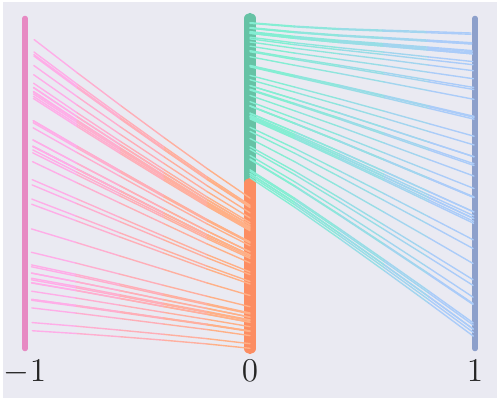} & \includegraphics[width=0.3\textwidth]{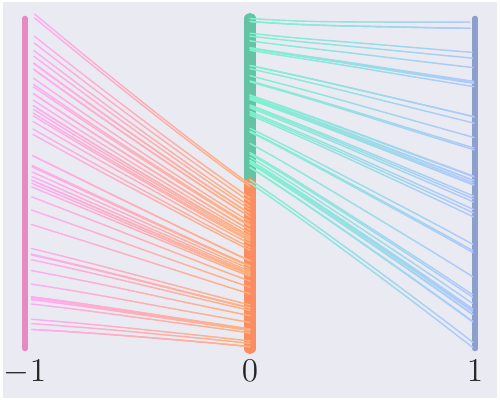}\\
            \bottomrule
            \toprule
          \end{tabular}
          }
        \end{table}
        
        \newpage
        \begin{table}[htb]
          \caption{Generated target samples from the trained CFM and the SFM at $t$ = 1 for different NFE on transporting the $2$-d Gaussian mixture (8 modes) to the checkerboard.}
          \vspace*{1em}
          \label{tab:gaussian2checkerboard_nfe}
          \resizebox{\linewidth}{!}{        \centering
          \begin{tabular}{m{3.5cm}<{\centering}|m{5cm}<{\centering}m{5cm}<{\centering}m{5cm}<{\centering}m{5cm}<{\centering}m{5cm}<{\centering}}
            \toprule
            \midrule
            Method & I-CFM or I-SFM (one2one) & I-SFM (eight2eight) & OT-CFM or OT-SFM (one2one) & OT-SFM (eight2eight)\\
            \midrule
            Source data & \includegraphics[width=0.3\textwidth]{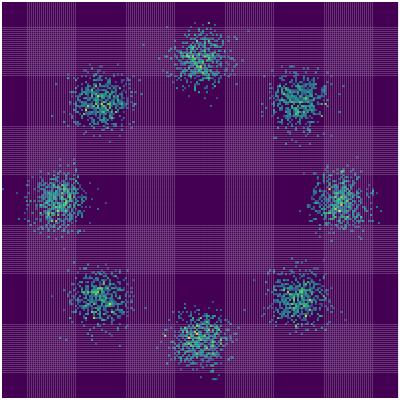} & \includegraphics[width=0.3\textwidth]{Figures_append/gaussian2checkerboard_png/fig_eight2eight_1_source.png} & \includegraphics[width=0.3\textwidth]{Figures_append/gaussian2checkerboard_png/fig_eight2eight_1_source.png} & \includegraphics[width=0.3\textwidth]{Figures_append/gaussian2checkerboard_png/fig_eight2eight_1_source.png}\\
            NFE=1 & \includegraphics[width=0.3\textwidth]{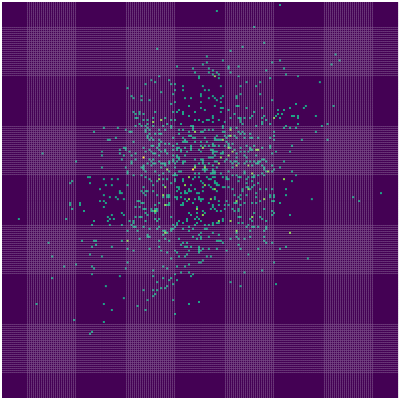} & \includegraphics[width=0.3\textwidth]{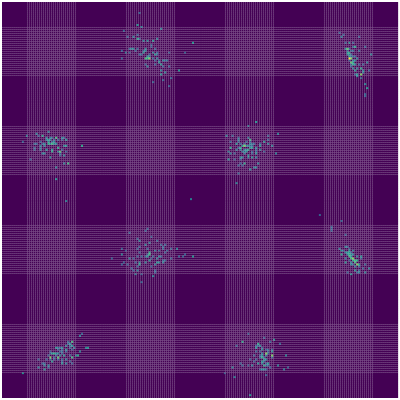} & \includegraphics[width=0.3\textwidth]{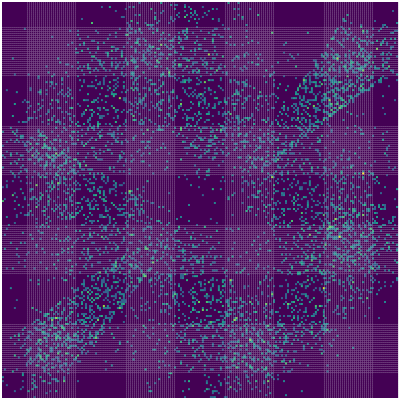} & \includegraphics[width=0.3\textwidth]{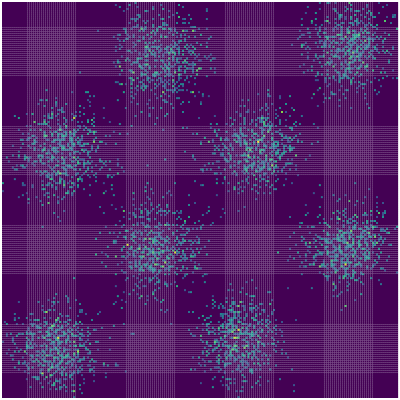}\\
            NFE=2 & \includegraphics[width=0.3\textwidth]{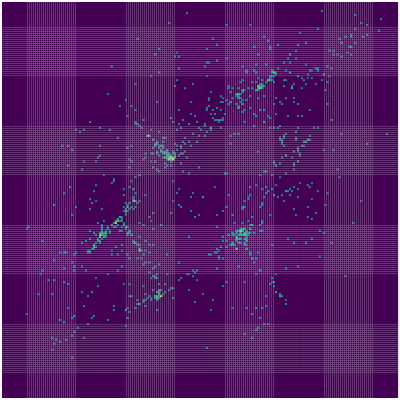} & \includegraphics[width=0.3\textwidth]{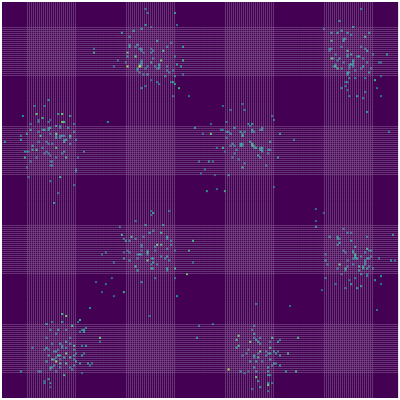} & \includegraphics[width=0.3\textwidth]{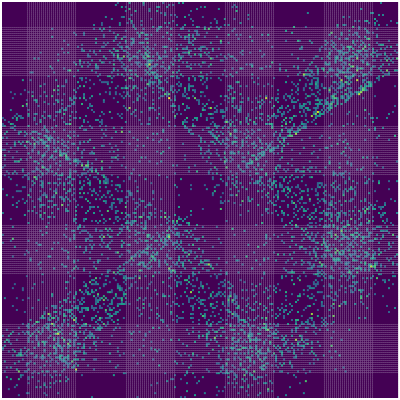} & \includegraphics[width=0.3\textwidth]{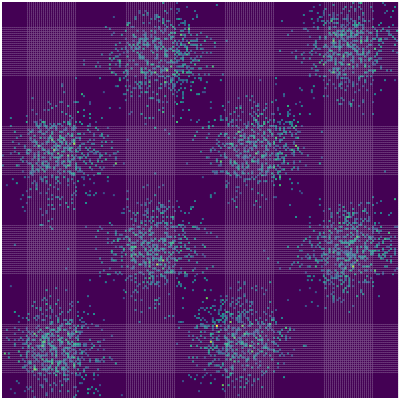}\\
            NFE=5 & \includegraphics[width=0.3\textwidth]{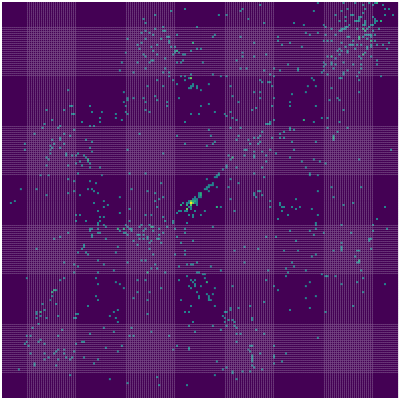} & \includegraphics[width=0.3\textwidth]{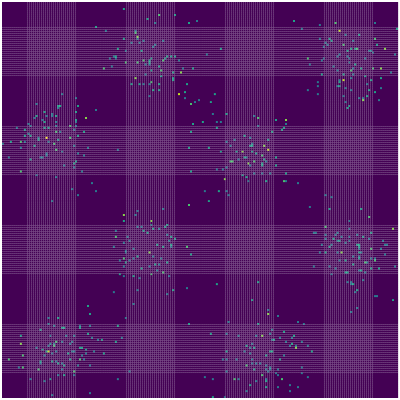} & \includegraphics[width=0.3\textwidth]{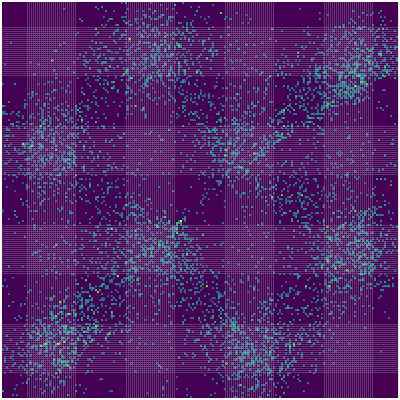} & \includegraphics[width=0.3\textwidth]{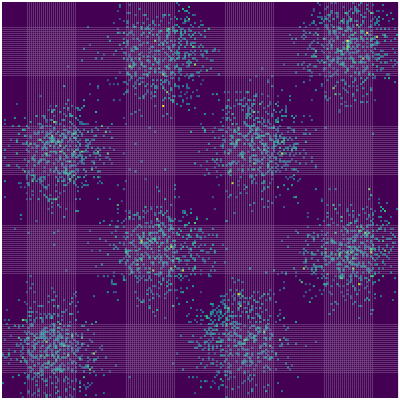}\\
            NFE=Adaptive & \includegraphics[width=0.3\textwidth]{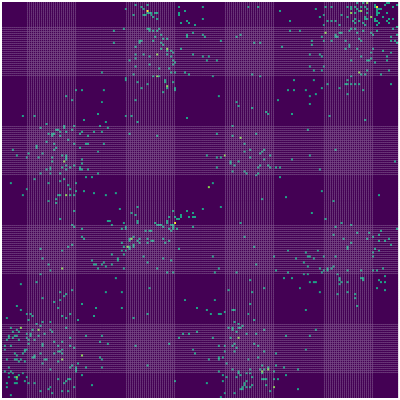} & \includegraphics[width=0.3\textwidth]{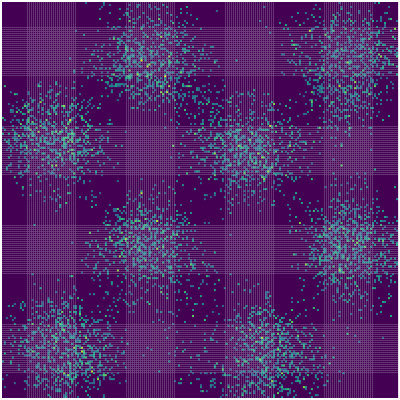} & \includegraphics[width=0.3\textwidth]{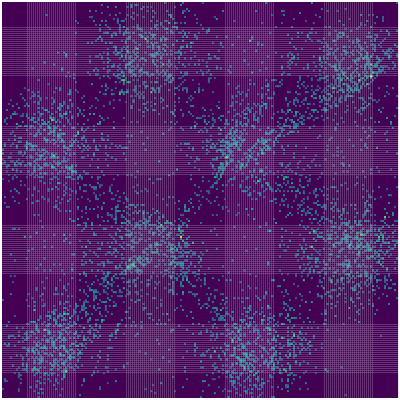} & \includegraphics[width=0.3\textwidth]{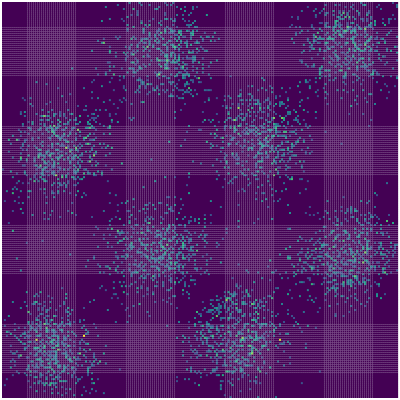}\\
            Target data & \includegraphics[width=0.3\textwidth]{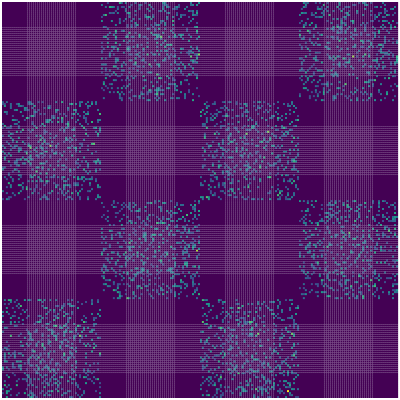} & \includegraphics[width=0.3\textwidth]{Figures_append/gaussian2checkerboard_png/fig_eight2eight_1_target.png} & \includegraphics[width=0.3\textwidth]{Figures_append/gaussian2checkerboard_png/fig_eight2eight_1_target.png} & \includegraphics[width=0.3\textwidth]{Figures_append/gaussian2checkerboard_png/fig_eight2eight_1_target.png}\\
            \bottomrule
            \toprule
          \end{tabular}
          }
        \end{table}
        
        \newpage
        \begin{table}[htb]
          \caption{Generated target samples from the trained CFM and the SFM at $t$ = 1 for different training iterations on transporting the $2$-d Gaussian mixture (8 modes) to the checkerboard.}
          \vspace*{1em}
          \label{tab:gaussian2checkerboard_itr}
          \resizebox{\linewidth}{!}{        \centering
          \begin{tabular}{m{3.5cm}<{\centering}|m{5cm}<{\centering}m{5cm}<{\centering}m{5cm}<{\centering}m{5cm}<{\centering}m{5cm}<{\centering}}
            \toprule
            \midrule
            Method & I-CFM or I-SFM (one2one) & I-SFM (eight2eight) & OT-CFM or OT-SFM (one2one) & OT-SFM (eight2eight)\\
            \midrule
            Iteration=0.5k & \includegraphics[width=0.3\textwidth]{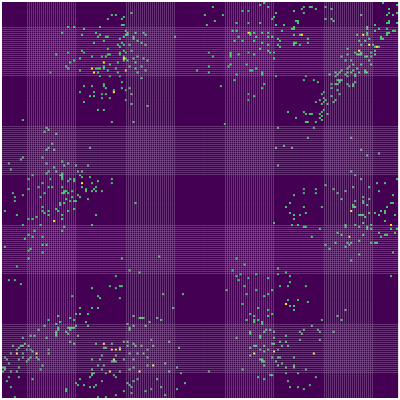} & \includegraphics[width=0.3\textwidth]{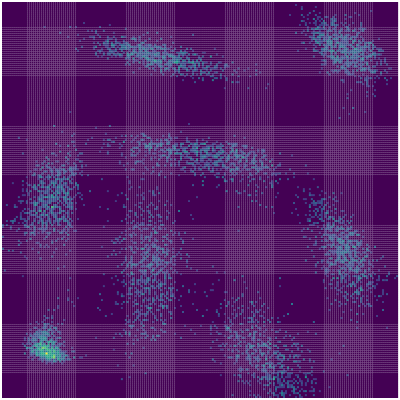} & \includegraphics[width=0.3\textwidth]{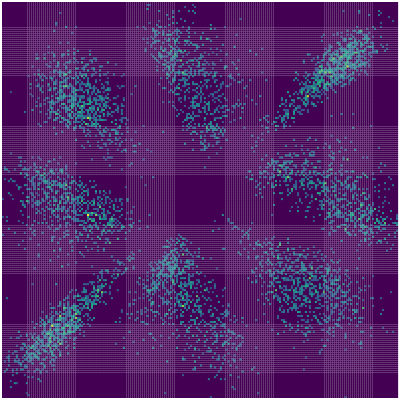} & \includegraphics[width=0.3\textwidth]{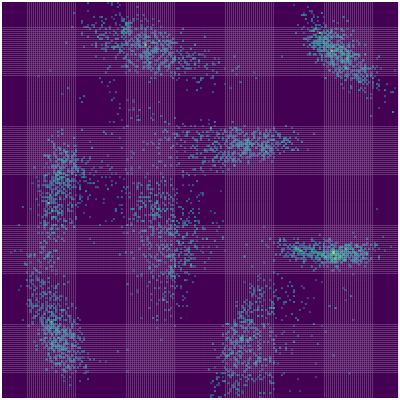}\\
            Iteration=1k & \includegraphics[width=0.3\textwidth]{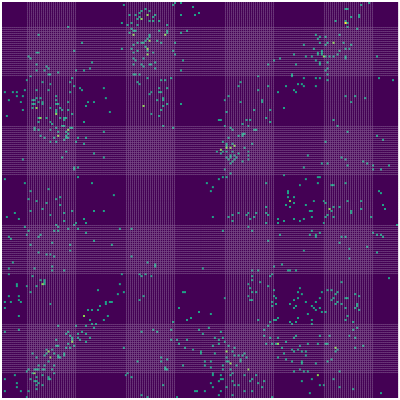} & \includegraphics[width=0.3\textwidth]{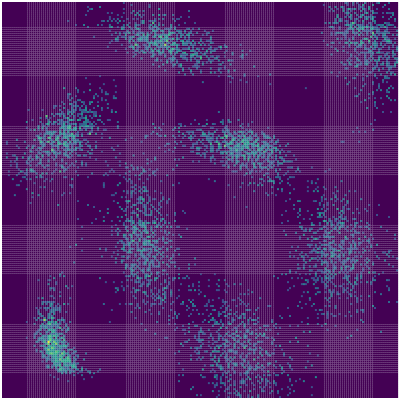} & \includegraphics[width=0.3\textwidth]{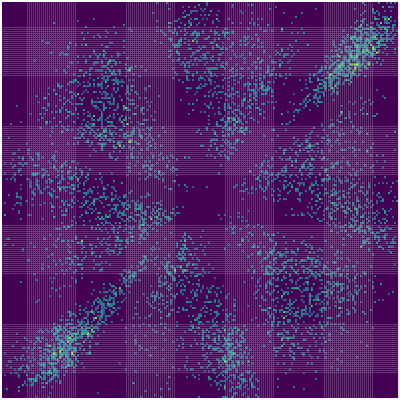} & \includegraphics[width=0.3\textwidth]{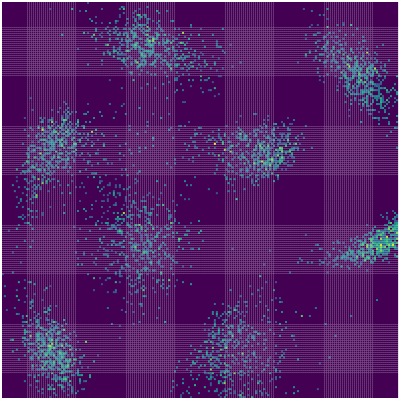}\\
            Iteration=2.5k & \includegraphics[width=0.3\textwidth]{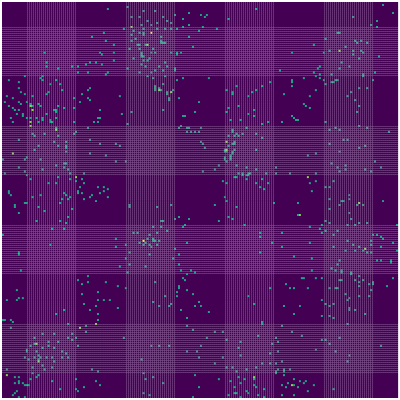} & \includegraphics[width=0.3\textwidth]{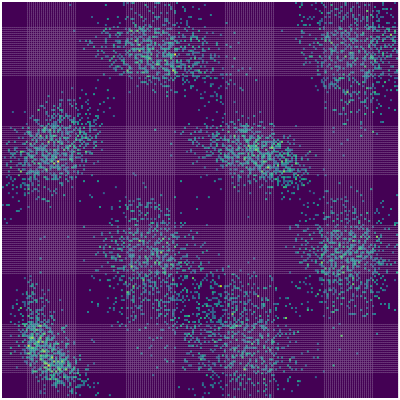} & \includegraphics[width=0.3\textwidth]{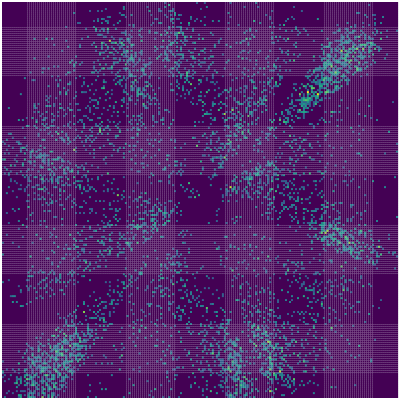} & \includegraphics[width=0.3\textwidth]{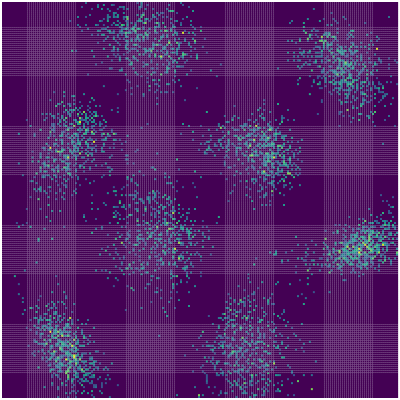}\\
            Iteration=5k & \includegraphics[width=0.3\textwidth]{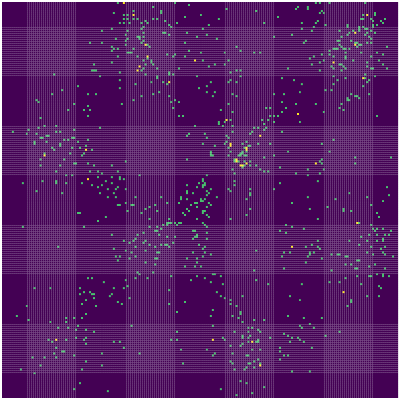} & \includegraphics[width=0.3\textwidth]{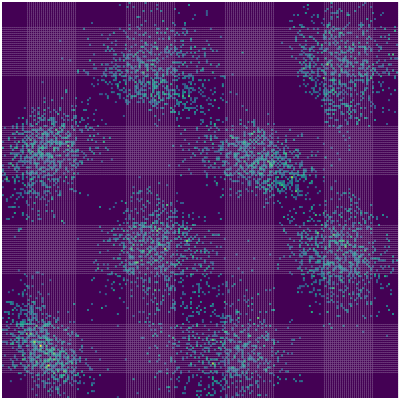} & \includegraphics[width=0.3\textwidth]{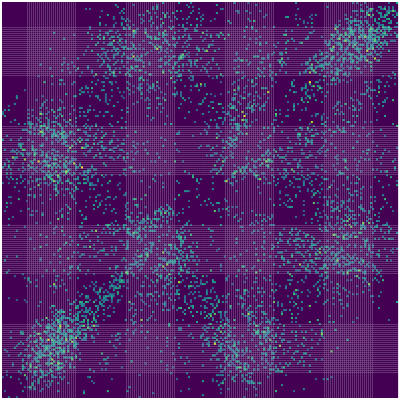} & \includegraphics[width=0.3\textwidth]{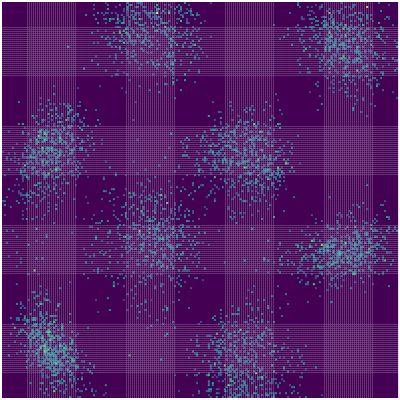}\\
            Iteration=10k & \includegraphics[width=0.3\textwidth]{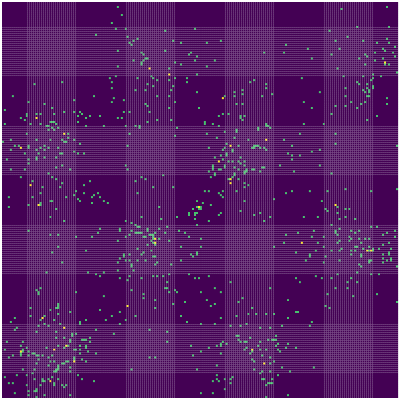} & \includegraphics[width=0.3\textwidth]{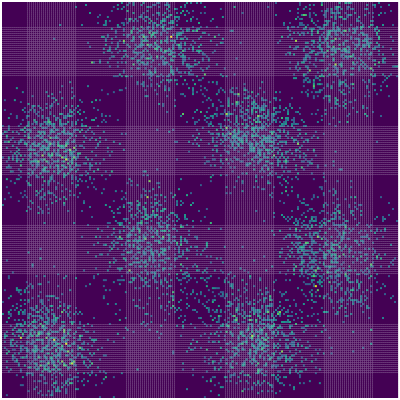} & \includegraphics[width=0.3\textwidth]{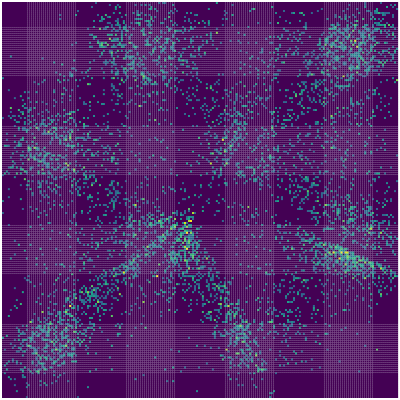} & \includegraphics[width=0.3\textwidth]{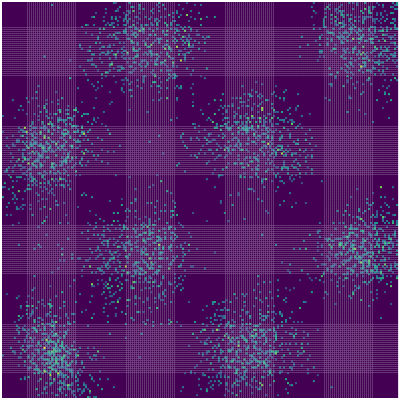}\\
            Iteration=20k & \includegraphics[width=0.3\textwidth]{Figures_append/gaussian2checkerboard_png/fig_one2one_0_100_itr_40.png} & \includegraphics[width=0.3\textwidth]{Figures_append/gaussian2checkerboard_png/fig_eight2eight_0_100_itr_40.png} & \includegraphics[width=0.3\textwidth]{Figures_append/gaussian2checkerboard_png/fig_one2one_1_100_itr_40.png} & \includegraphics[width=0.3\textwidth]{Figures_append/gaussian2checkerboard_png/fig_eight2eight_1_100_itr_40.png}\\
            \bottomrule
            \toprule
          \end{tabular}
          }
        \end{table}
        
        \newpage
        \begin{figure*}[htb]
          \begin{center}
            \centerline{\includegraphics[width=0.66\textwidth]{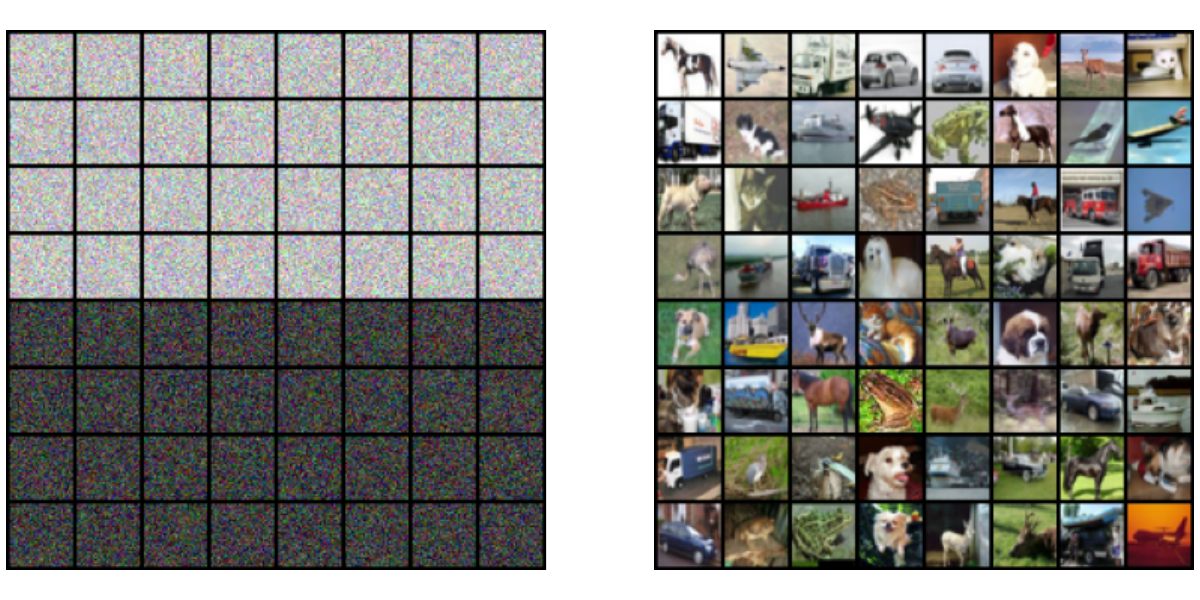}}
            % \vskip -0.1in
            \caption{True samples from the source distribution (left, Gaussian mixture) and the target distribution (right, CIFAR-10 dataset).}
            \label{fig_true_samples}
          \end{center}
          % \vskip -0.2in
        \end{figure*}
        
        \newpage
        \begin{table}[htb]
          \caption{Generated target samples from the trained I-CFM and the I-SFM at $t$ = 1 for different NFE on transporting the Gaussian mixture ($2$ modes) to the CIFAR-10 image dataset.}
          \vspace*{1em}
          \label{tab:cifar10_samples_I}
          \resizebox{\linewidth}{!}{        \centering
          \begin{tabular}{m{2.0cm}<{\centering}|m{5cm}<{\centering}m{5cm}<{\centering}m{5cm}<{\centering}m{5cm}<{\centering}m{5cm}<{\centering}}
            \toprule
            \midrule
            Method & NFE=10 & NFE=20 & NFE=40 & NFE=Adaptive\\
            \midrule
            I-SFM (one2one) & \includegraphics[width=0.3\textwidth]{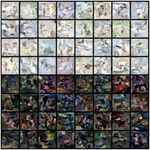} & \includegraphics[width=0.3\textwidth]{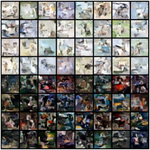} & \includegraphics[width=0.3\textwidth]{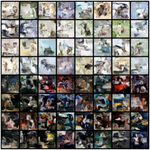} & \includegraphics[width=0.3\textwidth]{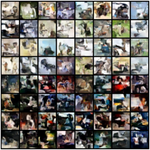}\\
            I-SFM (one2ten) & \includegraphics[width=0.3\textwidth]{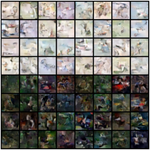} & \includegraphics[width=0.3\textwidth]{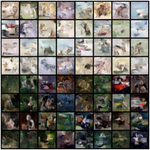} & \includegraphics[width=0.3\textwidth]{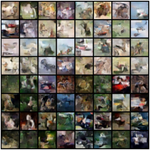} & \includegraphics[width=0.3\textwidth]{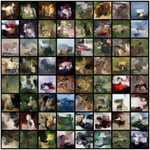}\\
            I-SFM (two2one) & \includegraphics[width=0.3\textwidth]{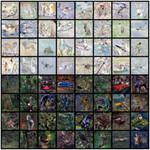} & \includegraphics[width=0.3\textwidth]{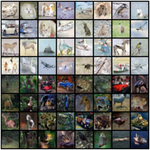} & \includegraphics[width=0.3\textwidth]{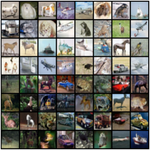} & \includegraphics[width=0.3\textwidth]{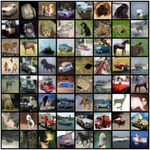}\\
            I-SFM (two2ten, mixed) & \includegraphics[width=0.3\textwidth]{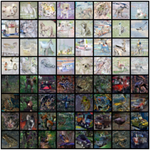} & \includegraphics[width=0.3\textwidth]{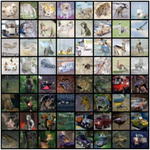} & \includegraphics[width=0.3\textwidth]{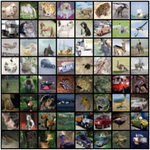} & \includegraphics[width=0.3\textwidth]{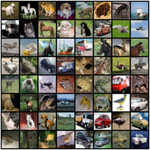}\\
            I-SFM (two2ten, extremal) & \includegraphics[width=0.3\textwidth]{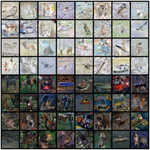} & \includegraphics[width=0.3\textwidth]{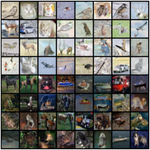} & \includegraphics[width=0.3\textwidth]{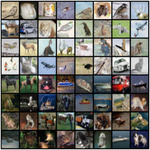} & \includegraphics[width=0.3\textwidth]{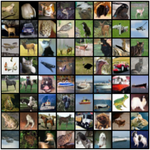}\\
            \bottomrule
            \toprule
          \end{tabular}
          }
        \end{table}
        
        \newpage
        \begin{table}[htb]
          \caption{Generated target samples from the trained OT-CFM and the OT-SFM at $t$ = 1 for different NFE on transporting the Gaussian mixture ($2$ modes) to the CIFAR-10 image dataset.}
          \vspace*{1em}
          \label{tab:cifar10_samples_OT}
          \resizebox{\linewidth}{!}{        \centering
          \begin{tabular}{m{2.0cm}<{\centering}|m{5cm}<{\centering}m{5cm}<{\centering}m{5cm}<{\centering}m{5cm}<{\centering}m{5cm}<{\centering}}
            \toprule
            \midrule
            Method & NFE=10 & NFE=20 & NFE=40 & NFE=Adaptive\\
            \midrule
            OT-SFM (one2one) & \includegraphics[width=0.3\textwidth]{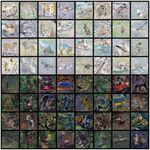} & \includegraphics[width=0.3\textwidth]{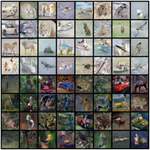} & \includegraphics[width=0.3\textwidth]{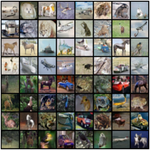} & \includegraphics[width=0.3\textwidth]{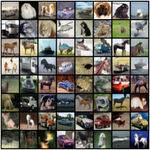}\\
            OT-SFM (one2ten) & \includegraphics[width=0.3\textwidth]{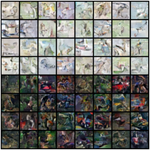} & \includegraphics[width=0.3\textwidth]{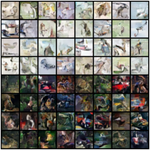} & \includegraphics[width=0.3\textwidth]{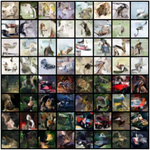} & \includegraphics[width=0.3\textwidth]{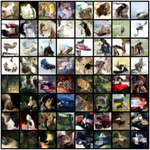}\\
            OT-SFM (two2one) & \includegraphics[width=0.3\textwidth]{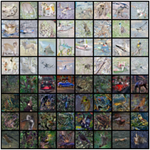} & \includegraphics[width=0.3\textwidth]{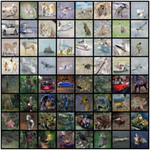} & \includegraphics[width=0.3\textwidth]{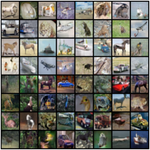} & \includegraphics[width=0.3\textwidth]{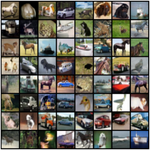}\\
            OT-SFM (two2ten, mixed) & \includegraphics[width=0.3\textwidth]{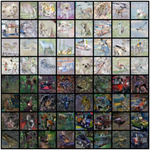} & \includegraphics[width=0.3\textwidth]{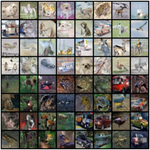} & \includegraphics[width=0.3\textwidth]{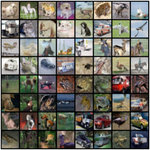} & \includegraphics[width=0.3\textwidth]{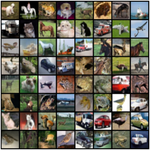}\\
            OT-SFM (two2ten, extremal) & \includegraphics[width=0.3\textwidth]{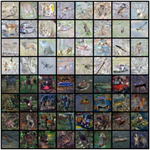} & \includegraphics[width=0.3\textwidth]{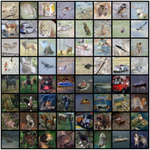} & \includegraphics[width=0.3\textwidth]{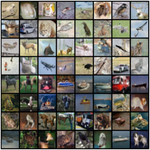} & \includegraphics[width=0.3\textwidth]{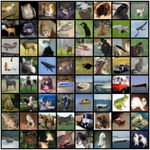}\\
            \bottomrule
            \toprule
          \end{tabular}
          }
        \end{table}
        
        \newpage
        \section{Experimental details}
        \label{sec:appd_experiment}
        It should be noted that our experimental implementations are heavily adapted from the open source code \href{https://github.com/atong01/conditional-flow-matching}{https://github.com/atong01/conditional-flow-matching} provided in \citet{tong2023conditional,tong2023improving}. All our experiments were conducted on a single 11GB GTX 1080 Ti GPU.
        
        For the synthetic experiments, we provide the detailed setup for different datasets (see Table~\ref{tab:synthetic_setup}).
        \begin{table}[htb]
          \caption{The detailed setup for different synthetic datasets.}
          \vspace*{1em}
          \label{tab:synthetic_setup}
          \resizebox{\linewidth}{!}{        \centering
          \begin{tabular}{c|c|c|c|c|c}
            \toprule
            \midrule
            \multicolumn{2}{c|}{Setup}  & Dataset $1$ & Dataset $2$ & Dataset $3$  & Dataset $4$\\
            \midrule
            \multirow{3}{*}{Data} & Dimension & $1$ & $1$ & $2$ & $2$ \\
            & $q_0$ & Gaussian mixture (2 modes) & Uniform distribution & $\mathcal{H}^1$ &  Gaussian mixture (8 modes)\\
            & $q_1$ & Gaussian mixture (2 modes) & Uniform mixture (2 modes) & $(1/2)\mathcal{H}^1$ & Checkerboard (8 squares)\\
            \midrule
            \multirow{5}{*}{Structure} & Hidden layer & $2$ & $2$ & $2$ & $2$ \\
            & Hidden neuron & $64$ & $64$ & $64$ & $64$ \\
            & Activation &  SELU   &  SELU  &  SELU &  SELU \\
            & Time input & True & True & True & True \\
            & Switching signal input & True (SFM) / False (CFM) & True (SFM) / False (CFM) & True (SFM) / False (CFM) & True (SFM) / False (CFM)\\
            \midrule
            \multirow{4}{*}{Training} & Batch size & $256$ & $256$ & $256$ & $256$\\
            & Iteration & $20k$ & $20k$ & $10k$ & $20k$ \\
            & Optimizer & Adam & Adam & Adam & Adam \\
            & Learning rate & $10^{-3}$ & $10^{-3}$ & $10^{-3}$ & $10^{-3}$ \\
            \midrule
            \multirow{1}{*}{Inference} & ODE solver & Euler/dopri5 & Euler/dopri5 & Euler/dopri5 & Euler/dopri5\\
            \midrule
            \multirow{1}{*}{List} &  Figures or tables & Figs.~\ref{fig_2gaussians}~\&~\ref{fig_2gaussians_joint_clustering}  & Figs.~\ref{fig_2uniform_shrink}~\&~\ref{fig_2uniform_shift} & Fig.~\ref{fig_Infinite_trained}~\&~Tab.~\ref{tab:infinite} & Tabs.~\ref{tab:gaussian2checkerboard_nfe}~\&~\ref{tab:gaussian2checkerboard_itr}\\
            \bottomrule
            \toprule
          \end{tabular}
          }
        \end{table}
        
        For the CIFAR-10 experiments, all methods used in our work were trained with the same setup as reported in \citet{tong2023conditional,tong2023improving}, only differences in the source distribution, the choice of probability path, and the switching mechanism for our proposed SFM and its variants. More precisely, we apply the UNet with the following structures and training configurations:
        \begin{itemize}
          \item Adam optimizer with $\beta_1 = 0.9$, $\beta_2 = 0.999$, $\epsilon=10^{-8}$, and no weight decay,
          \item channels $=128$,
          \item depth $=2$ ,
          \item channels multiple $=[1,2,2,2]$,
          \item heads $=4$,
          \item heads channels $=64$,
          \item attention resolution $=16$,
          \item dropout $=0.1$,
          \item batch size per gpu $= 128$, gpus $=1$,
          \item learning rate $=2 \times 10^{-4}$,
          \item gradient clipping with norm $=1.0$,
          \item exponential moving average weights with decay $=0.9999$.
        \end{itemize}
        For sampling, we use the traditional Euler integration or the adaptive step size solver \texttt{dopri5} from the torchdiffeq package. We use a batch size of $500$ for $100$ total batches for computing the Fr{\'e}chet Inception Distance (FID) through the TensorFlow-GAN library \url{https://github.com/tensorflow/gan}.

        Here, we provide the specific setup for the initial distribution of the CIFAR10 experiments, where $x_0$ has a $50\%$ chance of being $x_0 = torch.randn(3, 32, 32) / 4 + 0.5$ and a $50\%$ chance of being $x_0 = torch.randn(3, 32, 32) / 4 - 0.5$. The coupling matrices $P$ are set as 
        $$ \text{two2one}: \begin{pmatrix} 0.5 \\ 0.5 \end{pmatrix},$$ $$\text{one2ten}: \begin{pmatrix} 0.1 & 0.1 & \cdots & 0.1 \end{pmatrix},$$ 
        $$\text{two2ten, mixed}: \begin{pmatrix} 0.05 & 0.05 & \cdots & 0.05 \\ 0.05 & 0.05 & \cdots & 0.05 \end{pmatrix},$$ $$\text{two2ten, extremal}: \begin{pmatrix} 0.1 & 0.1 & 0.1 & 0.1 & 0.1 & 0 & 0 & 0 & 0 & 0 \\ 0 & 0 & 0 & 0 & 0 & 0.1 & 0.1 & 0.1 & 0.1 & 0.1 \end{pmatrix}.$$
        
      \end{document}